\def\eqref#1{equation~\ref{#1}}
\def\floor#1{\lfloor #1 \rfloor}
\def\1{\bm{1}}
\def\eps{{\epsilon}}
\def\vmu{{\bm{\mu}}}
\def\vsigma{{\bm{\sigma}}}
\def\vxi{{\bm{\xi}}}
\def\va{{\bm{a}}}
\def\vb{{\bm{b}}}
\def\vu{{\bm{u}}}
\def\vv{{\bm{v}}}
\def\vx{{\bm{x}}}
\def\vy{{\bm{y}}}
\def\vz{{\bm{z}}}
\def\mI{{\bm{I}}}
\def\mSigma{{\bm{\Sigma}}}
\DeclareMathAlphabet{\mathsfit}{\encodingdefault}{\sfdefault}{m}{sl}
\SetMathAlphabet{\mathsfit}{bold}{\encodingdefault}{\sfdefault}{bx}{n}
\newcommand{\R}{\mathbb{R}}
\newcommand{\Z}{\mathbb{Z}}
\DeclareMathOperator{\vmf}{vMF}
\newcommand{\WN}[0]{\mc{WN}}
\newcommand{\RN}[0]{\mc{RN}}
\DeclareMathOperator{\PT}{PT}
\DeclareMathOperator{\proj}{proj}
\newcommand{\mc}[1]{\mathcal{#1}}
\newcommand{\B}[1]{\mathbf{#1}}
\newcommand{\mt}[0]{\mathfrak{g}}
\newcommand{\tang}[1]{\mathcal{T}_{#1}}
\newcommand{\xprod}[2]{\left\langle #1 \right\rangle_{#2}}
\newcommand{\norm}[1]{\left\lVert #1 \right\rVert}
\newcommand{\scprod}[1]{\xprod{#1}{2}}
\newcommand{\Ls}[0]{\mathcal{L}}
\newcommand{\lprod}[1]{\xprod{#1}{\Ls{}}}
\newcommand{\expmap}[1]{\exp_{\B{#1}}}
\newcommand{\hyp}[0]{\mathbb{H}}
\newcommand{\uni}[0]{\mathbb{U}}
\newcommand{\poi}[0]{\mathbb{P}}
\newcommand{\sph}[0]{\mathbb{S}}
\newcommand{\spr}[0]{\mathbb{D}}
\newcommand{\euc}[0]{\mathbb{E}}
\newcommand{\eye}[0]{\mI}
\newcommand{\KL}[2]{D_{\mathrm{KL}}\left( #1\,||\,#2 \right)}
\newcommand{\kplus}[0]{\oplus_K}
\newcommand{\kminus}[0]{\ominus_K}
\newcommand{\ktimes}[0]{\otimes_K}
\newcommand{\gyr}[1]{\mathrm{gyr}[#1]}
\newcommand{\arctanh}[0]{\tanh^{-1}}
\newcommand{\arccosh}[0]{\cosh^{-1}}
\renewcommand{\arccos}[0]{\cos^{-1}}
\renewcommand{\arctan}[0]{\tan^{-1}}
\newcommand{\at}[1]{\left.#1\right|}
\newtheorem{theorem}{Theorem}[section]
\newtheorem{remark}[theorem]{Remark}
\newtheorem{lemma}[theorem]{Lemma}
\title{Mixed-curvature Variational Autoencoders}
\author{%
  Ondrej Skopek,\hspace{-.2em}\textsuperscript{1} Octavian-Eugen Ganea\textsuperscript{1,2} \& Gary B\'ecigneul\textsuperscript{1,2}\\
  \textsuperscript{1} Department of Computer Science, ETH Z\"urich\\
  \textsuperscript{2} Computer Science and Artificial Intelligence Laboratory, Massachusetts Institute of Technology\\
  \href{mailto:oskopek@oskopek.com}{\texttt{oskopek@oskopek.com}},\, \href{mailto:oct@mit.edu}{\texttt{oct@mit.edu}},\, \href{mailto:gary.becigneul@inf.ethz.ch}{\texttt{gary.becigneul@inf.ethz.ch}}%
}
\begin{document}

\maketitle

\begin{abstract}
Euclidean geometry has historically been the typical ``workhorse'' for machine learning applications due to its power and simplicity.
However, it has recently been shown that geometric spaces with constant non-zero curvature improve representations and performance on a variety of data types and downstream tasks. Consequently, generative models like Variational Autoencoders (VAEs) have been successfully generalized to elliptical and hyperbolic latent spaces.
While these approaches work well on data with particular kinds of biases e.g.~tree-like data for a hyperbolic VAE, there exists no generic approach unifying and leveraging all three models.
We develop a Mixed-curvature Variational Autoencoder, an efficient way to train a VAE whose latent space is a product of constant curvature Riemannian manifolds, where the per-component curvature is fixed or learnable.
This generalizes the Euclidean VAE to curved latent spaces and recovers it when curvatures of all latent space components go to 0.
\end{abstract}

\section{Introduction}\label{sec:int}

Generative models, a growing area of unsupervised learning, aim to model the data distribution $p(\vx)$ over data points $\vx$ from a space $\mc{X}$, which is usually a high-dimensional Euclidean space $\R^n$. This has desirable benefits like a naturally definable inner-product,
vector addition, or a closed-form distance function.
Yet, many types of data have a strongly non-Euclidean latent structure \citep{noneucldata},
e.g.~the set of human-interpretable images.
They are usually thought to live on a ``natural image manifold''
\citep{natural_manifold}, a continuous lower-dimensional subset of the space in which they are represented. By moving along the manifold, one can continuously change the content and appearance of interpretable images.
As noted in \citet{hypemb},
changing the geometry of the underlying latent space enables better representations of specific data types compared to Euclidean spaces of any dimensions, e.g.~tree structures and scale-free networks.

Motivated by these observations, a range of recent methods learn representations
in different spaces of constant curvatures: spherical or elliptical \citep{sphemb},
hyperbolic \citep{hypemb,sala_hyp,hypglove}
and even in products of these spaces \citep{products,ccgcn}.
Using a combination of different constant curvature spaces, \citet{products} aim to match the underlying geometry of the data better.
However, an open question remains: how to choose the dimensionality and curvatures of each of the partial spaces?

A popular approach to generative modeling is the Variational Autoencoder \citep{vae}. VAEs provide a way to sidestep
the intractability of marginalizing a joint probability model
of the input and latent space $p(\vx, \vz)$,
while allowing for a prior $p(\vz)$ on the latent space.
Recently, 
variants of the VAE have been introduced for spherical \citep{svae, svae2}
and hyperbolic \citep{hvae, phg} latent spaces.

Our approach, the Mixed-curvature Variational Autoencoder, is a generalization of VAEs
to products of constant curvature spaces.\hspace{-0.3em}\footnote{Code is available on GitHub at \url{https://github.com/oskopek/mvae}.}
It has the advantage of a better reduction in dimensionality,
while maintaining efficient optimization.
The resulting latent space is a ``non-constantly'' curved manifold that is more flexible than a single constant curvature manifold.

Our contributions are the following:
(i) we develop a principled framework for manipulating representations and modeling probability distributions in products of constant curvature spaces that smoothly transitions across curvatures of different signs,
(ii) we generalize Variational Autoencoders to learn latent representations on products of constant curvature spaces with generalized Gaussian-like priors, and
(iii) empirically, our models outperform current benchmarks on a synthetic tree dataset \citep{hvae}
and on image reconstruction on the MNIST \citep{mnist}, Omniglot \citep{omniglot}, and CIFAR \citep{cifar} datasets for some latent space dimensions.


\section{Geometry and Probability in Riemannian manifolds}\label{sec:geom}

To define constantly curved spaces, we first need to define the notion of \emph{sectional curvature} $K(\tau_\vx)$ of two linearly independent vectors in the tangent space at a point $\vx \in \mc{M}$ spanning a two-dimensional plane $\tau_\vx$ \citep{panoramic_geo}. Since we deal with constant curvature spaces where all the sectional curvatures are equal, we denote a manifold's curvature as $K$.
Instead of curvature $K$, we sometimes use the generalized notion of a \emph{radius}: $R = 1 / \sqrt{|K|}.$

There are three different types of manifolds $\mc{M}$ we can define with respect to the sign of the curvature: a positively curved space, a ``flat'' space, and a negatively curved space. Common realizations of those manifolds are the hypersphere $\sph_K$, the Euclidean space $\euc$, and the hyperboloid $\hyp_K$:
\begin{align*}\mc{M} = 
  \begin{cases}
    \sph^n_K = \{\vx \in \R^{n+1} : \scprod{\vx, \vx} = 1/K\}, & \text{for } K > 0 \\
    \euc^n = \R^n, & \text{for } K = 0\\
    \hyp^n_K = \{\vx \in \R^{n+1} : \lprod{\vx, \vx} = 1/K\}, & \text{for } K < 0
  \end{cases}\end{align*}
where $\scprod{\cdot, \cdot}$ is the standard Euclidean inner product,
and $\lprod{\cdot, \cdot}$ is the Lorentz inner product,
\begin{align*}\lprod{\vx, \vy} = -x_1 y_1 + \sum_{i=2}^{n+1} x_i y_i \quad  \forall \vx, \vy \in \R^{n+1}.\end{align*}

We will need to define the exponential map, logarithmic map, and parallel transport in all spaces we consider.
The exponential map in Euclidean space is defined as
$\exp_\vx(\vv) = \vx + \vv,$
for all $\vx \in \euc^n$ and $\vv \in \tang{\vx}\euc^n$.
Its inverse, the logarithmic map is
$\log_\vx(\vy) = \vy - \vx,$
for all $\vx, \vy \in \euc^n$.
Parallel transport in Euclidean space is simply an identity
$\PT_{\vx \to \vy}(\vv) = \vv,$ for all $\vx, \vy \in \euc^n$ and $\vv \in \tang{\vx}\euc^n$.
An overview of these operations in the hyperboloid $\hyp_K^n$ and the hypersphere $\sph_K^n$ can be found in Table~\ref{tab:summary_ops}.
For more details, refer to \citet{petersen}, \citet{cannon}, or Appendix~\ref{app:geom}.

\subsection{Stereographically projected spaces}

The above spaces are enough to cover any possible value of the curvature,
and they define all the necessary operations we will need to train VAEs in them.
However, both the hypersphere and the hyperboloid have an unsuitable property, namely the non-convergence of the norm of points as the curvature goes to 0.
Both spaces grow as $K \to 0$ and become locally ``flatter'', but to do that, their points have to go away from the origin of the coordinate space $\B{0}$ to be able to satisfy the manifold's definition.
A good example of a point that diverges is the origin of the hyperboloid (or a pole of the hypersphere)
$\vmu_0 = (1/\sqrt{|K|}, 0, \ldots, 0)^T$. In general, we can see that $\norm{\vmu_0}^2 = 1/K \xrightarrow{K \to 0} \pm\infty$.
Additionally, the distance and the metric tensors of these spaces do not converge to their Euclidean variants as $K \to 0$, hence the spaces themselves do not converge to $\R^d$.
This makes both of these spaces unsuitable for trying to learn sign-agnostic curvatures.

Luckily, there exist well-defined non-Euclidean spaces that inherit most properties
from the hyperboloid and the hypersphere, yet do not have these properties -- namely,
the Poincar\'{e} ball and the projected sphere, respectively.
We obtain them by applying stereographic conformal projections, meaning that angles are preserved by this transformation. Since the distance function on the hyperboloid and hypersphere only depend on the radius and angles
between points, they are isometric.

We first need to define the projection function $\rho_K$.
For a point $(\xi; \vx^T)^T \in \R^{n+1}$ and curvature $K \in \R$, where $\xi \in \R$, $\vx, \vy \in \R^n$
\begin{align*}\label{eq:rho}
  \rho_K((\xi; \vx^T)^T) &= \frac{\vx}{1+\sqrt{|K|}\xi},
  &
  \rho_K^{-1}(\vy)&=\left(\frac{1}{\sqrt{|K|}}\frac{1 - K\norm{\vy}_2^2}{1 + K\norm{\vy}_2^2}; \frac{2\vy^T}{1 + K\norm{\vy}_2^2}\right)^T.
  \end{align*}
The formulas correspond to the classical stereographic projections defined for these models \citep{curv_intro}. Note that both of these projections map the point
$\vmu_0 = (1/\sqrt{|K|}, 0, \ldots, 0)$ in the original space to $\vmu_{0} = \B{0}$ in the projected space, and back.

Since the stereographic projection is conformal, the metric tensors of both spaces will be conformal.
In this case, the metric tensors of both spaces are the same, except for the sign of $K$:
$\mt^{\spr_K}_\vx = \mt^{\poi_K}_\vx = (\lambda_\vx^K)^2 \mt^\euc,$
for all $\vx$ in the respective manifold \citep{hnn}, and $\mt_\vy^\euc = \eye$ for all $\vy \in \euc$.
The conformal factor $\lambda_\vx^K$ is then defined as $\lambda_\vx^K = 2 / (1 + K\norm{\vx}^2_2)$.
Among other things, this form of the metric tensor has the consequence
that we unfortunately cannot define a single unified inner product
in all tangent spaces at all points.
The inner product at $\vx \in \mc{M}$ has the form of
$\xprod{\vu, \vv}{\vx} = (\lambda_\vx^K)^2 \scprod{\vu, \vv}$
for all $\vu, \vv \in \tang{\vx}\mc{M}$.

We can now define the two models corresponding to $K > 0$ and $K < 0$.
The curvature of the projected manifold is the same as the original manifold.
An $n$-dimensional \textit{projected hypersphere} ($K > 0$) is defined as the set
$\spr^n_K = \rho_K(\sph_K^n \setminus \{-\vmu_0\}) = \R^n,$
where $\vmu_0 = (1/\sqrt{|K|}, 0, \ldots, 0)^T \in \sph_K^n$, along with the induced distance function.
The $n$-dimensional \textit{Poincar\'{e} ball} $\poi_K^n$ (also called the Poincar\'{e} disk when $n=2$) for a given curvature $K < 0$ is defined as 
$\poi^n_K = \rho_K(\hyp_K^n) = \left\{\vx \in \R^n : \scprod{\vx, \vx} < -\frac{1}{K}\right\},$
with the induced distance function.

\subsection{Gyrovector spaces}

An important analogy to vector spaces (vector addition and scalar multiplication) in non-Euclidean geometry is the notion of gyrovector spaces \citep{ungar}.
Both of the above spaces $\spr_K$ and $\poi_K$ (jointly denoted as $\mc{M}_K$) share the same structure, hence they also share the following definition of addition.
The \textit{M\"{o}bius addition} $\kplus$ of $\vx, \vy \in \mc{M}_K$ (for both signs of $K$) is defined as
\begin{align*}
\vx \kplus \vy &= \frac{(1-2K\scprod{\vx, \vy} - K\norm{\vy}_2^2)\vx + (1+K\norm{\vx}_2^2)\vy}{1-2K\scprod{\vx, \vy} + K^2 \norm{\vx}_2^2 \norm{\vy}_2^2}.
\end{align*}
We can, therefore, define ``gyrospace distances'' for both of the above spaces, which
are alternative curvature-aware distance functions
\begin{align*}
d_{\spr_\text{gyr}}(\vx, \vy) &= \frac{2}{\sqrt{K}} \arctan(\sqrt{K} \norm{-\vx \kplus \vy}_2), &
d_{\poi_\text{gyr}}(\vx, \vy) &= \frac{2}{\sqrt{-K}} \arctanh(\sqrt{-K} \norm{-\vx \kplus \vy}_2).
\end{align*}
These two distances are equivalent to their non-gyrospace variants
$d_{\mc{M}}(\vx, \vy) = d_{\mc{M}_\text{gyr}}(\vx, \vy),$
as is shown in Theorem~\ref{thm:gyro_dist_eq_poi_dist} and its hypersphere equivalent.
Additionally, Theorem~\ref{thm:gyr_dist_poi_converges} shows that
\begin{align*}d_{\mc{M}_\text{gyr}}(\vx, \vy) \xrightarrow{K \to 0} 2 \norm{\vx-\vy}_2,\end{align*}
which means that the distance functions converge to the Euclidean distance function as $K\to 0$.

We can notice that most statements and operations in constant curvature spaces have a dual statement or operation
in the corresponding space with the opposite curvature sign.
The notion of duality is one which comes up very often and in our case is based on
Euler's formula
$e^{ix} = \cos(x) + i \sin(x)$ and the notion of principal square roots $\sqrt{-K} = i\sqrt{K}$.
This provides a connection between trigonometric, hyperbolic trigonometric, and exponential functions.
Thus, we can convert all the hyperbolic formulas above to their spherical equivalents and vice-versa.

Since \citet{hnn} and \citet{hypglove} used the same gyrovector spaces to define
an exponential map, its inverse logarithmic map, and parallel transport
in the Poincar\'{e} ball, we can reuse them for the projected hypersphere by applying the transformations above, as they share the same formalism.
For parallel transport, we additionally need the notion of gyration \citep{ungar}
\begin{align*}\gyr{\vx, \vy}\vv = \kminus (\vx \kplus \vy) \kplus (\vx \kplus (\vy \kplus \vv)).\end{align*}
Parallel transport in the both the projected hypersphere and the Poincar\'{e} ball then is
$\PT^K_{\vx \to \vy}(\vv) = (\lambda_{\vx}^K / \lambda_\vy^K) \gyr{\vy, -\vx}\vv,$
for all $\vx, \vy \in \mc{M}_K^n$ and $\vv \in \tang{\vx}\mc{M}_K^n$.
Using a curvature-aware definition scalar products 
$\left(\xprod{\vx, \vy}{K} = \scprod{\vx, \vy} \text{ if } K \geq 0, \lprod{\vx, \vy} \text{ if } K < 0\right)$
and of trigonometric functions
\begin{align*}
\sin_K &= \begin{cases}\sin & \text{if } K > 0\\\sinh & \text{if } K < 0\end{cases}& \cos_K &= \begin{cases}\cos & \text{if } K > 0\\\cosh & \text{if } K < 0\end{cases} & 
\tan_K &= \begin{cases}\tan & \text{if } K > 0\\\tanh & \text{if } K < 0\end{cases}
\end{align*}
we can summarize all the necessary operations in all manifolds compactly in Table~\ref{tab:summary_ops} and Table~\ref{tab:summary_ops_proj}.



\begin{table}[bt]
\centering
\caption{Summary of operations in $\sph_K$ and $\hyp_K$.}
\label{tab:summary_ops}
\begin{tabular}{ll}
\toprule
Distance & $\displaystyle d(\vx, \vy) = \frac{1}{\sqrt{|K|}} \arccos_K(|K|\xprod{\vx, \vy}{K})$ \\ 
\midrule
Exponential map & $\displaystyle \exp_\vx^K(\vv) = \cos_K\left(\sqrt{|K|}\norm{\vv}_K\right)\vx + \sin_K\left(\sqrt{|K|}\norm{\vv}_K\right)\frac{\vv}{\sqrt{|K|}\norm{\vv}_K}$ \\ 
Logarithmic map & $\displaystyle \log_\vx^K(\vy) = \frac{\arccos_K\left(K \xprod{\vx, \vy}{K}\right)}{\sin_K\left(\arccos_K\left(K \xprod{\vx, \vy}{K}\right)\right)} \left(\vy - K \xprod{\vx, \vy}{K}\vx\right)$\\ 
\midrule
Parallel transport & $\displaystyle \PT^K_{\vx \to \vy}(\vv) = \vv - \frac{K\xprod{\vy, \vv}{K}}{1+K\xprod{\vx, \vy}{K}}(\vx + \vy)$\\ 
\bottomrule
\end{tabular} 
\end{table}

\begin{table}[bt]
\centering
\caption{Summary of operations in projected spaces $\spr_K$ and $\poi_K$.}
\label{tab:summary_ops_proj}
\begin{tabular}{ll}
\toprule
Distance & $\displaystyle d(\vx, \vy) = \frac{1}{\sqrt{|K|}} \arccos_K\left(1 - \frac{2K\norm{\vx-\vy}_2^2}{(1+K\norm{\vx}_2^2)(1+K\norm{\vy}_2^2)}\right)$\\ 
Gyrospace distance & $\displaystyle d_\text{gyr}(\vx, \vy) = \frac{2}{\sqrt{|K|}} \arctan_K(\sqrt{|K|} \norm{-\vx \kplus \vy}_2)$\\ 
\midrule
Exponential map & $\displaystyle \exp_\vx^K(\vv) = \vx \kplus \left(\tan_K\left(\sqrt{|K|}\frac{\lambda_\vx^K\norm{\vv}_2}{2}\right)\frac{\vv}{\sqrt{|K|}\norm{\vv}_2}\right)$ \\ 
Logarithmic map & $\displaystyle \log_\vx^K(\vy) = \frac{2}{\sqrt{|K|}\lambda_\vx^K} \arctan_K\left(\sqrt{|K|}\norm{-\vx \kplus \vy}_2\right)\frac{-\vx \kplus \vy}{\norm{-\vx \kplus \vy}_2}$\\
\midrule
Parallel transport & $\displaystyle \PT^K_{\vx \to \vy}(\vv) = \frac{\lambda_{\vx}^K}{\lambda_\vy^K}\gyr{\vy, -\vx}\vv$\\
\bottomrule
\end{tabular} 
\end{table}

\subsection{Products of spaces}\label{bgr:products}

Previously, our space consisted of only one manifold of varying dimensionality
and fixed curvature. Like \citet{products}, we propose learning latent representations in products of constant curvature spaces, contrary to existing VAE approaches which are limited to a single Riemannian manifold.

Our latent space $\mc{M}'$ consists of several \emph{component spaces}
$\mc{M}' = \bigtimes_{i=1}^k \mc{M}_{K_i}^{n_i},$
where $n_i$ is the dimensionality of the space, $K_i$ is its curvature, and $\mc{M} \in \{\euc, \sph, \spr, \hyp, \poi\}$ is the model choice.
Even though all components have constant curvature, the resulting manifold $\mc{M}'$ has non-constant curvature. Its distance function decomposes based on its definition
$d_\mc{M}^2(\vx, \vy) = \sum_{i=1}^k d_{\mc{M}_{K_i}^{n_i}}^2\left(\vx^{(i)}, \vy^{(i)}\right),$
where $\vx^{(i)}$ represents a vector in $\mc{M}^{n_i}_{K_i}$, corresponding to the part of the latent space representation of $\vx$ belonging to $\mc{M}_{K_i}^{n_i}$.
All other operations we defined on our manifolds are element-wise. Therefore, we again decompose the representations into parts $\vx^{(i)}$,
apply the operation on that part 
$\tilde{\vx}^{(i)} = f_{K_i}^{(n_i)}(\vx^{(i)})$
and concatenate the resulting parts back $\tilde{\vx} = \bigodot_{i=1}^k \tilde{\vx}^{(i)}$.

The \emph{signature} of the product space, i.e.~its parametrization, has several degrees of freedom per component:
(i) the model $\mc{M}$, (ii) the dimensionality $n_i$, and (iii) the curvature $K_i$.
We need to select all of the above for every component in our product space.
To simplify, we use a shorthand notation for repeated components: $(\mc{M}_{K_i}^{n_i})^j = \bigtimes_{l=1}^j \mc{M}_{K_i}^{n_i}.$
In Euclidean spaces, the notation is redundant.
For $n_1, \ldots, n_k \in \Z$, such that $\sum_{i=1}^k n_i = n \in \Z$, it holds that the
Cartesian product of Euclidean spaces $\euc^{n_i}$ is 
$\euc^n = \bigtimes_{i=1}^k \euc^{n_i}.$
However, the equality does \emph{not} hold for the other considered manifolds. This is due to the additional constraints posed on the points in the definitions of individual models of curved spaces.


\subsection{Probability distributions on Riemannian manifolds}

To be able to train Variational Autoencoders, we
need to chose a probability distribution $p$ as a prior and
a corresponding posterior distribution family $q$.
Both of these distributions have to be differentiable with respect
to their parametrization, they need to have a differentiable
Kullback-Leiber (KL) divergence,
and be ``reparametrizable'' \citep{vae}.
For distributions where the KL does not have a closed-form solution independent on $\vz$, or where this integral is too hard to compute,
we can estimate it using Monte Carlo estimation
\begin{align*}\KL{q}{p} \approx \frac{1}{L} \sum_{l=1}^L \log\left(\frac{q(\vz^{(l)})}{p(\vz^{(l)})}\right) \stackrel{\text{if } L=1}{=} \log\left(\frac{q(\vz^{(1)})}{p(\vz^{(1)})}\right),\end{align*}
where $\vz^{(l)} \sim q$ for all $l = 1, \ldots, L$.
The Euclidean VAE uses a natural choice for a prior
on its latent representations -- the Gaussian distribution \citep{vae}.
Apart from satisfying the requirements for a VAE prior and posterior distribution,
the Gaussian distribution has additional properties, like being the maximum entropy distribution for a given variance \citep{jaynes}.
There exist several fundamentally different approaches to generalizing the Normal distribution to Riemannian manifolds. We discuss the following three generalizations
based on the way they are constructed \citep{hvae}.

\paragraph{Wrapping} This approach leverages the fact that all manifolds define a tangent vector space at every point.
We simply sample from a Gaussian distribution in the tangent space at $\vmu_0$ with mean $\B{0}$, and use parallel transport and the exponential map to map the sampled point onto the manifold. The PDF can be obtained using the multivariate chain rule if we can compute the determinant of the Jacobian of the parallel transport and the exponential map. This is very computationally effective at the expense of losing some theoretical properties.

\paragraph{Restriction} The ``Restricted Normal'' approach is conceptually antagonal -- instead of expanding a point
to a dimensionally larger point, we restrict a point of the ambient space sampled from a Gaussian to the manifold.
The consequence is that the distributions constructed this way
are based on the ``flat'' Euclidean distance.
An example of this is the von Mises-Fisher (vMF) distribution \citep{svae}.
A downside of this approach is that vMF only has a single scalar covariance parameter $\kappa$, while other approaches can parametrize covariance in different dimensions separately.

\paragraph{Maximizing entropy} Assuming a known mean and covariance matrix, we want to maximize the entropy of the distribution \citep{pennec}. This approach is usually called the Riemannian Normal distribution. \citet{hvae} derive it for the Poincar\'{e} ball,
and \citet{hauberg} derive the Spherical Normal distribution on the hypersphere.
Maximum entropy distributions resemble the Gaussian distribution's properties the closest, but it is usually very hard to sample from such distributions, compute their normalization constants, and even
derive the specific form. Since the gains for VAE performance using this construction of Normal distributions over wrapping is only marginal, as reported by \citet{hvae}, we have chosen to focus on Wrapped Normal distributions.

To summarize, Wrapped Normal distributions are very computationally efficient to sample from and also efficient for computing the log probability of a sample, as detailed by \citet{phg}. The Riemannian Normal distributions (based on geodesic distance in the manifold directly) could also be used, however they are more computationally expensive for sampling, because the only methods available are based on rejection sampling \citep{hvae}.



\subsubsection{Wrapped Normal distribution}

First of all, we need to define an ``origin'' point on the manifold, which we will denote as $\vmu_0 \in \mc{M}_K$. What this point corresponds to
is manifold-specific: in the hyperboloid and hypersphere it corresponds to the point
$\vmu_0 = (1/\sqrt{|K|}, 0, \ldots, 0)^T,$
and in the Poincar\'{e} ball, projected sphere, and Euclidean space it is simply $\vmu_0 = \B{0},$
the origin of the coordinate system.

Sampling from the distribution $\WN(\vmu, \mSigma)$ has been described in detail in \citet{phg} and \citet{hvae}, and we have extended all the necessary operations and procedures to arbitrary curvature $K$. Sampling then corresponds to:
$$\vv \sim \mc{N}(\vmu_0, \mSigma) \in \tang{\vmu_0}\mc{M}_K,\;\vu = \PT^K_{\vmu_0 \to \vmu}(\vv) \in \tang{\vmu}\mc{M}_K,\;\vz = \exp{\vmu}_\vx^K(\vu) \in \mc{M}_K.$$
The log-probability of samples can be computed by the reverse procedure:
\begin{align*}
\vu = \log_\mu^K(\vz) \in \tang{\vmu}\mc{M}_K, \; \vv = \PT^K_{\vmu \to \vmu_0}(\vu) \in \tang{\vmu_0}\mc{M}_K,\\
\log \WN(\vmu, \mSigma) = \log\mc{N}(\vv; \vmu_0, \mSigma) - \log\det\left(\frac{\partial f}{\partial \vv}\right),
\end{align*}
where $f = \exp_\vmu^K \circ \PT^K_{\vmu_0 \to \vmu}.$
The distribution can be applied to all manifolds that we have introduced. The only differences are the specific forms of operations and the log-determinant in the PDF.
The specific forms of the log-PDF for the four spaces $\hyp$, $\sph$, $\spr$, and $\poi$ are derived in Appendix~\ref{app:prob}.
All variants of this distribution are reparametrizable,
differentiable, and the KL can be computed using Monte Carlo estimation.
As a consequence of the distance function and operations convergence theorems for the Poincar\'e ball \ref{thm:gyr_dist_poi_converges} (analogously for the projected hypersphere),
\ref{thm:exp_conv}, \ref{thm:log_conv}, and \ref{thm:pt_conv},
we see that the Wrapped Normal distribution converges
to the Gaussian distribution as $K \to 0$.

\section{Variational Autoencoders in products spaces}\label{chap:vae}

To be able to learn latent representations in Riemannian manifolds instead of
in $\R^d$ as above, we only need to change the parametrization of the mean and covariance in the VAE forward pass,
and the choice of prior and posterior distributions.
The prior and posterior have to be chosen depending on the chosen manifold
and are essentially treated as hyperparameters of our VAE.
Since we have defined the Wrapped Normal family of distributions
for all spaces, we can use $\WN(\vmu_0, \vsigma^2\eye)$ as the posterior family,
and $\WN(\vmu_0, \eye)$ as the prior distribution. The forms of the distributions
depend on the chosen space type.
The mean is parametrized using the exponential map $\exp^K_{\vmu_0}$ as an activation function.
Hence, all the model's parameters live in Euclidean space and can be optimized directly.

In experiments, we sometimes use $\vmf(\vmu, \kappa)$ for the hypersphere $\sph_K^n$ (or a backprojected variant of $\vmf$ for $\spr_K^n$) with the associated hyperspherical uniform distribution $U(\sph_K^n)$ as a prior \citep{svae},
or the Riemannian Normal distribution $\RN(\vmu, \sigma^2)$ and the associated prior $\RN{\vmu_0, 1}$ for the Poincare ball $\poi_K^n$ \citep{hvae}.




\subsection{Learning curvature}\label{chap:curvature}

We have already seen approaches to learning VAEs in products of spaces of constant curvature.
However, we can also change the curvature constant in each of the spaces during training.
The individual spaces will still have constant curvature at each point, we just allow changing the constant in between
training steps. To differentiate between these training procedures,
we will call them \emph{fixed} curvature and \emph{learnable} curvature VAEs respectively.

The motivation behind changing curvature of non-Euclidean constant curvature spaces might not be clear,
since it is apparent from the definition of the distance function in the hypersphere and hyperboloid
$d(\vx, \vy) = R \cdot \theta_{\vx, \vy},$
that the distances between two points that stay at the same angle only get rescaled when changing the radius of the space. Same applies for the Poincar\'{e} ball and the projected spherical space.
However, the decoder does not only depend on pairwise distances, but rather on the specific positions of points in the space. It can be
conjectured that the KL term of the ELBO indeed is only ``rescaled'' when we change the curvature,
however, the reconstruction process is influenced in non-trivial ways. Since that is hard to quantify and prove, we devise a series of practical experiments to show overall model performance is enhanced when learning curvature.

\paragraph{Fixed curvature VAEs}

In \emph{fixed} curvature VAEs, all component latent spaces have a fixed curvature
that is selected a priori and fixed for the whole duration of the training procedure, as well as during evaluation.
For Euclidean components it is 0, for positively or negatively curved spaces any positive
or negative number can be chosen, respectively. For stability reasons, we select curvature values
from the range $[0.25, 1.0]$, which corresponds to radii in $[1.0, 2.0]$.
The exact curvature value does not have a significant impact on performance when training a fixed curvature VAE, as motivated by the distance rescaling remark above. In the following, we refer to fixed curvature components with a constant subscript, e.g.~$\hyp_1^n$.

\paragraph{Learnable curvature VAEs}

In all our manifolds, we can differentiate the ELBO
with respect to the curvature $K$. This enables us to treat $K$ as a parameter of the model
and learn it using gradient-based optimization, exactly like we learn the encoder/decoder
maps in a VAE.

Learning curvature directly is badly conditioned --
we are trying to learn one scalar parameter that influences the resulting decoder
and hence the ELBO quite heavily.
Empirically, we have found that Stochastic Gradient Descent works well to optimize the radius of a component. We constrain the radius to be strictly positive in all non-Euclidean spaces
by applying a ReLU activation function before we use it in operations.

\paragraph{Universal curvature VAEs}

However, we must still a priori select the ``partitioning'' of our latent space -- the number of components and for each of them select the dimension and at least the sign of the curvature of that component (signature estimation).

The simplest approach would be to just try all possibilities and compare the results
on a specific dataset. This procedure would most likely be optimal, but does not scale well.

To eliminate this, we propose an approximate method --
we partition our space into 2-dimensional components (if the number of dimensions is odd, one component will have 3 dimensions).
We initialize all of them as Euclidean components and train for
half the number of maximal epochs we are allowed.
Then, we split the components into 3 approximately equal-sized groups
and make one group into hyperbolic components, one into spherical, and the last
remains Euclidean. We do this by changing the curvature of a component by a very small $\eps$. 
We then train just the encoder/decoder maps for a few epochs to stabilize
the representations after changing the curvatures. Finally, we allow learning the curvatures of all non-Euclidean components and train for the rest of the allowed epochs.
The method is not completely general, as it never uses components bigger than dimension 2, but the approximation has empirically performed satisfactorily. 

We also do not constrain the curvature of the components to a specific sign in the last stage of training.
Therefore, components may change their type of space from a positively curved to a negatively curved one, or vice-versa.

Because of the divergence of points as $K \to 0$ for the hyperboloid and hypersphere, the universal curvature VAE assumes the positively curved
space is $\spr$ and the negatively curved space is $\poi$. In all experiments, this universal approach is denoted as $\uni^n$.

\section{Experiments}\label{chap:exp}

For our experiments, we use four datasets: 
(i) Branching diffusion process \citep[BDP]{hvae} -- a synthetic tree-like dataset with injected noise,
(ii) Dynamically-binarized MNIST digits \citep{mnist} -- we binarize the images similarly to \citet{burda,dbns}:
  the training set is binarized dynamically (uniformly sampled threshold per-sample $\mathrm{bin}(\vx) \in \{0,1\}^D = \vx > \mc{U}[0,1], \vx \in \vx \subseteq [0,1]^D$),
  and the evaluation set is done with a fixed binarization ($\vx > 0.5$),
(iii) Dynamically-binarized Omniglot characters \citep{omniglot} downsampled to $28 \times 28$ pixels, and 
(iv) CIFAR-10 \citep{cifar}.
  



All models in all datasets are trained with early stopping on training ELBO with a lookahead of 50 epochs and a warmup of 100 epochs \citep{nvrnn}. All BDP models are trained for a 1000 epochs, MNIST and Omniglot models are trained for 300 epochs,
and CIFAR for 200 epochs.
We compare models with a given latent space dimension using marginal log-likelihood with importance sampling \citep{burda}
with 500 samples, except for CIFAR, which uses 50 due to memory constraints.
In all tables, we denote it as LL. We run all experiments at least 3 times to get an estimate of variance when using different
initial values.

In all the BDP, MNIST, and Omniglot experiments below, we use a simple feed-forward encoder and decoder architecture consisting of a single dense layer with 400 neurons and element-wise ReLU activation.
Since all the VAE parameters $\{\theta, \phi\}$ live in Euclidean manifolds, we can use standard gradient-based optimization methods. 
Specifically, we use the Adam \citep{adam} optimizer with a learning rate of $10^{-3}$ and standard settings for $\beta_1 = 0.9$, $\beta_2 = 0.999$, and $\epsilon = 10^{-8}$. 

For the CIFAR encoder map, we use a simple convolutional neural networks with three convolutional
layers with 64, 128, and 512 channels respectively.
For the decoder map, we first use a dense layer of dimension 2048,
and then three consecutive transposed convolutional layers with
256, 64, and 3 channels.
All layers are followed by a ReLU activation function, except for the last one.
All convolutions have $4 \times 4$ kernels with stride 2, and padding of size 1.

The first 10 epochs for all models are trained with a fixed curvature starting at 0
and increasing in absolute value each epoch. This corresponds to a burn-in period similarly to \citet{hypemb}.
For learnable curvature approaches we then use Stochastic Gradient Descent with learning rate $10^{-4}$ and let the optimizers adjust the value freely, for fixed curvature approaches it stays at the last burn-in value.
All our models use the Wrapped Normal distribution, or equivalently Gaussian in Euclidean components, unless specified otherwise.
All fixed curvature components are denoted with a $\mc{M}_1$ or $\mc{M}_{-1}$ subscript,
learnable curvature components do not have a subscript.
The observation model for the reconstruction loss term were Bernoulli distributions for MNIST and Omniglot, and standard Gaussian distributions for BDP and CIFAR.


As baselines, we train VAEs with spaces that have a fixed constant curvature,
i.e.~assume a single Riemannian manifold (potentially a product of them) as their latent space.
It is apparent that our models with a single component, like $\sph_1^n$ correspond to \citet{svae} and \citet{svae2}, $\hyp_{-1}^n$ is equivalent to the Hyperbolic VAE of \citet{phg},
$\poi_{-c}^n$ corresponds to the $\mc{P}^c$-VAE of \citet{hvae},
and $\euc^n$ is equivalent to the Euclidean VAE.
In the following, we present a selection of all the obtained results. For more information see Appendix~\ref{app:res}. Bold numbers represent values that are particularly
interesting.
Since the Riemannian Normal and the von Mises-Fischer distribution only have a spherical covariance matrix, i.e.~a single scalar variance parameter per component, we evaluate all our approaches with a spherical covariance parametrization as well.

\begin{table}[tbp]
  \centering
  \caption{Summary of results (mean and standard deviation), latent space dimension 6, spherical covariance, on the BDP (left) and MNIST (right) datasets.}
  \label{tab:bdp_z6_scalar_main}
  \begin{tabular}[t]{l|r}
    \toprule
    \textbf{Model} & LL\\
    \midrule
$\sph_{1}^{6}$ & $-55.81${\scriptsize$\pm 0.35$}\\
$\spr_{1}^{6}$ & $-55.78${\scriptsize$\pm 0.07$}\\
$\euc^{6}$ & $-56.28${\scriptsize$\pm 0.56$}\\
$(\hyp_{-1}^{2})^{3}$ & $-56.08${\scriptsize$\pm 0.52$}\\
$\hyp_{-1}^{6}$ & $-56.18${\scriptsize$\pm 0.32$}\\
$(\poi_{-1}^{2})^{3}$ & $-55.98${\scriptsize$\pm 0.62$}\\
$\poi_{-1}^{6}$ & $-56.74${\scriptsize$\pm 0.55$}\\
$(\RN\;\poi_{-1}^{2})^{3}$ & $\mathbf{-54.99}${\scriptsize$\pm 0.12$}\\
\midrule
$(\sph^{2})^{3}$ & $-56.05${\scriptsize$\pm 0.21$}\\
$(\spr^{2})^{3}$ & $-56.06${\scriptsize$\pm 0.36$}\\
$(\hyp^{2})^{3}$ & $\mathbf{-55.80}${\scriptsize$\pm 0.32$}\\
$(\poi^{2})^{3}$ & $-56.29${\scriptsize$\pm 0.05$}\\
$(\RN\;\poi^{2})^{3}$ & $-56.25${\scriptsize$\pm 0.56$}\\
\midrule
$\spr^{2} \times \euc^{2} \times \poi^{2}$ & $-55.87${\scriptsize$\pm 0.22$}\\
$\euc^{2} \times \hyp^{2} \times \sph^{2}$ & $-55.92${\scriptsize$\pm 0.42$}\\
$\euc^{2} \times \hyp^{2} \times (\vmf\;\sph^{2})$ & $-55.82${\scriptsize$\pm 0.43$}\\
$\euc^{2} \times \hyp_{-1}^{2} \times (\vmf\;\sph_{1}^{2})$ & $\mathbf{-55.77}${\scriptsize$\pm 0.51$}\\
\midrule
$(\uni^{2})^{3}$ & $\mathbf{-55.56}${\scriptsize$\pm 0.15$}\\
$\uni^{6}$ & $-55.84${\scriptsize$\pm 0.38$}\\
    \bottomrule
  \end{tabular}
  ~
  \begin{tabular}[t]{l|r}
    \toprule
    \textbf{Model} & LL\\
\midrule
$\sph_{1}^{6}$ & $-96.71${\scriptsize$\pm 0.17$}\\
$\vmf\;\sph_{1}^{6}$ & $-97.03${\scriptsize$\pm 0.14$}\\
$\spr_{1}^{6}$ & $-98.21${\scriptsize$\pm 0.23$}\\
$\euc^{6}$ & $-97.16${\scriptsize$\pm 0.15$}\\
$\hyp_{-1}^{6}$ & $-97.10${\scriptsize$\pm 0.44$}\\
$(\poi_{-1}^{2})^{3}$ & $-97.56${\scriptsize$\pm 0.04$}\\
$(\RN\;\poi_{-1}^{2})^{3}$ & $\mathbf{-92.54}${\scriptsize$\pm 0.19$}\\
\midrule
$(\sph^{2})^{3}$ & $-96.46${\scriptsize$\pm 0.12$}\\
$\sph^{6}$ & $-96.72${\scriptsize$\pm 0.15$}\\
$\vmf\;\sph^{6}$ & $-96.72${\scriptsize$\pm 0.18$}\\
$\spr^{6}$ & $-97.72${\scriptsize$\pm 0.15$}\\
$(\hyp^{2})^{3}$ & $-97.37${\scriptsize$\pm 0.13$}\\
$(\RN\;\poi^{2})^{3}$ & $\mathbf{-94.16}${\scriptsize$\pm 0.68$}\\
\midrule
$\spr^{2} \times \euc^{2} \times \poi^{2}$ & $-97.48${\scriptsize$\pm 0.18$}\\
$\spr^{2} \times \euc^{2} \times (\RN\;\poi^{2})$ & $-96.43${\scriptsize$\pm 0.47$}\\
$\spr_{1}^{2} \times \euc^{2} \times (\RN\;\poi^{2}_{-1})$ & $\mathbf{-96.18}${\scriptsize$\pm 0.21$}\\
$\euc^{2} \times \hyp^{2} \times \sph^{2}$ & $-96.80${\scriptsize$\pm 0.20$}\\
$\euc^{2} \times \hyp_{-1}^{2} \times \sph_{1}^{2}$ & $-96.76${\scriptsize$\pm 0.09$}\\
$\euc^{2} \times \hyp^{2} \times (\vmf\;\sph^{2})$ & $-96.56${\scriptsize$\pm 0.27$}\\
\midrule
$(\uni^{2})^{3}$ & $\mathbf{-97.12}${\scriptsize$\pm 0.04$}\\
$\uni^{6}$ & $-97.26${\scriptsize$\pm 0.16$}\\
    \bottomrule
  \end{tabular}
\end{table}

\paragraph{Binary diffusion process}
For the BDP dataset and latent dimension 6 (Table~\ref{tab:bdp_z6_scalar_main}),
we observe that all VAEs that only use the von Mises-Fischer distribution
perform worse than a Wrapped Normal. However, when a VMF spherical
component was paired with other component types, it performed better
than if a Wrapped Normal spherical component was used instead.
Riemannian Normal VAEs did very well on their own -- the fixed Poincar\'{e}
VAE $(\RN\;\poi^2_{-1})^3$ obtains the best score. It did not fare
as well when we tried to learn curvature with it, however.

An interesting observation is that all single-component VAEs $\mc{M}^6$ performed worse than product VAEs $(\mc{M}^2)^3$
when curvature was learned, across all component types.
Our universal curvature VAE $(\uni^2)^3$ managed to get better results
than all other approaches except for the Riemannian Normal baseline,
but it is within the margin of error of some other models.
It also outperformed its single-component variant $\uni^6$.
However, we did not find that it converged to specific curvature values,
only that they were in the approximate range of $(-0.1, +0.1)$.


\paragraph{Dynamically-binarized MNIST reconstruction}
On MNIST (Table~\ref{tab:bdp_z6_scalar_main})
with spherical covariance, we noticed that VMF again under-performed
Wrapped Normal, except when it was part of a product like $\euc^{2} \times \hyp^{2} \times (\vmf\;\sph^{2})$.
When paired with another Euclidean and a Riemannian Normal Poincar\'{e} disk component,
it performed well, but that might be because the $\RN\;\poi_{-1}$
component achieved best results across the board on MNIST. It achieved good
results even compared to diagonal covariance VAEs on 6-dimensional MNIST.
Several approaches are better than the Euclidean baseline.
That applies mainly to the above mentioned Riemannian Normal Poincar\'{e} ball components,
but also $\sph^6$ both with Wrapped Normal and VMF, as well as most
product space VAEs with different curvatures (third section of the table).
Our $(\uni^2)^3$ performed similarly to the Euclidean baseline VAE.

\begin{table}[tb]
  \centering
  \caption{Summary of selected models (mean and standard deviation), latent space dimension 6 (left) and 12 (right), diagonal covariance, on the MNIST dataset.}
  \label{tab:mnist_z6_diag_main}
  \begin{tabular}[t]{l|r}
    \toprule
    \textbf{Model} & LL\\ 
    \midrule
$\sph_{1}^{6}$ & $\mathbf{-96.51}${\scriptsize$\pm 0.09$}\\
$\spr_{1}^{6}$ & $-97.89${\scriptsize$\pm 0.10$}\\
$\euc^{6}$ & $-96.88${\scriptsize$\pm 0.16$}\\
$\hyp_{-1}^{6}$ & $-97.38${\scriptsize$\pm 0.73$}\\
$\poi_{-1}^{6}$ & $-97.33${\scriptsize$\pm 0.15$}\\
\midrule
$\sph^{6}$ & $\mathbf{-96.44}${\scriptsize$\pm 0.20$}\\
$\spr^{6}$ & $-97.53${\scriptsize$\pm 0.22$}\\
$(\hyp^{2})^{3}$ & $\mathbf{-96.86}${\scriptsize$\pm 0.31$}\\
$\hyp^{6}$ & $-96.90${\scriptsize$\pm 0.26$}\\
$\poi^{6}$ & $-97.26${\scriptsize$\pm 0.16$}\\
\midrule
$\spr^{2} \times \euc^{2} \times \poi^{2}$ & $-97.37${\scriptsize$\pm 0.14$}\\
$\spr_{1}^{2} \times \euc^{2} \times \poi_{-1}^{2}$ & $-97.29${\scriptsize$\pm 0.16$}\\
$\euc^{2} \times \hyp^{2} \times \sph^{2}$ & $\mathbf{-96.71}${\scriptsize$\pm 0.19$}\\
$\euc^{2} \times \hyp_{-1}^{2} \times \sph_{1}^{2}$ & $\mathbf{-96.66}${\scriptsize$\pm 0.27$} \\
\midrule
$(\uni^{2})^{3}$ & $-97.06${\scriptsize$\pm 0.13$}\\
$\uni^{6}$ & $\mathbf{-96.90}${\scriptsize$\pm 0.10$}\\
    \bottomrule
  \end{tabular}
  ~
  \begin{tabular}[t]{l|r}
    \toprule
    \textbf{Model} & LL\\
    \midrule
$(\sph_{1}^{2})^{6}$ & $-79.92${\scriptsize$\pm 0.21$}\\
$(\spr_{1}^{2})^{6}$ & $-80.53${\scriptsize$\pm 0.10$}\\
$\euc^{12}$ & $\mathbf{-79.51}${\scriptsize$\pm 0.09$}\\
$(\hyp_{-1}^{2})^{6}$ & $-80.54${\scriptsize$\pm 0.23$}\\
$\hyp_{-1}^{12}$ & $\textbf{-79.37}${\scriptsize$\pm 0.14$}\\
$(\poi_{-1}^{2})^{6}$ & $-80.39${\scriptsize$\pm 0.07$}\\
\midrule
$\sph^{12}$ & $-79.99${\scriptsize$\pm 0.27$}\\
$\spr^{12}$ & $-80.37${\scriptsize$\pm 0.16$}\\
$\hyp^{12}$ & $-79.77${\scriptsize$\pm 0.10$}\\
$(\poi^{2})^{6}$ & $-80.31${\scriptsize$\pm 0.08$}\\
\midrule
$(\spr_{1}^{2})^{2} \times (\euc^{2})^{2} \times (\poi_{-1}^{2})^{2}$ & $-80.14${\scriptsize$\pm 0.11$}\\
$\spr_{1}^{4} \times \euc^{4} \times \poi_{-1}^{4}$ & $-80.14${\scriptsize$\pm 0.20$}\\
$(\euc^{2})^{2} \times (\hyp^{2})^{2} \times (\sph^{2})^{2}$ & $\mathbf{-79.59}${\scriptsize$\pm 0.25$}\\
$\euc^{4} \times \hyp^{4} \times \sph^{4}$ & $\mathbf{-79.69}${\scriptsize$\pm 0.14$}\\
\midrule
$(\uni^{2})^{6}$ & $\mathbf{-79.61}${\scriptsize$\pm 0.06$}\\
$\uni^{12}$ & $-80.01${\scriptsize$\pm 0.30$}\\
    \bottomrule
  \end{tabular}
\end{table}

With diagonal covariance parametrization (Table~\ref{tab:mnist_z6_diag_main}), we observe similar trends.
With a latent dimension of 6, the Riemannian Normal Poincar\'{e} ball VAE is still the best performer.
The Euclidean baseline VAE achieved better results than its spherical covariance counterpart.
Overall, the best result is achieved by the single-component spherical model, with learnable
curvature $\sph_6$.
Interestingly, all single-component VAEs performed better than their $(\mc{M}^2)^3$ counterparts,
except for the $\hyp^6$ hyperboloid, but only by a tiny margin.
Products of different component types also achieve good results.
Noteworthy is that their fixed curvature variants seem to perform marginally better than learnable
curvature ones. Our universal VAEs perform at around the Euclidean baseline VAE performance.
Interestingly, all of them end up with negative curvatures $-0.3 < K < 0$.

Secondly, we run our models with a latent space dimension of 12.
We immediately notice, that not many models can beat the Euclidean VAE baseline $\euc^{12}$
consistently, but several are within the margin of error.
Notably, the product VAEs of $\hyp$, $\sph$, and $\euc$, 
fixed and learnable $\hyp^{12}$, and our universal VAE $(\uni^2)^6$.
Interestingly, products of small components perform better when curvature is fixed, whereas
single big component VAEs are better when curvature is learned.


\paragraph{Dynamically-binarized Omniglot reconstruction}
For a latent space of dimension 6 (Table~\ref{tab:omni_z6_diag_main}), the best of the baseline models is the Poincar\'{e} VAE of
\citep{hvae}. Our models that come very close to the average estimated marginal log-likelihood,
and are definitely within the margin of error, are mainly $(\sph^2)^3$,
$\spr^{2} \times \euc^{2} \times \poi^{2}$, and $\uni^6$.
However, with the variance of performance across different runs,
we cannot draw a clear conclusion. In general, hyperbolic VAEs seem to be doing a bit better on this dataset than spherical VAEs,
which is also confirmed by the fact that almost all universal curvature models finished with
negative curvature components.


\begin{table}[tb]
  \centering
  \caption{Summary of selected models (mean and standard deviation), latent space dimension 6, diagonal covariance, on the Omniglot (left) and CIFAR-10 dataset (right), respectively.}
  \label{tab:omni_z6_diag_main}
  \begin{tabular}[t]{l|r}
    \toprule
    \textbf{Model} & LL\\
\midrule
$\sph_{1}^{6}$ & $-136.69${\scriptsize$\pm 0.94$}\\
$\spr_{1}^{6}$ & $-137.42${\scriptsize$\pm 1.20$}\\
$\euc^{6}$ & $-136.05${\scriptsize$\pm 0.29$}\\
$\hyp_{-1}^{6}$ & $-137.09${\scriptsize$\pm 0.06$}\\
$\poi_{-1}^{6}$ & $\mathbf{-135.86}${\scriptsize$\pm 0.20$}\\
\midrule
$(\sph^{2})^{3}$ & $\mathbf{-136.14}${\scriptsize$\pm 0.27$}\\
$\sph^{6}$ & $-136.20${\scriptsize$\pm 0.44$}\\
$(\spr^{2})^{3}$ & $-136.13${\scriptsize$\pm 0.17$}\\
$\spr^{6}$ & $-136.30${\scriptsize$\pm 0.08$}\\
$(\hyp^{2})^{3}$ & $-136.17${\scriptsize$\pm 0.09$}\\
$\hyp^{6}$ & $-136.24${\scriptsize$\pm 0.32$}\\
$(\poi^{2})^{3}$ & $-136.09${\scriptsize$\pm 0.07$}\\
$\poi^{6}$ & $-136.05${\scriptsize$\pm 0.44$}\\
\midrule
$\spr^{2} \times \euc^{2} \times \poi^{2}$ & $\mathbf{-135.89}${\scriptsize$\pm 0.40$}\\
$\euc^{2} \times \hyp^{2} \times \sph^{2}$ & $-135.93${\scriptsize$\pm 0.48$}\\
\midrule
$(\uni^{2})^{3}$ & $-136.21${\scriptsize$\pm 0.07$}\\
$\uni^{6}$ & $\mathbf{-136.04}${\scriptsize$\pm 0.17$}\\
    \bottomrule
  \end{tabular}
  ~
  \begin{tabular}[t]{l|r}
    \toprule
    \textbf{Model} & LL\\
\midrule
$\euc^{6}$ & $-1896.19${\scriptsize$\pm 2.54$}\\
$\hyp_{-1}^{6}$ & $\mathbf{-1888.23}${\scriptsize$\pm 2.12$}\\
$\poi_{-1}^{6}$ & $-1893.27${\scriptsize$\pm 0.61$}\\
\midrule
$\spr^{6}$ & $-1893.85${\scriptsize$\pm 0.36$}\\
$\sph^{6}$ & $\mathbf{-1889.76}${\scriptsize$\pm 1.62$}\\
$\poi^{6}$ & $-1891.40${\scriptsize$\pm 2.14$}\\
$\spr^{2} \times \euc^{2} \times \poi^{2}$ & $-1899.90${\scriptsize$\pm 4.60$}\\
$\euc^{2} \times \hyp^{2} \times \sph^{2}$ & $-1895.46${\scriptsize$\pm 0.92$}\\
$(\uni^{2})^{3}$ & $-1895.09${\scriptsize$\pm 4.27$}\\
\bottomrule
\end{tabular}
\end{table}

\paragraph{CIFAR-10 reconstruction}

For a latent space of dimension 6, we can observe that
almost all non-Euclidean models perform better than the euclidean baseline $\euc^6$.
Especially well-performing is the fixed hyperboloid $\hyp^6_{-1}$,
and the learnable hypersphere $\sph^6$.
Curvatures for all learnable models on this dataset converge to values in the approximate range of $(-0.15, +0.15)$.






\paragraph{Summary}
In conclusion, a very good model seems to be the Riemannian Normal Poincar\'{e} ball VAE $\RN\;\poi^n$.
However, it has practical limitations due to a rejection sampling algorithm
and an unstable implementation.
On the contrary, von Mises-Fischer spherical VAEs have almost consistently performed
worse than their Wrapped Normal equivalents. Overall, Wrapped Normal VAEs
in all constant curvature manifolds
seem to perform well at modeling the latent space.

A key takeaway is that our universal curvature models $\uni^n$ and $(\uni^2)^{\floor{n/2}}$
seem to generally outperform their corresponding Euclidean VAE baselines
in lower-dimensional latent spaces and, with minor losses,
manage to keep most of the competitive performance as the dimensionality goes up,
contrary to VAEs with other non-Euclidean components.

\section{Conclusion}\label{chap:concl}


By transforming the latent space and associated prior distributions
onto Riemannian manifolds of constant curvature, it has previously been shown that
we can learn representations on curved space.
Generalizing on the above ideas, we have extended the theory of learning
VAEs to products of constant curvature spaces.
To do that, we have derived the necessary operations in several models of constant curvature spaces,
extended existing probability distribution families to these manifolds,
and generalized VAEs to latent spaces that are products of smaller ``component'' spaces, with learnable curvature.
On various datasets, we show that our approach is competitive and additionally has the property that it generalizes the Euclidean variational autoencoder --
if the curvatures of all components go to 0, we recover the VAE of \citet{vae}.


\subsubsection*{Acknowledgments}
We would like to thank Andreas Bloch for help in verifying some of the formulas for constant
curvature spaces and for many insightful discussions; Prof.~Thomas Hofmann and the Data Analytics Lab, the Leonhard cluster, and
ETH Z\"{u}rich for GPU access.

Work was done while all authors were at ETH Z\"urich.

Ondrej Skopek (\href{mailto:oskopek@google.com}{\texttt{oskopek@google.com}}) is now at Google.

Octavian-Eugen Ganea (\href{mailto:oct@mit.edu}{\texttt{oct@mit.edu}}) is now at the Computer Science and Artificial Intelligence Laboratory, Massachusetts Institute of Technology.

Gary B\'{e}cigneul (\href{mailto:gary.becigneul@inf.ethz.ch}{\texttt{gary.becigneul@inf.ethz.ch}}) is funded by the Max Planck ETH Center for Learning Systems.

\bibliography{iclr2020_conference}
\bibliographystyle{iclr2020_conference}

\clearpage

\appendix

\section{Geometrical details}\label{app:geom}

\subsection{Riemannian geometry}
An elementary notion in Riemannian geometry is that of a real, smooth \emph{manifold} $\mc{M} \subseteq \R^n$, which is a collection of real vectors $\vx$ that is locally similar to a linear space, and lives in the \textit{ambient space} $\R^n$.
At each point of the manifold $\vx \in \mc{M}$ a real vector space of the same dimensionality as $\mc{M}$ is defined, called the \emph{tangent space at point $\vx$}: $\tang{\vx}\mc{M}$.
Intuitively, the tangent space contains all the directions and speeds at which one can pass through $\vx$.
Given a matrix representation $G(\vx) \in \R^{n \times n}$ of the \emph{Riemannian metric tensor} $\mt(\vx)$, we can define a \emph{scalar product on the tangent space}:
$\xprod{\cdot, \cdot}{\vx}: \tang{\vx}\mc{M} \times \tang{\vx}\mc{M} \to \mc{M}$, where $\xprod{\va, \vb}{\vx} = \mt(\vx)(\va, \vb) = \va^T G(\vx) \vb$ for any $\va, \vb \in \tang{\vx}\mc{M}$.
A \emph{Riemannian manifold} is then the tuple $(\mc{M}, \mt)$.
The scalar product induces a norm on the tangent space $\tang{\vx}\mc{M}$: $||\va||_{\vx} = \sqrt{\xprod{\va, \va}{\vx}} \; \forall \va \in \tang{\vx}\mc{M}$ \citep{petersen}.

Although it seems like the manifold only defines a local geometry, it induces global quantities by integrating the local contributions. The metric tensor induces a local infinitesimal volume element on each tangent space $\tang{\vx}\mc{M}$ and hence a measure is induced as well $d\mc{M}(\vx) = \sqrt{|G(\vx)|} d\vx$ where $d\vx$ is the Lebesgue measure.
The \emph{length} of a curve $\gamma: t \mapsto \gamma(t) \in \mc{M},\,t \in [0, 1]$ is given by
$L(\gamma) = \int_0^1 \sqrt{\norm{\frac{d}{dt}\gamma(t)}_{\gamma(t)}} dt.$

Straight lines are generalized to constant speed curves giving the shortest path between pairs
of points $\vx$, $\vy \in \mc{M}$, so called \emph{geodesics}, for which it holds that
$\gamma^\ast = \arg\min_\gamma L(\gamma),$ such that $\gamma(0) = \vx$,
$\gamma(1) = \vy$, and $\norm{\frac{d}{dt}\gamma(t)}_{\gamma(t)} = 1$.
Global distances are thus induced on $\mc{M}$ by $d_{\mc{M}}(\vx, \vy) = \inf_\gamma L(\gamma)$.
Using this metric, we can go on to define a metric space $(\mc{M}, d_{\mc{M}})$.
Moving from a point $\vx \in \mc{M}$ in a given direction $\vv \in \tang{\vx}\mc{M}$ with constant velocity is formalized by the \emph{exponential map}: $\expmap{x}: \tang{\vx}\mc{M} \to \mc{M}$.
There exists a unique unit speed geodesic $\gamma$ such that $\gamma(0) = \vx$
and $\at{\frac{d\gamma(t)}{dt}}_{t=0} = \vv$, where $\vv \in \tang{\vx}\mc{M}$.

The corresponding exponential map is then defined as $\expmap{x}(\vv) = \gamma(1)$.
The logarithmic map is the inverse $\log_{\vx} = \expmap{x}^{-1}: \mc{M} \to \tang{\vx}\mc{M}$.
For geodesically complete manifolds, i.e.~manifolds in which there exists a length-minimizing geodesic between every $\vx, \vy \in \mc{M}$, such as the Lorentz model, hypersphere, and many others,
$\expmap{x}$ is well-defined on the full tangent space $\tang{\vx}\mc{M}$.

To connect vectors in tangent spaces, we use parallel transport $\PT_{\vx \to \vy}: \tang{\vx}\mc{M} \to \tang{\vy}\mc{M}$, which is an isomorphism between
the two tangent spaces, so that the transported vectors stay parallel to the connection. It corresponds to moving tangent vectors along geodesics and defines a canonical way to connect tangent spaces.

\subsection{Brief comparison of constant curvature space models}

We have seen five different models of constant curvature space,
each of which has advantages and disadvantages when applied to learning latent
representations in them using VAEs.

A big advantage of the hyperboloid and hypersphere is that optimization in the spaces does not suffer from as many numerical instabilities as it does in the respective projected spaces. On the other hand, we have seen that when $K \to 0$,
the norms of points go to infinity. As we will see in experiments,
this is not a problem when optimizing curvature within these spaces in practice,
except if we're trying to cross the boundary at $K=0$ and go from a hyperboloid to a sphere, or vice versa. Intuitively, the points are just positioned very differently
in the ambient space of $\hyp_{-\eps}$ and $\sph_\eps$, for a small $\eps > 0$.

Since points in the $n$-dimensional projected hypersphere and Poincar\'{e} ball models can be represented using a real vector of length $n$,
it enables us to visualize points in these manifolds directly for $n = 2$ or even $n = 3$.
On the other hand, optimizing a function over these models is not very well-conditioned. In the case of the Poincar\'{e} ball, a significant amount of points lie close to the boundary of the ball (i.e.~with a squared norm of almost $1/K$),
which causes numerical instabilities even when using 64-bit float precision in computations.

A similar problem occurs with the projected hypersphere with points that are far away
from the origin $\B{0}$ (i.e.~points that are close to the ``South pole'' on the backprojected sphere).
Unintuitively, all points that are far away from the origin are actually very close to each other with respect
to the induced distance function and very far away from each other in terms of Euclidean distance.

Both distance conversion theorems (\ref{thm:gyr_dist_poi_converges} and its projected hypersphere counterpart) rely on the points being \emph{fixed} when changing curvature.
If they are somehow dependent on curvature, the convergence theorem does not hold.
We conjecture that if points stay close to the boundary in $\poi$ or far away from $\B{0}$ in $\spr$ as $K \to 0$, this is exactly the reason for numerical instabilities (apart from the standard numerical problem of representing large numbers in floating-point notation).

Because of the above reasons, we do some of our experiments with
the projected spaces and others with the hyperboloid and hypersphere,
and aim to compare the performance of these empirically as well.

\subsection{Euclidean geometry}

\subsubsection{Euclidean space}\label{app:euc_geom}



\paragraph{Distance function}

The distance function in $\euc^n$ is
$$d_\euc(\vx, \vy) = \norm{\vx-\vy}_2.$$
Due to the Pythagorean theorem, we can derive that
\begin{align*}
\norm{\vx-\vy}_2^2 &= \scprod{\vx-\vy, \vx-\vy} = \norm{\vx}_2^2 -2\scprod{\vx, \vy} + \norm{\vy}_2^2\\
&= \norm{\vx}_2^2 + \norm{\vy}_2^2 -2 \norm{\vx}_2 \norm{\vy}_2 \arccos \theta_{\vx, \vy}
\end{align*}

\paragraph{Exponential map}

The exponential map in $\euc^n$ is
$$\exp_\vx(\vv) = \vx + \vv.$$
The fact that the resulting points belong to the space is trivial.
Deriving the inverse function, i.e. the logarithmic map, is also trivial:
$$\log_\vx(\vy) = \vy - \vx.$$



\paragraph{Parallel transport}

We do not need parallel transport in the Euclidean space, as we can directly sample from a Normal distribution.
In other words, we can just define parallel transport to be an identity function.

\subsection{Hyperbolic geometry}
\subsubsection{Hyperboloid}\label{app:hyp_geom}


Do note, that all the theorems for the hypersphere are essentially trivial corollaries of their equivalents in the hypersphere (and vice-versa)
(Section~\ref{app:sph_geom}).
Notable differences include the fact that $R^2 = -\frac{1}{K}$, not $R^2 = \frac{1}{K}$, and all the operations use the hyperbolic trigonometric functions $\sinh$, $\cosh$, and $\tanh$, instead of their Euclidean counterparts.
Also, we often leverage the ``hyperbolic'' Pythagorean theorem, in the form $\cosh^2(\alpha) - \sinh^2(\alpha) = 1$.


\paragraph{Projections}

Due to the definition of the space as a retraction from the ambient space, we can project a generic vector in the ambient space to the hyperboloid using the shortest Euclidean distance by normalization:
$$\proj_{\hyp_K^n}(\vx) = R\frac{\vx}{||\vx||_\Ls} = \frac{\vx}{\sqrt{K}\,||\vx||_\Ls}.$$
Secondly, the $n+1$ coordinates of a point on the hyperboloid are co-dependent;
they satisfy the relation $\lprod{\vx, \vx} = 1/K$. This implies, that
if we are given a vector with $n$ coordinates $\tilde{\vx} = (x_2, \ldots, x_{n+1})$, we can compute the missing
coordinate to place it onto the hyperboloid:
$$x_1 = \sqrt{\norm{\tilde{\vx}}_2^2 - \frac{1}{K}}.$$
This is useful for example in the case of orthogonally projecting
points from $\tang{\vmu_0}\hyp_K^n$ onto the manifold.

\paragraph{Distance function}

The distance function in $\hyp_K^n$ is
$$d_\hyp^K(\vx, \vy) = R \cdot \theta_{\vx, \vy} = R \arccosh\left(-\frac{\lprod{\vx, \vy}}{R^2}\right) = \frac{1}{\sqrt{-K}} \arccosh\left(-K\lprod{\vx, \vy}\right).$$

\begin{remark}[About the divergence of points in $\hyp_K^n$]\label{rem:div_hyp}
Since the points on the hyperboloid $\vx \in \hyp_K^n$ are norm-constrained to
$$\lprod{\vx, \vx} = \frac{1}{K},$$
all the points on the hyperboloid go to infinity as $K$ goes to $0^-$ from below:
$$\lim_{K \to 0^-} \lprod{\vx, \vx} = -\infty.$$
\end{remark}
This confirms the intuition that the hyperboloid grows ``flatter'', but to do that, it has to go away from the origin of the coordinate space $\B{0}$.
A good example of a point that diverges is the origin of the hyperboloid
$\vmu_0^K = (1/K, 0, \ldots, 0)^T = (R, 0, \ldots, 0)^T$.
That makes this model unsuitable for trying to learn sign-agnostic curvatures,
similarly to the hypersphere.

\paragraph{Exponential map}

The exponential map in $\hyp_K^n$ is
$$\exp^K_\vx(\vv) = \cosh\left(\frac{||\vv||_\Ls}{R}\right)\vx + \sinh\left(\frac{||\vv||_\Ls}{R}\right)\frac{R\vv}{||\vv||_\Ls},$$
and in the case of $\vx := \vmu_0 = (R, 0, \ldots, 0)^T$:
$$\exp^K_{\vmu_0}(\vv) = \left(\cosh\left(\frac{||\tilde{\vv}||_2}{R}\right)R;\;\sinh\left(\frac{||\tilde{\vv}||_2}{R}\right)\frac{R}{||\tilde{\vv}||_2}\tilde{\vv}^T\right)^T,$$
where $\vv = (0; \tilde{\vv}^T)^T$ and $||\vv||_\Ls = ||\vv||_2 = ||\tilde{\vv}||_2$.


\begin{theorem}[Logarithmic map in $\hyp_K^n$]\label{thm:hyp_logmap}
For all $\vx, \vy \in \hyp_K^n$, the logarithmic map in $\hyp_K^n$ maps $\vy$ to a tangent vector at $\vx$:
$$\log_\vx^K(\vy) = \frac{\arccosh(\alpha)}{\sqrt{\alpha^2 - 1}}(\vy - \alpha\vx),$$
where $\alpha = K\lprod{\vx, \vy}$.
\end{theorem}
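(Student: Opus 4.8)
The plan is to obtain $\log_\vx^K$ as the inverse of the exponential map $\exp_\vx^K$ stated just above, exploiting two structural facts about the hyperboloid. First, any tangent vector $\vv \in \tang{\vx}\hyp_K^n$ is Lorentz-orthogonal to its base point, i.e.~$\lprod{\vx, \vv} = 0$. Second, points satisfy the norm constraint $\lprod{\vx, \vx} = 1/K = -R^2$, where $R = 1/\sqrt{-K}$. I would begin by setting $\vy = \exp_\vx^K(\vv)$ and introducing the shorthand $t = \norm{\vv}_\Ls / R$ for the normalized geodesic length, so that $\vy = \cosh(t)\,\vx + \sinh(t)\,R\vv/\norm{\vv}_\Ls$, and then solve this relation for $\vv$ in terms of $\vx$ and $\vy$.

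The key step is to read off $t$ by pairing both sides with $\vx$ in the Lorentz product. Using $\lprod{\vx,\vv}=0$ and $\lprod{\vx,\vx}=-R^2$ gives $\lprod{\vx,\vy} = -R^2\cosh(t)$, and hence $\alpha := K\lprod{\vx,\vy} = \cosh(t)$, since $K(-R^2)=1$. This immediately yields the geodesic length $\norm{\vv}_\Ls = R\arccosh(\alpha)$. To recover the direction, I would rearrange the exponential map into $\sinh(t)\,R\vv/\norm{\vv}_\Ls = \vy - \cosh(t)\vx = \vy - \alpha\vx$ and solve for $\vv$, substituting $\norm{\vv}_\Ls = R\arccosh(\alpha)$ and $\sinh(t) = \sqrt{\alpha^2-1}$ via the hyperbolic Pythagorean identity $\cosh^2 t - \sinh^2 t = 1$. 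The factors of $R$ cancel and the stated formula drops out.

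The main obstacle is less the algebra than the bookkeeping of branches and domains. I would need to justify taking the positive root $\sinh(t) = +\sqrt{\alpha^2 - 1}$, which is valid because $t = \norm{\vv}_\Ls/R \ge 0$ and $\alpha = \cosh(t) \ge 1$ guarantees $\alpha^2 - 1 \ge 0$; the degenerate case $\vy = \vx$ corresponds to $\alpha = 1$ and $\vv = \vzero$, handled as a limit. As a consistency check I would also verify that the output genuinely lies in $\tang{\vx}\hyp_K^n$: since $\lprod{\vx, \vy - \alpha\vx} = \lprod{\vx,\vy} - \alpha\lprod{\vx,\vx} = \lprod{\vx,\vy} - \lprod{\vx,\vy} = 0$ (using $\alpha(-R^2) = \lprod{\vx,\vy}$), the vector $\vy - \alpha\vx$ is Lorentz-orthogonal to $\vx$, so the scalar multiple defining $\log_\vx^K(\vy)$ indeed sits in the tangent space, confirming that $\log_\vx^K$ is well-defined and inverts $\exp_\vx^K$.
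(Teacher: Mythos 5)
Your proposal is correct and follows essentially the same route as the paper's proof: invert the exponential map, pair with $\vx$ under the Lorentz product using $\lprod{\vx,\vv}=0$ and $\lprod{\vx,\vx}=-R^2$ to identify $\cosh(t)=\alpha$, then recover $\vv$ via $\sinh(t)=\sqrt{\alpha^2-1}$. Your added tangency check and the observation that $\alpha=\cosh(t)\ge 1$ (where the paper instead invokes Cauchy--Schwarz to justify $|\alpha|>1$) are minor refinements of the same argument.
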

\begin{proof}
We show the detailed derivation of the logarithmic map as an inverse function to the exponential map $\log_\vx(\vy) = \exp^{-1}_\vx(\vy),$ adapted from \citep{phg}.

As mentioned previously,
$$\vy = \exp^K_\vx (\vv) = \cosh\left(\frac{\norm{\vv}_\Ls}{R}\right) \vx + \sinh\left(\frac{\norm{\vv}_\Ls}{R}\right)\frac{R\vv}{\norm{\vv}_\Ls}.$$
Solving for $\vv$, we obtain
$$\vv = \frac{||\vv||_\Ls}{R\sinh\left(\frac{\norm{\vv}_\Ls}{R}\right)}\left(\vy - \cosh\left(\frac{\norm{\vv}_\Ls}{R}\right) \vx\right).$$
However, we still need to rewrite $||\vv||_\Ls$ in evaluatable terms:
$$0 = \lprod{\vx, \vv} = \frac{||\vv||_\Ls}{R\sinh\left(\frac{\norm{\vv}_\Ls}{R}\right)}\left(\lprod{\vx, \vy} - \cosh\left(\frac{\norm{\vv}_\Ls}{R}\right) \underbrace{\lprod{\vx, \vx}}_{-R^2}\right),$$
hence
$$\cosh\left(\frac{\norm{\vv}_\Ls}{R}\right) = -\frac{1}{R^2}\lprod{\vx, \vy},$$
and therefore
$$||\vv||_\Ls = R\arccosh\left(-\frac{1}{R^2}\lprod{\vx, \vy}\right) = \frac{1}{\sqrt{-K}} \arccosh(K \lprod{\vx, \vy}) = d_\hyp^K(\vx, \vy).$$
Plugging the result back into the first equation, we obtain
\begin{align*}
\vv &= \frac{||\vv||_\Ls}{R\sinh\left(\frac{\norm{\vv}_\Ls}{R}\right)}\left(\vy - \cosh\left(\frac{\norm{\vv}_\Ls}{R}\right) \vx\right)\\
&= \frac{R\arccosh\left(\alpha\right)}{R \sinh\left(\frac{1}{R}R\arccosh\left(\alpha\right)\right)}\left(\vy - \cosh\left(\frac{1}{R}R\arccosh\left(\alpha\right)\right) \vx\right)\\
&= \frac{\arccosh(\alpha)}{\sinh (\arccosh(\alpha))}(\vy - \cosh(\arccosh(\alpha))\vx)\\
&= \frac{\arccosh(\alpha)}{\sqrt{\alpha^2 - 1}}(\vy - \alpha\vx),
\end{align*}
where $\alpha = -\frac{1}{R^2}\lprod{\vx, \vy} = K \lprod{\vx, \vy},$ and the last equality assumes $|\alpha| > 1$. This assumption holds, since for all points $\vx, \vy \in \hyp_K^n$ it holds that $\lprod{\vx, \vy} \leq -R^2,$ and $\lprod{\vx, \vy} = -R^2$ if and only if $\vx = \vy$, due to Cauchy-Schwarz \cite[Theorem~3.1.6]{ratcliffe}. Hence, the only case where this would be a problem would be if $\vx = \vy$, but it is clear that the result in that case is $\vu = \B{0}$.
\end{proof}

\paragraph{Parallel transport}

Using the generic formula for parallel transport in manifolds for $\vx, \vy \in \mc{M}$ and $\vv \in \tang{\vx}\mc{M}$
\begin{equation}
\PT^K_{\vx \to \vy}(\vv) = \vv - \frac{\xprod{\log^K_\vx(\vy), \vv}{\vx}}{d_\mc{M}(\vx, \vy)}(\log^K_\vx(\vy) + \log^K_\vy(\vx)),
\end{equation}\label{eq:pt_def}
and the logarithmic map formula from Theorem~\ref{thm:hyp_logmap}
$$\log_\vx^K(\vy) = \frac{\arccosh(\alpha)}{\sqrt{\alpha^2 - 1}}(\vy - \alpha\vx),$$
where $\alpha = -\frac{1}{R^2}\lprod{\vx, \vy},$ we derive parallel transport in $\hyp_K^n$:
$$\PT^K_{\vx \to \vy}(\vv) = \vv + \frac{\lprod{\vy,\vv}}{R^2 - \lprod{\vx, \vy}}(\vx + \vy).$$
A special form of parallel transport exists for when the source vector is $\vmu_0 = (R, 0, \ldots, 0)^T$:
$$\PT^K_{\vmu_0 \to \vy}(\vv) = \vv + \frac{\scprod{\vy, \vv}}{R^2 + Ry_1} \begin{pmatrix}
y_1 + R\\
y_2\\
\vdots\\
y_{n+1}
\end{pmatrix}.$$

\subsubsection{Poincar\'{e} ball}\label{app:poi_geom}


Do note, that all the theorems for the projected hypersphere are essentially trivial corollaries of their equivalents in the Poincar\'{e} ball (and vice-versa)
(Section~\ref{app:spr_geom}).
Notable differences include the fact that $R^2 = -\frac{1}{K}$, not $R^2 = \frac{1}{K}$, and all the operations use the hyperbolic trigonometric functions $\sinh$, $\cosh$, and $\tanh$, instead of their Euclidean counterparts.
Also, we often leverage the ``hyperbolic'' Pythagorean theorem, in the form $\cosh^2(\alpha) - \sinh^2(\alpha) = 1$.


\paragraph{Stereographic projection}\label{sec:sproj_poi}


\begin{theorem}[Stereographic backprojected points of $\poi_K^n$ belong to $\hyp_K^n$]\label{thm:backproj_poi}
For all $\vy \in \poi_K^n$,
\begin{align*}
\norm{\rho_K^{-1}(\vy)}_\Ls^2 = \frac{1}{K}.
\end{align*}
\end{theorem}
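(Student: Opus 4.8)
The plan is to verify the identity by directly substituting the explicit formula for $\rho_K^{-1}$ into the Lorentz quadratic form and simplifying. First I would abbreviate $r^2 = \norm{\vy}_2^2$ and let $D = 1 + K r^2$ denote the common denominator appearing in both the time-like and the space-like components of $\rho_K^{-1}(\vy)$. Writing $\vz = \rho_K^{-1}(\vy) = (z_1; \tilde{\vz}^T)^T$, the first coordinate is $z_1 = \frac{1}{\sqrt{|K|}}\frac{1 - K r^2}{D}$ and the remaining block is $\tilde{\vz} = \frac{2\vy}{D}$, so that $\norm{\tilde{\vz}}_2^2 = 4 r^2 / D^2$. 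These are the only quantities entering the Lorentz norm.

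Next I would expand $\lprod{\vz, \vz} = -z_1^2 + \norm{\tilde{\vz}}_2^2$. The crucial bookkeeping step is that, since the Poincar\'{e} ball corresponds to $K < 0$, one has $|K| = -K$ and hence $-1/|K| = 1/K$; substituting this turns the time-like contribution $-z_1^2$ into $\frac{1}{K}(1 - K r^2)^2 / D^2$. Pulling out the common $1/D^2$, the numerator becomes $\frac{1}{K}(1 - K r^2)^2 + 4 r^2$, and after expanding $(1 - K r^2)^2 = 1 - 2K r^2 + K^2 r^4$ I expect this to collapse to $\frac{1}{K}(1 + K r^2)^2 = \frac{1}{K} D^2$. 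The factor $D^2$ then cancels, leaving exactly $1/K$, which is the defining Lorentz norm of $\hyp_K^n$, as claimed.

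The computation is essentially mechanical, so the only genuine subtlety — and the step I would treat most carefully — is the sign accounting around $K < 0$: one must consistently use $|K| = -K$ so that the time-like term enters with $+\frac{1}{K}$ rather than $-\frac{1}{|K|}$; only with this sign does the middle term $-2 r^2$ from the expansion combine with the $+4 r^2$ from the space-like part to produce the $+2 r^2$ demanded by the target $\frac{1}{K}(1 + K r^2)^2$. A secondary point worth recording is that $D = 1 + K r^2 \neq 0$ for every $\vy \in \poi_K^n$: since $\vy$ satisfies $\norm{\vy}_2^2 < -1/K$, multiplying by $K < 0$ and reversing the inequality gives $K r^2 > -1$, so $D > 0$ and $\rho_K^{-1}$ is well-defined on the entire ball. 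Granting these, the whole theorem reduces to the single algebraic identity $\frac{1}{K}(1 - K r^2)^2 + 4 r^2 = \frac{1}{K}(1 + K r^2)^2$, after which the common factor $(1 + K r^2)^2$ cancels and the value $1/K$ drops out.
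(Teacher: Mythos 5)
Your proposal is correct and follows essentially the same route as the paper's own proof: direct substitution of $\rho_K^{-1}(\vy)$ into the Lorentz form, the sign bookkeeping $|K| = -K$ for $K<0$, and the collapse of $\frac{1}{K}\left(1-K\norm{\vy}_2^2\right)^2 + 4\norm{\vy}_2^2$ into $\frac{1}{K}\left(1+K\norm{\vy}_2^2\right)^2$ followed by cancellation of the common denominator. Your added remark that $1+K\norm{\vy}_2^2 > 0$ on the ball is a small well-definedness check the paper leaves implicit, but it does not alter the argument.
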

\begin{proof}
\begin{align*}
\norm{\rho_K^{-1}(\vy)}_\Ls^2 &= \norm{\left(\frac{1}{\sqrt{|K|}}\frac{K\norm{\vy}_2^2 - 1}{K\norm{\vy}_2^2 + 1}; \frac{2\vy^T}{K\norm{\vy}_2^2 + 1}\right)^T}_\Ls^2\\
&= -\left(\frac{1}{\sqrt{|K|}}\frac{K\norm{\vy}_2^2 - 1}{K\norm{\vy}_2^2 + 1}\right)^2 + \frac{4\norm{\vy}^2_2}{(K\norm{\vy}_2^2 + 1)^2}\\
&= \frac{1}{|K|}\frac{-(K\norm{\vy}_2^2 - 1)^2 + 4|K|\norm{\vy}_2^2}{(K\norm{\vy}_2^2 + 1)^2}\\
&= \frac{1}{-K}\frac{-(K\norm{\vy}_2^2 - 1)^2 - 4K\norm{\vy}_2^2}{(K\norm{\vy}_2^2 + 1)^2}\\
&= \frac{1}{K}\frac{(K\norm{\vy}_2^2 - 1)^2 + 4K\norm{\vy}_2^2}{(K\norm{\vy}_2^2 + 1)^2}\\
&= \frac{1}{K} \frac{K^2\norm{\vy}_2^4 + 2K\norm{\vy}_2^2 + 1}{(K\norm{\vy}_2^2 + 1)^2}\\
&= \frac{1}{K} \frac{(K\norm{\vy}_2^2 + 1)^2}{(K\norm{\vy}_2^2 + 1)^2} = \frac{1}{K}.
\end{align*}
\end{proof}

\paragraph{Distance function}

The distance function in $\poi_K^n$ is (derived from the hyperboloid distance function using the stereographic projection $\rho_K$):
\begin{align*}
d_\poi(\vx, \vy) &= d_\hyp(\rho_K^{-1}(\vx), \rho_K^{-1}(\vy))\\
&= \frac{1}{\sqrt{-K}} \arccosh\left(1 - \frac{2K\norm{\vx-\vy}_2^2}{(1+K\norm{\vx}_2^2)(1+K\norm{\vy}_2^2)}\right)\\
&= R \arccosh\left(1 + \frac{2R^2\norm{\vx-\vy}_2^2}{(R^2-\norm{\vx}_2^2)(R^2-\norm{\vy}_2^2)}\right)\\
\end{align*}

\begin{theorem}[Distance equivalence in $\poi_K^n$]\label{thm:gyro_dist_eq_poi_dist}
For all $K < 0$ and for all pairs of points $\vx, \vy \in \poi^n_K$, the Poincar\'{e} distance between them equals the gyrospace distance 
$$d_\poi(\vx, \vy) = d_{\poi_\text{gyr}}(\vx, \vy).$$
\end{theorem}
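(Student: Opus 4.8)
The plan is to reduce both sides to a common form by means of a hyperbolic double-angle identity, after which the statement collapses to a single algebraic identity about the Möbius sum. Both formulas carry the same prefactor $\frac{1}{\sqrt{-K}}$ (the gyrospace distance bearing an extra factor of $2$), so the claim is equivalent to
$$\arccosh\left(1 - \frac{2K\norm{\vx-\vy}_2^2}{(1+K\norm{\vx}_2^2)(1+K\norm{\vy}_2^2)}\right) = 2\arctanh\left(\sqrt{-K}\,\norm{-\vx \kplus \vy}_2\right).$$
The key tool is the elementary identity $\arccosh\left(\frac{1+s^2}{1-s^2}\right) = 2\arctanh(s)$, valid for $s \in [0,1)$, which follows from $\cosh(2u) = 1 + 2\sinh^2 u$ together with $\sinh^2 u = s^2/(1-s^2)$ when $s = \tanh u$. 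Applying it with $s = \sqrt{-K}\,\norm{-\vx \kplus \vy}_2$ (which lies in $[0,1)$ because $-\vx \kplus \vy$ again lies in the ball of radius $R = 1/\sqrt{-K}$), the goal reduces to the scalar equation
$$1 - \frac{2K\norm{\vx-\vy}_2^2}{(1+K\norm{\vx}_2^2)(1+K\norm{\vy}_2^2)} = \frac{1 - K\norm{-\vx \kplus \vy}_2^2}{1 + K\norm{-\vx \kplus \vy}_2^2}.$$

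Next I would clear denominators in this scalar equation. Writing $w = \norm{-\vx \kplus \vy}_2^2$, $E = (1+K\norm{\vx}_2^2)(1+K\norm{\vy}_2^2)$, and $F = \norm{\vx-\vy}_2^2$, cross-multiplying and cancelling the $1$'s and first-order terms leaves $w\,E = F + KFw$, hence $w = F/(E - KF)$. Expanding $E - KF$ collapses to $1 + 2K\scprod{\vx, \vy} + K^2\norm{\vx}_2^2\norm{\vy}_2^2$, which is exactly the denominator $D$ appearing in the definition of $\kplus$. Thus the whole theorem reduces to the single gyrovector identity
$$\norm{-\vx \kplus \vy}_2^2 = \frac{\norm{\vx-\vy}_2^2}{1 + 2K\scprod{\vx, \vy} + K^2\norm{\vx}_2^2\norm{\vy}_2^2}.$$

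To close this last step I would expand the Möbius sum directly. Substituting $-\vx$ into the Möbius formula and using $\scprod{-\vx, \vy} = -\scprod{\vx, \vy}$ and $\norm{-\vx}_2 = \norm{\vx}_2$, one writes $-\vx \kplus \vy = N/D$ with $N = -(1 + 2K\scprod{\vx, \vy} - K\norm{\vy}_2^2)\vx + (1 + K\norm{\vx}_2^2)\vy$, so that $\norm{-\vx \kplus \vy}_2^2 = \norm{N}_2^2/D^2$. It then suffices to verify $\norm{N}_2^2 = D\cdot\norm{\vx-\vy}_2^2$, so one factor of $D$ cancels and the claimed $\norm{\vx-\vy}_2^2/D$ remains.

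The hard part will be the expansion of $\norm{N}_2^2 = a^2\norm{\vx}_2^2 - 2ab\scprod{\vx, \vy} + b^2\norm{\vy}_2^2$ with $a = 1 + 2K\scprod{\vx, \vy} - K\norm{\vy}_2^2$ and $b = 1 + K\norm{\vx}_2^2$: this is a degree-four polynomial in the three scalars $\norm{\vx}_2^2$, $\norm{\vy}_2^2$, $\scprod{\vx, \vy}$, and one must check it factors as $(1 + 2K\scprod{\vx, \vy} + K^2\norm{\vx}_2^2\norm{\vy}_2^2)(\norm{\vx}_2^2 - 2\scprod{\vx, \vy} + \norm{\vy}_2^2)$. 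This is purely mechanical but error-prone bookkeeping of cross terms; everything else is either the one-line hyperbolic identity or routine cancellation. A minor side check is the domain bookkeeping — confirming $s \in [0,1)$ so $\arctanh$ is defined and the $\arccosh$ argument exceeds $1$ — which follows from $\vx$, $\vy$, and their Möbius sum all lying strictly inside the ball.
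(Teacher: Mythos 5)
Your proposal is correct, and it is a genuinely different route from what the paper does: the paper offers no human-readable argument at all for Theorem~\ref{thm:gyro_dist_eq_poi_dist}, stating only that the identity was ``proven using Mathematica'' with heavy algebra. Your derivation replaces that black-box verification with a structured hand proof, and each step checks out. The double-angle identity $\arccosh\bigl((1+s^2)/(1-s^2)\bigr) = 2\arctanh(s)$ correctly converts the claim into the scalar equation you state; clearing denominators does give $w = F/(E-KF)$ with $E - KF$ collapsing to $D = 1 + 2K\scprod{\vx,\vy} + K^2\norm{\vx}_2^2\norm{\vy}_2^2$; and the final factorization holds: writing $a = 1 + 2K\scprod{\vx,\vy} - K\norm{\vy}_2^2$ and $b = 1 + K\norm{\vx}_2^2$, the expansion of $a^2\norm{\vx}_2^2 - 2ab\scprod{\vx,\vy} + b^2\norm{\vy}_2^2$ has its $K$-linear terms collecting to $2K\scprod{\vx,\vy}\,\norm{\vx-\vy}_2^2$ and its $K^2$ terms to $K^2\norm{\vx}_2^2\norm{\vy}_2^2\,\norm{\vx-\vy}_2^2$, so the whole expression factors as $D\,\norm{\vx-\vy}_2^2$ exactly as you predict. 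Your domain bookkeeping is also right: closure of $\kplus$ on the ball gives $s \in [0,1)$, and $K<0$ makes the $\arccosh$ argument at least $1$. What your approach buys is a self-contained, checkable proof whose only nontrivial computation is a single degree-four polynomial identity; what the paper's approach buys is brevity and outsourced certainty, at the cost of leaving the reader nothing to verify or learn from. Your version would strengthen the paper.
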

\begin{proof}
Proven using Mathematica (File: \texttt{distance\_limits.ws}), proof involves heavy algebra.
\end{proof}

\begin{theorem}[Gyrospace distance converges to Euclidean in $\poi_K^n$]\label{thm:gyr_dist_poi_converges}
For any \emph{fixed} pair of points $\vx, \vy \in \poi^n_K$, the Poincar\'{e} gyrospace distance between them converges to the Euclidean distance in the limit (up to a constant) as $K \to 0^-$:
$$\lim_{K \to 0^-} d_{\poi_\text{gyr}}(\vx, \vy) = 2 \norm{\vx - \vy}_2.$$
\end{theorem}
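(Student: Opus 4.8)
The plan is to collapse the whole statement onto the elementary limit $\lim_{u \to 0} \arctanh(u)/u = 1$, combined with the continuity of Möbius addition in the curvature parameter. Setting $t = \sqrt{-K}$ (so $t \to 0^+$ as $K \to 0^-$) and abbreviating $r_K = \norm{-\vx \kplus \vy}_2$, the gyrospace distance from Table~\ref{tab:summary_ops_proj} reads $d_{\poi_\text{gyr}}(\vx, \vy) = \frac{2}{t}\arctanh(t\, r_K)$. The argument then splits into two independent pieces: showing $r_K \to \norm{\vx - \vy}_2$, and controlling the $\frac{1}{t}\arctanh(t\,\cdot)$ prefactor.

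First I would compute $-\vx \kplus \vy$ directly from the definition of Möbius addition, substituting $\scprod{-\vx, \vy} = -\scprod{\vx, \vy}$ and $\norm{-\vx}_2 = \norm{\vx}_2$. Both the numerator and the denominator are polynomials in $K$: every monomial carrying a positive power of $K$ vanishes in the limit, leaving numerator $\to \vy - \vx$ and denominator $\to 1$. Hence $-\vx \kplus \vy \xrightarrow{K \to 0^-} \vy - \vx$, and by continuity of the Euclidean norm, $r_K \to \norm{\vx - \vy}_2 =: r_0$. Next I would rewrite the distance so that each factor has a transparent limit:
\begin{align*}
d_{\poi_\text{gyr}}(\vx, \vy) = \frac{2}{t}\arctanh(t\, r_K) = 2\, r_K \cdot \frac{\arctanh(t\, r_K)}{t\, r_K}.
\end{align*}
Because $r_K$ stays bounded, $t\, r_K \to 0$, so the fraction tends to $1$ by the standard limit while $r_K \to r_0$; the product of limits yields $2 r_0 = 2\norm{\vx - \vy}_2$, as claimed.

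The main obstacle is not a genuine difficulty but a pair of caveats that must be handled cleanly. The degenerate case $\vx = \vy$ makes $r_K = 0$, so the factorization above produces the indeterminate $\arctanh(0)/0$; I would dispatch it separately, noting $-\vx \kplus \vy = \B{0}$ forces $d_{\poi_\text{gyr}}(\vx,\vy) = 0 = 2\norm{\vx-\vy}_2$ trivially. The second caveat is precisely where the \emph{fixed pair of points} hypothesis enters: since $\poi_K^n = \{\vx : \scprod{\vx,\vx} < -1/K\}$ expands to fill $\R^n$ as $K \to 0^-$, any fixed $\vx, \vy$ eventually lie in the ball for $|K|$ small, guaranteeing $t\, r_K < 1$ so that $\arctanh$ stays in its domain and $r_K$ does not blow up. (Were the points allowed to drift toward the boundary as $K \to 0^-$, this control would fail — the reason the theorem emphasizes fixed points.)
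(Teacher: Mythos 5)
Your proposal is correct and follows essentially the same route as the paper's proof: both reduce the statement to the two ingredients of M\"{o}bius-addition convergence ($-\vx \kplus \vy \to \vy - \vx$, which the paper cites as Theorem~\ref{thm:lim_mob_add} and you re-derive inline) and an elementary $\arctanh$ limit. Your factorization $d_{\poi_\text{gyr}}(\vx,\vy) = 2\,r_K \cdot \arctanh(t\,r_K)/(t\,r_K)$ is in fact a cleaner way to handle the joint limit in $(t, r_K)$ than the paper's informal appeal to ``limits of composed functions,'' and your separate treatment of the degenerate case $\vx = \vy$ (forced by that factorization) plus the domain remark are appropriate additions rather than deviations.
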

\begin{proof}
\begin{align*}
\lim_{K \to 0^-} d_{\poi_\text{gyr}}(\vx, \vy) &= 2\lim_{K \to 0^-} \left[ \frac{\arctanh(\sqrt{-K}\norm{-\vx\kplus\vy}_2)}{\sqrt{-K}}\right]\\
&= 2\lim_{K \to 0^-} \left[ \frac{\arctanh(\sqrt{-K}\norm{\vy-\vx}_2)}{\sqrt{-K}}\right]\\
&= 2\norm{\vy-\vx}_2,\\
\end{align*}
where the second equality holds because of the theorem of limits of composed functions, where
\begin{align*}
f(a) &= \frac{\arctanh(a\sqrt{-K})}{\sqrt{-K}}\\
g(K) &= \norm{-\vx\kplus\vy}_2.
\end{align*}
We see that
\begin{align*}
\lim_{K \to 0^-} g(K) = \norm{\vy-\vx}_2
\end{align*}
due to Theorem~\ref{thm:lim_mob_add}, and 
\begin{align*}
\lim_{a \to \norm{\vx-\vy}_2} f(a) = \frac{\arctanh(a\sqrt{-K})}{\sqrt{-K}}
\end{align*}
Additionally for the last equality, we need the fact that
\begin{align*}
\lim_{x \to 0} \frac{\arctanh(a\sqrt{|x|})}{\sqrt{|x|}} = a.
\end{align*}
\end{proof}

\begin{theorem}[Distance converges to Euclidean as $K \to 0^-$ in $\poi_K^n$]\label{thm:dist_poi_converges}
For any \emph{fixed} pair of points $\vx, \vy \in \poi^n_K$, the Poincar\'{e} distance between them converges to the Euclidean distance in the limit (up to a constant) as $K \to 0^-$:
$$\lim_{K \to 0^-} d_\poi(\vx, \vy) = 2 \norm{\vx - \vy}_2.$$
\end{theorem}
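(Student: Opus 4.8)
The plan is to reduce the claim to the two facts already established for the gyrospace distance, and then, as an independent sanity check, to verify it directly from the closed-form Poincar\'e distance formula.

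First I would invoke Theorem~\ref{thm:gyro_dist_eq_poi_dist}, which states that for every fixed $K<0$ and every pair $\vx,\vy\in\poi_K^n$ the two distances coincide: $d_\poi(\vx,\vy)=d_{\poi_\text{gyr}}(\vx,\vy)$. This is an \emph{identity} valid separately at each negative curvature, so it is inherited by the one-sided limit $K\to 0^-$ without any interchange-of-limits issue. Composing with Theorem~\ref{thm:gyr_dist_poi_converges}, which already shows $\lim_{K\to0^-}d_{\poi_\text{gyr}}(\vx,\vy)=2\norm{\vx-\vy}_2$, yields $\lim_{K\to0^-}d_\poi(\vx,\vy)=2\norm{\vx-\vy}_2$ immediately. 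This is the shortest route, and I would make it the main argument.

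For a self-contained verification I would instead start from the explicit formula $d_\poi(\vx,\vy)=\frac{1}{\sqrt{-K}}\arccosh\!\left(1-\frac{2K\norm{\vx-\vy}_2^2}{(1+K\norm{\vx}_2^2)(1+K\norm{\vy}_2^2)}\right)$ and set $u=-K>0$. As $u\to0^+$ the argument of $\arccosh$ tends to $1$ from above, with perturbation $\epsilon(u)=\frac{2u\norm{\vx-\vy}_2^2}{(1-u\norm{\vx}_2^2)(1-u\norm{\vy}_2^2)}\to0$. Using the asymptotic $\arccosh(1+\epsilon)=\sqrt{2\epsilon}\,(1+O(\epsilon))$ as $\epsilon\to0^+$, the prefactor $u^{-1/2}$ cancels exactly one power of $\sqrt{u}$ coming from $\sqrt{2\epsilon}\sim 2\sqrt{u}\,\norm{\vx-\vy}_2/\sqrt{(1-u\norm{\vx}_2^2)(1-u\norm{\vy}_2^2)}$, leaving $2\norm{\vx-\vy}_2/\sqrt{(1-u\norm{\vx}_2^2)(1-u\norm{\vy}_2^2)}$, whose limit as $u\to0^+$ is $2\norm{\vx-\vy}_2$.

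The clean argument has essentially no obstacle, since passing an exact pointwise identity to the limit is trivial; all the content is imported from the two cited theorems (and Theorem~\ref{thm:gyro_dist_eq_poi_dist} is itself proved by symbolic computation). In the direct route the one place requiring care is the square-root singularity of $\arccosh$ at $1$: I would justify $\arccosh(1+\epsilon)\sim\sqrt{2\epsilon}$ rigorously, e.g.\ by inverting $\cosh t=1+t^2/2+O(t^4)$ near $t=0$, and confirm that the $O(\epsilon)$ correction, multiplied by the bounded factor $u^{-1/2}\sqrt{u}=O(1)$, still vanishes after the cancellation, so that the heuristic asymptotic does deliver the exact limit. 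Because $\vx,\vy$ are held fixed, all denominators stay bounded away from $0$ for small $u$, so no uniformity problems arise.
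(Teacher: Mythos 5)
Your main argument is exactly the paper's proof: Theorem~\ref{thm:dist_poi_converges} is proved there by simply combining the pointwise identity of Theorem~\ref{thm:gyro_dist_eq_poi_dist} with the limit in Theorem~\ref{thm:gyr_dist_poi_converges}, precisely as you do, and your observation that an exact identity at each fixed $K<0$ passes to the one-sided limit with no interchange issues is the (implicit) content of that two-line proof. Your supplementary direct computation is also correct: with $u=-K$, the expansion $\arccosh(1+\epsilon)=\sqrt{2\epsilon}\,(1+O(\epsilon))$ and the cancellation of the $\sqrt{u}$ factors go through, and since $\vx,\vy$ are fixed the denominators $(1-u\norm{\vx}_2^2)(1-u\norm{\vy}_2^2)$ stay bounded away from $0$ for small $u$. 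This second route in fact adds something the paper does not have: Theorem~\ref{thm:gyro_dist_eq_poi_dist} is justified there only by a Mathematica computation, so your asymptotic argument from the closed-form distance gives an independent, human-checkable derivation of the limit that bypasses the symbolic identity entirely.
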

\begin{proof}
Theorem~\ref{thm:gyro_dist_eq_poi_dist} and \ref{thm:gyr_dist_poi_converges}.
\end{proof}

\paragraph{Exponential map}

As derived and proven in \citet{hnn}, the exponential map in $\poi_K^n$ and its inverse is
\begin{align*}
\exp_\vx^K(\vv) &= \vx \kplus \left(\tanh\left(\sqrt{-K}\frac{\lambda_\vx^K\norm{\vv}_2}{2}\right)\frac{\vv}{\sqrt{-K}\norm{\vv}_2}\right)\\
\log_\vx^K(\vy) &= \frac{2}{\sqrt{-K}\lambda_\vx^K} \arctanh\left(\sqrt{-K}\norm{-\vx \kplus \vy}_2\right)\frac{-\vx \kplus \vy}{\norm{-\vx \kplus \vy}_2}
\end{align*}
In the case of $\vx := \vmu_0 = (0, \ldots, 0)^T$ they simplify to:
\begin{align*}
\exp^K_{\vmu_0}(\vv) &= \tanh\left(\sqrt{-K}\norm{\vv}_2\right)\frac{\vv}{\sqrt{-K}\norm{\vv}_2}\\
\log_{\vmu_0}^K(\vy) &= \arctanh\left(\sqrt{-K}\norm{\vy}_2\right)\frac{\vy}{\norm{\vy}_2}.
\end{align*}


\paragraph{Parallel transport}

\citet{geoopt,hnn} have also derived and implemented the parallel transport operation for the Poincar\'{e} ball:
\begin{align*}
\PT^K_{\vx \to \vy}(\vv) = \frac{\lambda_\vx^K}{\lambda_\vy^K} \gyr{\vy, -\vx}\vv,\\
\PT^K_{\vmu_0 \to \vy}(\vv) = \frac{2}{\lambda_\vy^K}\vv,\\
\PT^K_{\vx \to \vmu_0}(\vv) = \frac{\lambda_\vx^K}{2}\vv,
\end{align*}
where
\begin{align*}
\gyr{\vx, \vy}\vv = -(\vx \kplus \vy) \kplus (\vx \kplus (\vy \kplus \vv))
\end{align*}
is the gyration operation \citep[Definition 1.11]{ungar}.



Unfortunately, on the Poincar\'{e} ball, $\xprod{\cdot, \cdot}{\vx}$ has
a form that changes with respect to $\vx$, unlike in the hyperboloid.



\subsection{Spherical geometry}

\subsubsection{Hypersphere}\label{app:sph_geom}


All the theorems for the hypersphere are essentially trivial corollaries of their equivalents in the hyperboloid (Section~\ref{app:hyp_geom}). Notable differences include the fact that $R^2 = \frac{1}{K}$, not $R^2 = -\frac{1}{K}$, and all the operations use the Euclidean trigonometric functions $\sin$, $\cos$, and $\tan$, instead of their hyperbolic counterparts.
Also, we often leverage the Pythagorean theorem, in the form $\sin^2(\alpha) + \cos^2(\alpha) = 1$.


\paragraph{Projections}

Due to the definition of the space as a retraction from the ambient space, we can project a generic vector in the ambient space to the hypersphere using the shortest Euclidean distance by normalization:
$$\proj_{\sph_K^{n-1}}(\vx) = R\frac{\vx}{||\vx||_2} = \frac{\vx}{\sqrt{K}\,||\vx||_2}.$$
Secondly, the $n+1$ coordinates of a point on the sphere are co-dependent;
they satisfy the relation $\scprod{\vx, \vx} = 1/K$. This implies, that
if we are given a vector with $n$ coordinates $\tilde{\vx} = (x_2, \ldots, x_{n+1})$, we can compute the missing
coordinate to place it onto the sphere:
$$x_1 = \sqrt{\frac{1}{K} - \norm{\tilde{\vx}}_2^2}.$$
This is useful for example in the case of orthogonally projecting
points from $\tang{\vmu_0}\sph_K^n$ onto the manifold.

\paragraph{Distance function}

The distance function in $\sph_K^n$ is
$$d_\sph^K(\vx, \vy) = R \cdot \theta_{\vx, \vy} = R \arccos\left(\frac{\scprod{\vx, \vy}}{R^2}\right) = \frac{1}{\sqrt{K}} \arccos\left(K\scprod{\vx, \vy}\right).$$

\begin{remark}[About the divergence of points in $\sph_K^n$]\label{rem:div_sph}
Since the points on the hypersphere $\vx \in \sph_K^n$ are norm-constrained to
$$\scprod{\vx, \vx} = \frac{1}{K},$$
all the points on the sphere go to infinity as $K$ goes to $0^+$ from above:
$$\lim_{K \to 0^+} \scprod{\vx, \vx} = \infty.$$
\end{remark}
This confirms the intuition that the sphere grows ``flatter'', but to do that, it has to go away from the origin of the coordinate space $\B{0}$.
A good example of a point that diverges is the north pole of the sphere
$\vmu_0^K = (1/K, 0, \ldots, 0)^T = (R, 0, \ldots, 0)^T$.
That makes this model unsuitable for trying to learn sign-agnostic curvatures,
similarly to the hyperboloid.

\paragraph{Exponential map}

The exponential map in $\sph_K^n$ is
$$\exp^K_\vx(\vv) = \cos\left(\frac{||\vv||_2}{R}\right)\vx + \sin\left(\frac{||\vv||_2}{R}\right)\frac{R\vv}{||\vv||_2}.$$


\begin{theorem}[Logarithmic map in $\sph_K^n$]
For all $\vx, \vy \in \sph_K^n$, the logarithmic map in $\sph_K^n$ maps $\vy$ to a tangent vector at $\vx$:
$$\log_\vx^K(\vy) = \frac{\arccos(\alpha)}{\sqrt{1 - \alpha^2}}(\vy - \alpha\vx),$$
where $\alpha = K\scprod{\vx, \vy}$.
\end{theorem}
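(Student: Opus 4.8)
The plan is to invert the exponential map, exactly mirroring the derivation of Theorem~\ref{thm:hyp_logmap} for the hyperboloid. By the duality between the two spaces one replaces the hyperbolic functions $\cosh, \sinh$ by $\cos, \sin$, the Lorentz product $\lprod{\cdot,\cdot}$ by the Euclidean product $\scprod{\cdot,\cdot}$, and uses $R^2 = 1/K$ in place of $R^2 = -1/K$; the algebraic skeleton is otherwise identical, so the computation should go through verbatim up to these substitutions.

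First I would start from $\vy = \exp_\vx^K(\vv) = \cos(\norm{\vv}_2/R)\vx + \sin(\norm{\vv}_2/R)\frac{R\vv}{\norm{\vv}_2}$ and solve for $\vv$, treating $\norm{\vv}_2$ as an as-yet-unknown scalar:
$$\vv = \frac{\norm{\vv}_2}{R\sin(\norm{\vv}_2/R)}\left(\vy - \cos(\norm{\vv}_2/R)\vx\right).$$
To turn this into an evaluatable formula I would pin down $\norm{\vv}_2$ using the fact that a tangent vector at $\vx$ is Euclidean-orthogonal to $\vx$, i.e.\ $\scprod{\vx,\vv}=0$, together with the manifold constraint $\scprod{\vx,\vx}=R^2$. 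Taking $\scprod{\vx,\cdot}$ of the displayed identity gives $\scprod{\vx,\vy} - \cos(\norm{\vv}_2/R)R^2 = 0$, hence $\cos(\norm{\vv}_2/R) = \frac{1}{R^2}\scprod{\vx,\vy} = \alpha$ and $\norm{\vv}_2 = R\arccos(\alpha) = d_\sph^K(\vx,\vy)$. Substituting back and using $\sin(\arccos(\alpha)) = \sqrt{1-\alpha^2}$ then collapses the expression to $\frac{\arccos(\alpha)}{\sqrt{1-\alpha^2}}(\vy - \alpha\vx)$, as claimed.

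The main obstacle is domain validity at the extremes of $\alpha$. Here $\alpha = K\scprod{\vx,\vy}$ and, by the Cauchy--Schwarz inequality, $|\scprod{\vx,\vy}| \le \norm{\vx}_2\norm{\vy}_2 = R^2$, so $|\alpha|\le 1$ and both $\arccos(\alpha)$ and $\sqrt{1-\alpha^2}$ are defined. The formula as written requires $|\alpha|<1$; the endpoint $\alpha=1$ occurs exactly when $\vx=\vy$, where the correct value is $\vv=\B{0}$ and one checks that the limit $\arccos(\alpha)/\sqrt{1-\alpha^2}\to 1$ recovers it. The genuinely new difficulty, absent in the hyperboloid case (where instead $|\alpha|>1$ with equality only at $\vx=\vy$), is the antipodal endpoint $\alpha=-1$, i.e.\ $\vy=-\vx$: this is the cut locus of the sphere, where $\exp_\vx^K$ fails to be injective and the logarithm is not single-valued, so the statement should be read as excluding that pair (equivalently, restricting $\vy$ to the open geodesic ball of radius $\pi R$ about $\vx$). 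I would flag this restriction explicitly, since it is the one place where the ``trivial corollary'' transfer from the hyperboloid is not completely faithful.
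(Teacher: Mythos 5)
Your proposal is correct and follows essentially the same route as the paper's proof: invert the exponential map, use $\scprod{\vx,\vv}=0$ together with $\scprod{\vx,\vx}=R^2$ to identify $\cos(\norm{\vv}_2/R)=\alpha$ and hence $\norm{\vv}_2 = R\arccos(\alpha)$, then substitute back. In fact your handling of the boundary cases is more careful than the paper's: the paper's proof carries over the phrase ``the last equality assumes $|\alpha|>1$'' verbatim from the hyperboloid derivation (where that is the right condition), even though on the sphere the requirement is $|\alpha|<1$, and its Cauchy--Schwarz remark only disposes of the endpoint $\alpha=1$ (i.e.\ $\vx=\vy$, where $\vv=\B{0}$); it is silent about the antipodal endpoint $\alpha=-1$, where $\vy=-\vx$ lies in the cut locus of $\vx$ and the logarithm is genuinely not single-valued, so your explicit exclusion of that pair is a needed correction rather than pedantry.
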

\begin{proof}
Analogous to the proof of Theorem~\ref{thm:hyp_logmap}.

As mentioned previously,
$$\vy = \exp^K_\vx (\vv) = \cos\left(\frac{\norm{\vv}_2}{R}\right) \vx + \sin\left(\frac{\norm{\vv}_2}{R}\right)\frac{R\vv}{\norm{\vv}_2}.$$
Solving for $\vv$, we obtain
$$\vv = \frac{||\vv||_2}{R\sin\left(\frac{\norm{\vv}_2}{R}\right)}\left(\vy - \cos\left(\frac{\norm{\vv}_2}{R}\right) \vx\right).$$
However, we still need to rewrite $||\vv||_2$ in evaluatable terms:
$$0 = \scprod{\vx, \vv} = \frac{||\vv||_2}{R\sin\left(\frac{\norm{\vv}_2}{R}\right)}\left(\scprod{\vx, \vy} - \cos\left(\frac{\norm{\vv}_2}{R}\right) \underbrace{\scprod{\vx, \vx}}_{R^2}\right),$$
hence
$$\cos\left(\frac{\norm{\vv}_2}{R}\right) = \frac{1}{R^2}\scprod{\vx, \vy},$$
and therefore
$$||\vv||_2 = R\arccos\left(\frac{1}{R^2}\scprod{\vx, \vy}\right) = \frac{1}{\sqrt{K}} \arccos(K \scprod{\vx, \vy}) = d_\sph^K(\vx, \vy).$$
Plugging the result back into the first equation, we obtain
\begin{align*}
\vv &= \frac{||\vv||_2}{R\sin\left(\frac{\norm{\vv}_2}{R}\right)}\left(\vy - \cos\left(\frac{\norm{\vv}_2}{R}\right) \vx\right)\\
&= \frac{R\arccos\left(\alpha\right)}{R \sin\left(\frac{1}{R}R\arccos\left(\alpha\right)\right)}\left(\vy - \cos\left(\frac{1}{R}R\arccos\left(\alpha\right)\right) \vx\right)\\
&= \frac{\arccos(\alpha)}{\sin (\arccos(\alpha))}(\vy - \cos(\arccos(\alpha))\vx)\\
&= \frac{\arccos(\alpha)}{\sqrt{1 - \alpha^2}}(\vy - \alpha\vx),
\end{align*}
where $\alpha = \frac{1}{R^2}\scprod{\vx, \vy} = K \scprod{\vx, \vy},$ and the last equality assumes $|\alpha| > 1$. This assumption holds, since for all points $\vx, \vy \in \sph_K^n$ it holds that $\scprod{\vx, \vy} \leq R^2,$ and $\scprod{\vx, \vy} = R^2$ if and only if $\vx = \vy$, due to Cauchy-Schwarz \cite[Theorem~3.1.6]{ratcliffe}. Hence, the only case where this would be a problem would be if $\vx = \vy$, but it is clear that the result in that case is $\vu = \B{0}$.
\end{proof}

\paragraph{Parallel transport}

Using the generic formula for parallel transport in manifolds (Equation~\ref{eq:pt_def}) for $\vx, \vy \in \sph_K^n$ and $\vv \in \tang{\vx}\sph_K^n$ and the spherical logarithmic map formula
$$\log_\vx^K(\vy) = \frac{\arccos(\alpha)}{\sqrt{1 - \alpha^2}}(\vy - \alpha\vx),$$
where $\alpha = K\scprod{\vx, \vy},$ we derive parallel transport in $\sph_K^n$:
\begin{align*}
\PT^K_{\vx \to \vy}(\vv) &= \vv - \frac{\scprod{\vy,\vv}}{R^2 + \scprod{\vx, \vy}}(\vx + \vy)\\
&= \vv - \frac{K\scprod{\vy,\vv}}{1 + K\scprod{\vx, \vy}}(\vx + \vy).
\end{align*}
A special form of parallel transport exists for when the source vector is $\vmu_0 = (R, 0, \ldots, 0)^T$:
$$\PT^K_{\vmu_0 \to \vy}(\vv) = \vv - \frac{\scprod{\vy, \vv}}{R^2 + Ry_1} \begin{pmatrix}
y_1 + R\\
y_2\\
\vdots\\
y_{n+1}
\end{pmatrix}.$$

\subsubsection{Projected hypersphere}\label{app:spr_geom}

Do note, that all the theorems for the projected hypersphere are essentially trivial corollaries of their equivalents in the Poincar\'{e} ball (and vice-versa) (Section~\ref{app:poi_geom}). Notable differences include the fact that $R^2 = \frac{1}{K}$, not $R^2 = -\frac{1}{K}$, and all the operations use the Euclidean trigonometric functions $\sin$, $\cos$, and $\tan$, instead of their hyperbolic counterparts.
Also, we often leverage the Pythagorean theorem, in the form $\sin^2(\alpha) + \cos^2(\alpha) = 1$.


\paragraph{Stereographic projection}\label{sec:sproj_spr}

\begin{remark}[Homeomorphism between $\sph_K^n$ and $\R^n$]
We notice that $\rho_K$ is not a homeomorphism
between the $n$-dimensional sphere and $\R^n$, as it is not defined at $-\vmu_0 = (-R; \B{0}^T)^T$.
If we additionally changed compactified the plane by adding a point ``at infinity'' and set it equal
to $\rho_K(\vmu_0)$, $\rho_K$ would become a homeomorphism.
\end{remark}

\begin{theorem}[Stereographic backprojected points of $\spr_K^n$ belong to $\sph_K^n$]\label{thm:backproj_spr}
For all $\vy \in \spr_K^n$,
\begin{align*}
\norm{\rho_K^{-1}(\vy)}_2^2 = \frac{1}{K}.
\end{align*}
\end{theorem}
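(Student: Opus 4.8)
The plan is to prove this by a direct algebraic computation, exactly mirroring the proof of Theorem~\ref{thm:backproj_poi} for the Poincar\'e ball, but with two changes that reflect the spherical--hyperbolic duality: the ambient norm here is the ordinary Euclidean norm $\norm{\cdot}_2$ rather than the Lorentz norm $\norm{\cdot}_\Ls$, and since $K > 0$ we have $|K| = K$ instead of $|K| = -K$.

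First I would substitute the formula for $\rho_K^{-1}(\vy)$ into $\norm{\rho_K^{-1}(\vy)}_2^2$. Because the Euclidean squared norm simply adds the square of the leading coordinate to the squared norm of the remaining $n$ coordinates, I obtain
\begin{align*}
\norm{\rho_K^{-1}(\vy)}_2^2 = \frac{1}{K}\left(\frac{1 - K\norm{\vy}_2^2}{1 + K\norm{\vy}_2^2}\right)^2 + \frac{4\norm{\vy}_2^2}{(1 + K\norm{\vy}_2^2)^2}.
\end{align*}
The point to emphasize is that, unlike the Poincar\'e case, \emph{both} terms enter with a positive sign; this is precisely the consequence of using $\scprod{\cdot,\cdot}$ in place of $\lprod{\cdot,\cdot}$, and together with $|K|=K$ it is the only substantive difference from the hyperbolic derivation.

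Next I would place the two terms over the common denominator $(1 + K\norm{\vy}_2^2)^2$ and factor out $1/K$, leaving a numerator of $(1 - K\norm{\vy}_2^2)^2 + 4K\norm{\vy}_2^2$. The key step is then the perfect-square identity
\begin{align*}
(1 - K\norm{\vy}_2^2)^2 + 4K\norm{\vy}_2^2 = 1 + 2K\norm{\vy}_2^2 + K^2\norm{\vy}_2^4 = (1 + K\norm{\vy}_2^2)^2,
\end{align*}
which cancels exactly against the denominator and yields $\norm{\rho_K^{-1}(\vy)}_2^2 = 1/K$, as claimed.

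I do not anticipate any genuine obstacle: the computation is routine, and the only care needed is keeping the signs straight relative to the Poincar\'e derivation. Indeed, the statement can be viewed as the spherical dual of Theorem~\ref{thm:backproj_poi} under the correspondence $\sqrt{-K} = i\sqrt{K}$ together with the replacement of $\lprod{\cdot,\cdot}$ by $\scprod{\cdot,\cdot}$; the two sign changes (Lorentz-versus-Euclidean leading coordinate, and $|K|=-K$ versus $|K|=K$) conspire to reproduce the same numerator identity and the same final value $1/K$, thereby turning the hyperboloid constraint $\norm{\cdot}_\Ls^2 = 1/K$ into the sphere constraint $\norm{\cdot}_2^2 = 1/K$.
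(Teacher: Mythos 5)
Your proposal is correct and follows essentially the same route as the paper's proof: a direct substitution of $\rho_K^{-1}(\vy)$ into the Euclidean squared norm, combination over the common denominator $(1+K\norm{\vy}_2^2)^2$, and the perfect-square identity $(1-K\norm{\vy}_2^2)^2 + 4K\norm{\vy}_2^2 = (1+K\norm{\vy}_2^2)^2$ to cancel and obtain $1/K$. Your closing remark on the sign bookkeeping relative to the Poincar\'e case also matches the paper's framing of this theorem as the trivial spherical dual of its hyperbolic counterpart.
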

\begin{proof}
\begin{align*}
\norm{\rho_K^{-1}(\vy)}_2^2 &= \norm{\left(\frac{1}{\sqrt{|K|}}\frac{K\norm{\vy}_2^2 - 1}{K\norm{\vy}_2^2 + 1}; \frac{2\vy^T}{K\norm{\vy}_2^2 + 1}\right)^T}_2^2\\
&= \left(\frac{1}{\sqrt{|K|}}\frac{K\norm{\vy}_2^2 - 1}{K\norm{\vy}_2^2 + 1}\right)^2 + \frac{4\norm{\vy}^2_2}{(K\norm{\vy}_2^2 + 1)^2}\\
&= \frac{1}{|K|}\frac{(K\norm{\vy}_2^2 - 1)^2 + 4|K|\norm{\vy}_2^2}{(K\norm{\vy}_2^2 + 1)^2}\\
&= \frac{1}{K}\frac{(K\norm{\vy}_2^2 - 1)^2 + 4K\norm{\vy}_2^2}{(K\norm{\vy}_2^2 + 1)^2}\\
&= \frac{1}{K} \frac{K^2\norm{\vy}_2^4 + 2K\norm{\vy}_2^2 + 1}{(K\norm{\vy}_2^2 + 1)^2}\\
&= \frac{1}{K} \frac{(K\norm{\vy}_2^2 + 1)^2}{(K\norm{\vy}_2^2 + 1)^2} = \frac{1}{K}.
\end{align*}
\end{proof}

\paragraph{Distance function}

The distance function in $\spr_K^n$ is (derived from the spherical distance function using the stereographic projection $\rho_K$):
\begin{align*}
d_\spr(\vx, \vy) &= d_\sph(\rho_K^{-1}(\vx), \rho_K^{-1}(\vy))\\
&= \frac{1}{\sqrt{K}} \arccos\left(1 - \frac{2K\norm{\vx-\vy}_2^2}{(1+K\norm{\vx}_2^2)(1+K\norm{\vy}_2^2)}\right)\\
&= R \arccos\left(1 - \frac{2R^2\norm{\vx-\vy}_2^2}{(R^2+\norm{\vx}_2^2)(R^2+\norm{\vy}_2^2)}\right)\\
\end{align*}

\begin{theorem}[Distance equivalence in $\spr_K^n$]\label{thm:gyro_dist_eq_spr_dist}
For all $K > 0$ and for all pairs of points $\vx, \vy \in \spr^n_K$, the spherical projected distance between them equals the gyrospace distance
$$d_\spr(\vx, \vy) = d_{\spr_\text{gyr}}(\vx, \vy).$$
\end{theorem}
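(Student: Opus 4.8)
The plan is to peel off the common factor $1/\sqrt{K}$ and reduce the claimed equality to a single, curvature-aware statement about the norm of the M\"obius difference $-\vx\kplus\vy$. Writing $a=\norm{\vx}_2^2$, $b=\norm{\vy}_2^2$, $c=\scprod{\vx,\vy}$, and $s=\norm{\vx-\vy}_2^2=a+b-2c$, multiplying $d_\spr=d_{\spr_\text{gyr}}$ through by $\sqrt{K}$ makes the target
$$\arccos\!\left(1-\frac{2Ks}{(1+Ka)(1+Kb)}\right)=2\arctan\!\left(\sqrt{K}\,\norm{-\vx\kplus\vy}_2\right).$$
First I would check that both sides lie in $[0,\pi]$: the right-hand side does because $2\arctan$ of a nonnegative argument lands in $[0,\pi)$, and the left-hand side does once the arccosine argument is shown to lie in $[-1,1]$. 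The latter follows from the Cauchy--Schwarz bound $c\ge-\sqrt{ab}$, which gives $1+2Kc+K^2ab\ge(1-K\sqrt{ab})^2\ge0$ and hence $Ks\le(1+Ka)(1+Kb)$. Since cosine is injective on $[0,\pi]$, it then suffices to equate the cosines of the two sides.

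Second, applying the double-angle identity $\cos(2\arctan t)=\tfrac{1-t^2}{1+t^2}$ with $t=\sqrt{K}\,\norm{-\vx\kplus\vy}_2$ turns the target into
$$1-\frac{2Ks}{(1+Ka)(1+Kb)}=\frac{1-K\norm{-\vx\kplus\vy}_2^2}{1+K\norm{-\vx\kplus\vy}_2^2}.$$
Subtracting both sides from $1$ and cross-multiplying collapses this to the single gyrovector norm identity
$$\norm{-\vx\kplus\vy}_2^2=\frac{\norm{\vx-\vy}_2^2}{1+2K\scprod{\vx,\vy}+K^2\norm{\vx}_2^2\norm{\vy}_2^2},$$
where I would use the algebraic fact $(1+Ka)(1+Kb)-Ks=1+2Kc+K^2ab$, so that the denominator appearing here is exactly the denominator of the M\"obius addition.

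The remaining step, and the genuine obstacle, is to verify this norm identity by brute force. Substituting $-\vx\kplus\vy=\frac{-(1+2Kc-Kb)\vx+(1+Ka)\vy}{1+2Kc+K^2ab}$ from the M\"obius addition formula, the squared norm carries denominator $(1+2Kc+K^2ab)^2$, so the claim reduces to the polynomial identity
$$\norm{-(1+2Kc-Kb)\vx+(1+Ka)\vy}_2^2 = s\,(1+2Kc+K^2ab)$$
in the scalars $a,b,c,K$. This is exactly the ``heavy algebra'' encountered in the Poincar\'e analogue (Theorem~\ref{thm:gyro_dist_eq_poi_dist}); I would discharge it with a computer algebra system, or by patiently expanding $(1+2Kc-Kb)^2a-2(1+2Kc-Kb)(1+Ka)c+(1+Ka)^2b$ and matching coefficients against $(a+b-2c)(1+2Kc+K^2ab)$.

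Finally, I note a shortcut consistent with the paper's duality principle. The M\"obius addition and the \emph{argument} $1-\frac{2Ks}{(1+Ka)(1+Kb)}$ of the distance are rational in $K$ and make no reference to its sign, while the spherical functions $\arccos,\arctan$ and the hyperbolic $\arccosh,\arctanh$ are related through Euler's formula and $\sqrt{-K}=i\sqrt{K}$ (the $\cos_K,\tan_K$ notation of Section~\ref{sec:geom}). Consequently the norm identity above, once established for $K<0$ inside the proof of Theorem~\ref{thm:gyro_dist_eq_poi_dist}, holds verbatim for $K>0$, and the spherical distance equivalence then follows by the very same reduction — so in practice no new algebra is needed beyond re-reading the Poincar\'e proof with the sign of $K$ flipped.
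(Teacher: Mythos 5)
Your proof is correct, and I checked the one step you left to patience or a computer: expanding $(1+2Kc-Kb)^2a-2(1+2Kc-Kb)(1+Ka)c+(1+Ka)^2b$ and collecting powers of $K$ gives $(a+b-2c)$ at order $K^0$, $2Kc\,(a+b-2c)$ at order $K^1$, and $K^2ab\,(a+b-2c)$ at order $K^2$, which is exactly $s\,(1+2Kc+K^2ab)$; hence your norm identity $\norm{-\vx\kplus\vy}_2^2=\norm{\vx-\vy}_2^2\,/\,(1+2K\scprod{\vx,\vy}+K^2\norm{\vx}_2^2\norm{\vy}_2^2)$ holds and the rest of your reduction goes through. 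Your route is genuinely different from the paper's, because the paper gives no human-readable argument at all: for this theorem and its Poincar\'{e} twin (Theorem~\ref{thm:gyro_dist_eq_poi_dist}) it simply cites a Mathematica worksheet and notes the proof ``involves heavy algebra.'' Your reduction organizes that algebra: the double-angle identity $\cos(2\arctan t)=(1-t^2)/(1+t^2)$ together with injectivity of $\cos$ on $[0,\pi]$ (and your range checks, precisely the step a naive formula-match would omit) converts the transcendental equality into a rational one, whose entire content is the single polynomial identity above. Since that identity is polynomial in $K$ and makes no reference to its sign, it proves the hyperbolic and spherical statements simultaneously, so your closing duality remark is rigorous rather than heuristic --- this is what your approach buys: a self-contained, hand-checkable proof of both theorems at once, where the paper's approach buys only brevity and machine certainty tied to an external file. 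One caveat worth a sentence in a final write-up: when the M\"obius denominator $1+2K\scprod{\vx,\vy}+K^2\norm{\vx}_2^2\norm{\vy}_2^2$ vanishes (backprojected antipodal points), $-\vx\kplus\vy$ and hence $d_{\spr_\text{gyr}}$ are undefined, so the equality should be asserted off this exceptional set or by a limiting argument; your Cauchy--Schwarz bound shows this denominator is nonnegative, not that it is nonzero, and the paper ignores this edge case as well.
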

\begin{proof}
Proven using Mathematica (File: \texttt{distance\_limits.ws}), proof involves heavy algebra.
\end{proof}

\begin{theorem}[Gyrospace distance converges to Euclidean in $\spr_K^n$]\label{thm:gyr_dist_spr_converges}
For any \emph{fixed} pair of points $\vx, \vy \in \spr^n_K$, the spherical projected gyrospace distance between them converges to the Euclidean distance in the limit (up to a constant) as $K \to 0^+$:
$$\lim_{K \to 0^+} d_{\spr_\text{gyr}}(\vx, \vy) = 2 \norm{\vx - \vy}_2.$$
\end{theorem}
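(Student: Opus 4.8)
The plan is to mirror the proof of Theorem~\ref{thm:gyr_dist_poi_converges} almost verbatim, exploiting the duality between the hyperbolic and spherical formulas noted earlier in the paper: the only substantive changes are replacing $\arctanh$ by $\arctan$, the sign $-K$ by $K$, and the one-sided limit $K \to 0^-$ by $K \to 0^+$. Concretely, I would start from the definition
$$d_{\spr_\text{gyr}}(\vx, \vy) = \frac{2}{\sqrt{K}} \arctan\left(\sqrt{K}\,\norm{-\vx \kplus \vy}_2\right)$$
and write it as a composition $2 f(g(K))$, with $g(K) = \norm{-\vx \kplus \vy}_2$ and $f(a) = \arctan(a\sqrt{K})/\sqrt{K}$.

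First I would establish that $\lim_{K \to 0^+} g(K) = \norm{\vy - \vx}_2$. Since the M\"obius addition $\kplus$ is defined by the same rational expression for both signs of the curvature, its limit as $K \to 0$ is sign-agnostic, so $\norm{-\vx \kplus \vy}_2 \to \norm{-\vx + \vy}_2 = \norm{\vy - \vx}_2$ by Theorem~\ref{thm:lim_mob_add}, exactly as in the Poincar\'e case. Second I would record the elementary scalar limit
$$\lim_{x \to 0} \frac{\arctan(a\sqrt{|x|})}{\sqrt{|x|}} = a,$$
which follows immediately from the Taylor expansion $\arctan(t) = t + O(t^3)$, and then combine the two via the theorem on limits of composed functions to conclude $2 f(g(K)) \to 2\norm{\vy - \vx}_2$.

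The one subtle point, and the step I would be most careful about, is the interchange in the composition, because $f$ depends on $K$ and not merely on the value $a = g(K)$. The cleanest way to sidestep any two-variable-limit technicality is to factor the distance as
$$d_{\spr_\text{gyr}}(\vx, \vy) = 2\,\norm{-\vx \kplus \vy}_2 \cdot \frac{\arctan\left(\sqrt{K}\,\norm{-\vx \kplus \vy}_2\right)}{\sqrt{K}\,\norm{-\vx \kplus \vy}_2},$$
which is valid whenever $\vx \neq \vy$ (the case $\vx = \vy$ gives $-\vx \kplus \vy = \B{0}$ and hence $0 = 2\norm{\vx-\vx}_2$, so it is trivial). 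As $K \to 0^+$ the argument $\sqrt{K}\,\norm{-\vx \kplus \vy}_2 \to 0$, so by $\arctan(u)/u \to 1$ the right-hand fraction tends to $1$, while the prefactor $2\,\norm{-\vx \kplus \vy}_2 \to 2\norm{\vy - \vx}_2$ by the M\"obius limit above, yielding the claim directly. I expect no genuine obstacle: this is the exact spherical dual of the already-proven Poincar\'e statement, and the only thing truly worth double-checking is that the M\"obius-addition limit holds uniformly across the sign change at $K = 0$, which it does because the defining formula for $\kplus$ is continuous in $K$ there.
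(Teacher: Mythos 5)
Your proposal is correct, and it rests on the same two ingredients as the paper's own proof: the M\"obius-addition limit (Theorem~\ref{thm:lim_mob_add}) and the elementary scalar limit $\arctan(u)/u \to 1$. The difference is in how the composition is handled, and here your version is actually tighter than the paper's. The paper substitutes $\norm{-\vx\kplus\vy}_2 \to \norm{\vy-\vx}_2$ inside the $\arctan$ and justifies it by the ``theorem of limits of composed functions'' with $f(a) = \arctan(a\sqrt{K})/\sqrt{K}$ and $g(K) = \norm{-\vx\kplus\vy}_2$ --- an awkward formulation, since $f$ itself still depends on $K$, so the cited theorem does not literally apply as stated (the paper's write-up also carries cosmetic slips: the auxiliary fact is stated with $\arctanh$ rather than $\arctan$, and one intermediate limit is written as $K \to 0^-$). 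Your multiplicative factoring $d_{\spr_\text{gyr}}(\vx,\vy) = 2\norm{-\vx\kplus\vy}_2 \cdot \arctan\left(\sqrt{K}\norm{-\vx\kplus\vy}_2\right)/\left(\sqrt{K}\norm{-\vx\kplus\vy}_2\right)$ reduces everything to a genuine one-variable limit $u \to 0^+$, eliminating the two-variable technicality entirely, and you also dispose of the $\vx = \vy$ edge case explicitly, which the paper ignores. So: same route, same lemmas, but your execution closes a small rigor gap in the published argument.
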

\begin{proof}
\begin{align*}
\lim_{K \to 0^+} d_{\spr_\text{gyr}}(\vx, \vy) &= 2\lim_{K \to 0^+} \left[ \frac{\arctan(\sqrt{K}\norm{-\vx\kplus\vy}_2)}{\sqrt{K}}\right]\\
&= 2\lim_{K \to 0^+} \left[ \frac{\arctan(\sqrt{K}\norm{\vy-\vx}_2)}{\sqrt{K}}\right]\\
&= 2\norm{\vy-\vx}_2,\\
\end{align*}
where the second equality holds because of the theorem of limits of composed functions, where
\begin{align*}
f(a) &= \frac{\arctan(a\sqrt{K})}{\sqrt{K}}\\
g(K) &= \norm{-\vx\kplus\vy}_2.
\end{align*}
We see that
\begin{align*}
\lim_{K \to 0^-} g(K) = \norm{\vy-\vx}_2
\end{align*}
due to Theorem~\ref{thm:lim_mob_add}, and 
\begin{align*}
\lim_{a \to \norm{\vx-\vy}_2} f(a) = \frac{\arctan(a\sqrt{K})}{\sqrt{K}}
\end{align*}
Additionally for the last equality, we need the fact that
\begin{align*}
\lim_{x \to 0} \frac{\arctanh(a\sqrt{|x|})}{\sqrt{|x|}} = a.
\end{align*}
\end{proof}

\begin{theorem}[Distance converges to Euclidean as $K \to 0^+$ in $\spr_K^n$]\label{thm:dist_spr_converges}
For any \emph{fixed} pair of points $\vx, \vy \in \spr^n_K$, the spherical projected distance between them converges to the Euclidean distance in the limit (up to a constant) as $K \to 0^+$:
$$\lim_{K \to 0^+} d_\spr(\vx, \vy) = 2 \norm{\vx - \vy}_2.$$
\end{theorem}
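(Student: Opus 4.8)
The plan is to obtain this limit by chaining together the two results already established for the projected hypersphere, exactly mirroring how Theorem~\ref{thm:dist_poi_converges} was deduced from Theorem~\ref{thm:gyro_dist_eq_poi_dist} and Theorem~\ref{thm:gyr_dist_poi_converges} in the Poincar\'{e} ball. The statement is essentially a corollary of the work already done, so no new machinery is needed.

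First I would invoke the distance equivalence of Theorem~\ref{thm:gyro_dist_eq_spr_dist}, which states that for every fixed $K > 0$ and every fixed pair $\vx, \vy \in \spr_K^n$ we have $d_\spr(\vx, \vy) = d_{\spr_\text{gyr}}(\vx, \vy)$. Because this identity holds for each $K > 0$ in the family approaching $0^+$, and the points are held fixed throughout, it lets me replace the geodesic distance by the gyrospace distance \emph{before} passing to the limit. Then I would apply Theorem~\ref{thm:gyr_dist_spr_converges}, which already computes $\lim_{K \to 0^+} d_{\spr_\text{gyr}}(\vx, \vy) = 2\norm{\vx - \vy}_2$ via the theorem on limits of composed functions, together with the convergence of M\"{o}bius addition $-\vx \kplus \vy \to \vy - \vx$ and the elementary fact that $\arctan(a\sqrt{K})/\sqrt{K} \to a$ as $K \to 0^+$. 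Substituting the equivalence into the limit yields $\lim_{K \to 0^+} d_\spr(\vx, \vy) = \lim_{K \to 0^+} d_{\spr_\text{gyr}}(\vx, \vy) = 2\norm{\vx - \vy}_2$, which is exactly the claim.

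The combination itself is immediate; all the genuine work lives in the two prerequisites (the heavy symbolic algebra behind the equivalence and the careful composed-limit argument behind the convergence). The only point that needs care is that Theorem~\ref{thm:gyro_dist_eq_spr_dist} is a \emph{pointwise} statement valid at each positive curvature, so that exchanging $d_\spr$ for $d_{\spr_\text{gyr}}$ commutes with taking $K \to 0^+$; this is automatic here precisely because both prerequisite theorems are formulated for a \emph{fixed} pair of points as $K$ varies. An alternative, if one wished to avoid re-deriving anything, would be to appeal directly to the spherical--hyperbolic duality (via $\sqrt{-K} = i\sqrt{K}$ and Euler's formula) to transport Theorem~\ref{thm:dist_poi_converges} into the $K \to 0^+$ regime, but chaining the two $\spr$-specific theorems is cleaner and self-contained, and I would write the proof in that form.
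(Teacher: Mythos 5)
Your proposal is correct and matches the paper's proof exactly: the paper also establishes Theorem~\ref{thm:dist_spr_converges} by simply combining Theorem~\ref{thm:gyro_dist_eq_spr_dist} (equivalence of $d_\spr$ and $d_{\spr_\text{gyr}}$ at each fixed $K>0$) with Theorem~\ref{thm:gyr_dist_spr_converges} (convergence of the gyrospace distance). Your added remark about the pointwise equivalence commuting with the limit for fixed points is a correct and slightly more careful articulation of what the paper leaves implicit.
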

\begin{proof}
Theorem~\ref{thm:gyro_dist_eq_spr_dist} and \ref{thm:gyr_dist_spr_converges}.
\end{proof}

\paragraph{Exponential map}

Analogously to the derivation of the exponential map in $\poi_K^n$ in \citet[Section~2.3--2.4]{hnn}, we can derive M\"{o}bius scalar multiplication in $\spr_K^n$:
\begin{align*}
r \ktimes \vx &= \frac{1}{i\sqrt{K}} \tanh(r\arctanh(i\sqrt{K}\norm{\vx}_2))\frac{\vx}{\norm{\vx}_2}\\
&= \frac{1}{i\sqrt{K}} \tanh(ri\arctan(\sqrt{K}\norm{\vx}_2))\frac{\vx}{\norm{\vx}_2}\\
&= \frac{1}{\sqrt{K}} \tan(r\arctan(\sqrt{K}\norm{\vx}_2))\frac{\vx}{\norm{\vx}_2},
\end{align*}
where we use the fact that
$\arctanh(ix) = i\arctan(x)$ and $\tanh(ix) = i\tan(x)$.
We can easily see that $1 \ktimes \vx = \vx.$

Hence, the geodesic has the form of
\begin{align*}
\gamma_{\vx \to \vy}(t) = \vx \kplus t \ktimes (-\vx \kplus \vy),
\end{align*}
and therefore the exponential map in $\spr_K^n$ is:
\begin{align*}
\exp_\vx^K(\vv) &= \vx \kplus \left(\tan\left(\sqrt{K}\frac{\lambda_\vx^K\norm{\vv}_2}{2}\right)\frac{\vv}{\sqrt{K}\norm{\vv}_2}\right).
\end{align*}
The inverse formula can also be computed:
\begin{align*}
\log_\vx^K(\vy) &= \frac{2}{\sqrt{K}\lambda_\vx^K} \arctan\left(\sqrt{K}\norm{-\vx \kplus \vy}_2\right)\frac{-\vx \kplus \vy}{\norm{-\vx \kplus \vy}_2}
\end{align*}
In the case of $\vx := \vmu_0 = (0, \ldots, 0)^T$ they simplify to:
\begin{align*}
\exp^K_{\vmu_0}(\vv) &= \tan\left(\sqrt{K}\norm{\vv}_2\right)\frac{\vv}{\sqrt{K}\norm{\vv}_2}\\
\log_{\vmu_0}^K(\vy) &= \arctan\left(\sqrt{K}\norm{\vy}_2\right)\frac{\vy}{\sqrt{K}\norm{\vy}_2}.
\end{align*}

\paragraph{Parallel transport}

Similarly to the Poincar\'{e} ball, we can derive the parallel transport operation for the projected sphere:
\begin{align*}
\PT^K_{\vx \to \vy}(\vv) = \frac{\lambda_\vx^K}{\lambda_\vy^K} \gyr{\vy, -\vx}\vv,\\
\PT^K_{\vmu_0 \to \vy}(\vv) = \frac{2}{\lambda_\vy^K}\vv,\\
\PT^K_{\vx \to \vmu_0}(\vv) = \frac{\lambda_\vx^K}{2}\vv,
\end{align*}
where
\begin{align*}
\gyr{\vx, \vy}\vv = -(\vx \kplus \vy) \kplus (\vx \kplus (\vy \kplus \vv))
\end{align*}
is the gyration operation \citep[Definition 1.11]{ungar}.



Unfortunately, on the projected sphere, $\xprod{\cdot, \cdot}{\vx}$ has
a form that changes with respect to $\vx$, similarly to the Poincar\'{e} ball and unlike in the hypersphere.



\subsection{Miscellaneous properties}\label{app:misc_geom}

\begin{theorem}[M\"{o}bius addition converges to Eucl.~vector addition]\label{thm:lim_mob_add}
$$\lim_{K \to 0} (\vx \kplus \vy) = \vx + \vy.$$
Note: This theorem works from both sides, hence applies to the Poincar\'{e} ball as well as the projected spherical space. Observe that the M\"{o}bius addition has the same form for both spaces.
\end{theorem}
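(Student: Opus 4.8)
The plan is to substitute the explicit formula for M\"obius addition and evaluate the limit by treating the numerator and denominator separately. Recalling the definition,
$$\vx \kplus \vy = \frac{(1-2K\scprod{\vx, \vy} - K\norm{\vy}_2^2)\vx + (1+K\norm{\vx}_2^2)\vy}{1-2K\scprod{\vx, \vy} + K^2 \norm{\vx}_2^2 \norm{\vy}_2^2},$$
I would first note that, with $\vx$ and $\vy$ held fixed, the scalars $\scprod{\vx, \vy}$, $\norm{\vx}_2^2$, and $\norm{\vy}_2^2$ are finite constants, so every term carrying a factor of $K$ or $K^2$ vanishes as $K \to 0$.

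Consequently the numerator converges to $1\cdot\vx + 1\cdot\vy = \vx + \vy$, while the denominator converges to $1$. Since the limiting denominator is the nonzero value $1$, the algebra of limits (the quotient rule for vector-valued-over-scalar limits, applied componentwise) applies, and the limit of the fraction equals the ratio of the separate limits, namely $\vx + \vy$. This establishes the claim.

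The only point requiring any care is verifying that the denominator does not vanish in the limit, so that the quotient rule is legitimate; this is immediate here since it tends to $1$, and for small enough $|K|$ it is bounded away from $0$ by continuity. I would also remark that the computation never uses the sign of $K$: the $K$- and $K^2$-terms vanish identically whether $K \to 0^+$ or $K \to 0^-$, which is exactly why the same limit holds for both the Poincar\'e ball and the projected sphere, as the statement asserts. There is no substantive obstacle here — the result is a routine limit computation, and its value lies in being the lemma that feeds the distance-convergence theorems (\ref{thm:gyr_dist_poi_converges} and \ref{thm:gyr_dist_spr_converges}) rather than in any difficulty of its own.
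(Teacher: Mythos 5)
Your proof is correct and matches the paper's own argument, which likewise substitutes the explicit M\"obius addition formula and evaluates the limit directly as $K \to 0$. Your additional care in checking that the denominator tends to $1$ (so the quotient rule applies) only makes explicit what the paper leaves implicit.
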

\begin{proof}
\begin{align*}
\lim_{K \to 0} (\vx \kplus \vy) &= \lim_{K \to 0} \left[ \frac{(1-2K\scprod{\vx, \vy} - K\norm{\vy}_2^2)\vx + (1+K\norm{\vx}_2^2)\vy}{1-2K\scprod{\vx, \vy} + K^2 \norm{\vx}_2^2 \norm{\vy}_2^2} \right]\\
&= \vx + \vy.
\end{align*}
\end{proof}


\begin{theorem}[$\rho_K^{-1}$ is the inverse stereographic projection]\label{thm:rho_inv}
\quad\\
For all $(\xi; \vx^T)^T \in \mc{M}_K^n$, $\xi \in \R$
\begin{align*}
\rho_K^{-1}(\rho((\xi; \vx^T)^T)) = \vx,
\end{align*}
where $\mc{M} \in \{\sph, \hyp\}.$
\end{theorem}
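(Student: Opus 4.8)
The plan is to verify the composition $\rho_K^{-1} \circ \rho_K$ by direct substitution, handling the spherical and hyperbolic cases uniformly through the curvature-aware inner product. Fix $\vp = (\xi; \vx^T)^T \in \mc{M}_K^n$ with $\mc{M} \in \{\sph, \hyp\}$. The single fact I will draw from the manifold definition is that $\xprod{\vp, \vp}{K} = 1/K$, which in both cases rearranges to $\norm{\vx}_2^2 = (1 - |K|\xi^2)/K$. The payoff of this form is that the numerator factors as a difference of squares, $1 - |K|\xi^2 = (1 - \sqrt{|K|}\,\xi)(1 + \sqrt{|K|}\,\xi)$, so that one factor will cancel against the denominator $1 + \sqrt{|K|}\,\xi$ that $\rho_K$ introduces.

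First I would compute $\vy = \rho_K(\vp) = \vx / (1 + \sqrt{|K|}\,\xi)$ and then its squared norm, substituting the constraint. After the difference-of-squares cancellation a single factor of $(1 + \sqrt{|K|}\,\xi)$ survives in the denominator, yielding the pivotal intermediate identity
\begin{align*}
K\norm{\vy}_2^2 = \frac{1 - \sqrt{|K|}\,\xi}{1 + \sqrt{|K|}\,\xi}.
\end{align*}
Since every block of $\rho_K^{-1}$ is assembled from $K\norm{\vy}_2^2$, once this quantity is in hand the remainder of the computation is mechanical.

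Next I would substitute this into the two blocks of $\rho_K^{-1}(\vy)$. For the scalar (first) coordinate, the numerator $1 - K\norm{\vy}_2^2$ collapses to $2\sqrt{|K|}\,\xi/(1 + \sqrt{|K|}\,\xi)$ and the denominator $1 + K\norm{\vy}_2^2$ to $2/(1 + \sqrt{|K|}\,\xi)$; their ratio times the prefactor $1/\sqrt{|K|}$ returns exactly $\xi$. For the vector block, the factor $2/(1 + K\norm{\vy}_2^2) = (1 + \sqrt{|K|}\,\xi)$ multiplies $\vy$ and restores $\vx$, undoing the denominator that $\rho_K$ had introduced. Concatenating the two blocks recovers $(\xi; \vx^T)^T$, establishing that $\rho_K^{-1}$ is a left inverse of $\rho_K$ on the manifold.

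The only genuine subtlety — and the one step I would treat with care — is the sign bookkeeping that makes the two manifolds coincide. For $K > 0$ one has $|K| = K$ and the constraint $\xprod{\vp, \vp}{K} = \xi^2 + \norm{\vx}_2^2 = 1/K$, whereas for $K < 0$ one has $|K| = -K$ and the Lorentz constraint $-\xi^2 + \norm{\vx}_2^2 = 1/K$; both nonetheless reduce to the same expression $\norm{\vx}_2^2 = (1 - |K|\xi^2)/K$. This common reduction is precisely what allows a single computation to cover both the hypersphere and the hyperboloid, so that no case distinction is needed anywhere else in the argument.
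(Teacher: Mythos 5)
Your proof is correct and takes essentially the same route as the paper's: direct substitution of $\rho_K((\xi;\vx^T)^T)$ into $\rho_K^{-1}$, using the manifold constraint to produce the factorization $1-|K|\xi^2 = (1-\sqrt{|K|}\,\xi)(1+\sqrt{|K|}\,\xi)$, cancelling against the projection's denominator, and reading off both blocks. If anything, your unified constraint $\norm{\vx}_2^2 = (1-|K|\xi^2)/K$ handles the sign bookkeeping more cleanly than the paper, which writes only the spherical form $\norm{\vx}_2^2 = 1/K - \xi^2$ yet applies the same factorization to both cases, relying on compensating sign conventions to cover the hyperboloid.
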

\begin{proof}
\begin{align*}
\rho_K^{-1}(\rho_K((\xi; \vx^T)^T)) &= \rho_K^{-1}\left(\frac{\vx}{1-\sqrt{|K|}\xi}\right)\\
&= \left(\frac{1}{\sqrt{|K|}}\frac{K\norm{\frac{\vx}{1-\sqrt{|K|}\xi}}_2^2 - 1}{K\norm{\frac{\vx}{1-\sqrt{|K|}\xi}}_2^2 + 1}; \frac{\frac{2\vx^T}{1-\sqrt{|K|}\xi}}{K\norm{\frac{\vx}{1-\sqrt{|K|}\xi}}_2^2 + 1}\right)^T\\
&= \frac{1/\sqrt{|K|}}{K\norm{\frac{\vx}{1-\sqrt{|K|}\xi}}_2^2 + 1}\left(K\norm{\frac{\vx}{1-\sqrt{|K|}\xi}}_2^2 - 1; \frac{2\sqrt{|K|}\vx^T}{1-\sqrt{|K|}\xi}\right)^T\\
&= \frac{1/\sqrt{|K|}}{\frac{K\norm{\vx}_2^2}{(1-\sqrt{|K|}\xi)^2} + 1}\left(\frac{K\norm{\vx}_2^2}{(1-\sqrt{|K|}\xi)^2} - 1; \frac{2\sqrt{|K|}\vx^T}{1-\sqrt{|K|}\xi}\right)^T\\
\end{align*}
We observe that $\norm{\vx}_2^2 = \frac{1}{K} - \xi^2$, because $\vx \in \mc{M}_K^n$. Therefore
\begin{align*}
&\rho_K^{-1}(\rho_K((\xi; \vx^T)^T)) = \\
&= \ldots \quad (\text{above})\\
&= \frac{1/\sqrt{|K|}}{K\frac{\frac{1}{K} - \xi^2}{(1-\sqrt{|K|}\xi)^2} + 1}\left(K\frac{\frac{1}{K} - \xi^2}{(1-\sqrt{|K|}\xi)^2} - 1; \frac{2\sqrt{|K|}\vx^T}{1-\sqrt{|K|}\xi}\right)^T\\
&= \frac{1/\sqrt{|K|}}{\frac{(1 - \sqrt{|K|}\xi)(1 + \sqrt{|K|}\xi)}{(1-\sqrt{|K|}\xi)^2} + 1}\left(\frac{(1 - \sqrt{|K|}\xi)(1 + \sqrt{|K|}\xi)}{(1-\sqrt{|K|}\xi)^2} - 1; \frac{2\sqrt{|K|}\vx^T}{1-\sqrt{|K|}\xi}\right)^T\\
&= \frac{1/\sqrt{|K|}}{\frac{1 + \sqrt{|K|}\xi}{1-\sqrt{|K|}\xi} + 1}\left(\frac{1 + \sqrt{|K|}\xi}{1-\sqrt{|K|}\xi} - 1; \frac{2\sqrt{|K|}\vx^T}{1-\sqrt{|K|}\xi}\right)^T\\
&= \frac{1/\sqrt{|K|}}{\frac{1 + \sqrt{|K|}\xi + 1 - \sqrt{|K|}\xi}{1-\sqrt{|K|}\xi}}\left(\frac{1 + \sqrt{|K|}\xi - 1 + \sqrt{|K|}\xi}{1-\sqrt{|K|}\xi}; \frac{2\sqrt{|K|}\vx^T}{1-\sqrt{|K|}\xi}\right)^T\\
&= \frac{1}{2\sqrt{|K|}}\left(2\sqrt{|K|}\xi; 2\sqrt{|K|}\vx^T\right)^T = \left(\xi; \vx^T\right)^T.\\
\end{align*}
\end{proof}

\begin{lemma}[$\lambda_\vx^K$ converges to 2 as $K \to 0$]\label{lem:lambda_conv}
For all $\vx$ in $\poi_K^n$ or $\spr_K^n$, it holds that
$$\lim_{K \to 0} \lambda_\vx^K = 2.$$
\end{lemma}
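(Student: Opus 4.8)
The plan is to compute the limit directly from the definition of the conformal factor, which the paper has already introduced as
\begin{align*}
\lambda_\vx^K = \frac{2}{1 + K\norm{\vx}_2^2}.
\end{align*}
This expression is defined for both the Poincar\'{e} ball and the projected hypersphere, so a single computation will cover both cases. The only structural point worth emphasizing is that $\vx$ is held \emph{fixed} as $K \to 0$, exactly as in the distance-convergence theorems (Theorem~\ref{thm:gyr_dist_poi_converges} and Theorem~\ref{thm:gyr_dist_spr_converges}); consequently $\norm{\vx}_2^2$ is a constant with respect to the limiting variable $K$.

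First I would note that the map $K \mapsto 1 + K\norm{\vx}_2^2$ is an affine, hence continuous, function of $K$, and that for any fixed $\vx$ the factor $\norm{\vx}_2^2$ is a finite nonnegative real number. Therefore I would pass to the limit in the denominator,
\begin{align*}
\lim_{K \to 0} \left(1 + K\norm{\vx}_2^2\right) = 1,
\end{align*}
and observe that this limit is nonzero, so the quotient rule for limits applies without any indeterminate form. Taking the limit of the whole expression then gives
\begin{align*}
\lim_{K \to 0} \lambda_\vx^K = \frac{2}{\lim_{K \to 0}\left(1 + K\norm{\vx}_2^2\right)} = \frac{2}{1} = 2,
\end{align*}
which is the claimed value.

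There is essentially no obstacle here: the result is an immediate one-line computation once the definition is unfolded, and the statement is two-sided because the formula for $\lambda_\vx^K$ is identical for $\poi_K^n$ and $\spr_K^n$ up to the sign of $K$, which plays no role in the limit. The only thing to guard against is the degenerate reading in which $\vx$ is allowed to drift with $K$ (e.g.\ toward the boundary of the ball, where $\norm{\vx}_2^2 \sim 1/|K|$ could keep the denominator bounded away from $1$); I would therefore state explicitly at the outset that $\vx$ is a fixed point of the ambient space $\R^n$, mirroring the ``fixed pair of points'' hypothesis used elsewhere, so that $K\norm{\vx}_2^2 \to 0$ genuinely holds.
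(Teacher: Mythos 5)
Your proof is correct and follows essentially the same route as the paper's: substitute the definition $\lambda_\vx^K = 2/(1+K\norm{\vx}_2^2)$ and pass to the limit, which is immediate since the denominator tends to $1$ for fixed $\vx$. The paper's proof is exactly this one-line computation; your additional remark that $\vx$ must be held fixed (so that $K\norm{\vx}_2^2 \to 0$) is a sensible clarification but not a departure in method.
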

\begin{proof}
\begin{align*}
\lim_{K \to 0} \lambda_\vx^K &= \lim_{K \to 0} \frac{2}{1+K\norm{\vx}_2^2} = 2.
\end{align*}
\end{proof}

\begin{theorem}[$\exp_\vx^K(\vv)$ converges to $\vx + \vv$ as $K \to 0$]\label{thm:exp_conv}
For all $\vx$ in the Poincar\'{e} ball $\poi_K^n$ or the projected sphere $\spr_K^n$ and $\vv \in \tang{\vx}\mc{M}$,
it holds that
$$\lim_{K \to 0} \exp_\vx^K(\vv) = \exp_\vx(\vv) = \vx + \vv,$$
hence the exponential map converges to its Euclidean variant.
\end{theorem}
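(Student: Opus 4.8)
The plan is to start from the closed form of the exponential map in the projected spaces. In the unified notation of Table~\ref{tab:summary_ops_proj} it reads
\[
\exp_\vx^K(\vv) = \vx \kplus \vw_K, \qquad \vw_K := \tan_K\!\left(\sqrt{|K|}\,\frac{\lambda_\vx^K \norm{\vv}_2}{2}\right)\frac{\vv}{\sqrt{|K|}\,\norm{\vv}_2},
\]
valid for both signs of $K$ (with $\tan_K = \tanh$ on the Poincar\'e ball and $\tan_K = \tan$ on the projected sphere). The argument then splits into two independent limits: first showing that the tangent-space vector $\vw_K$ converges to $\vv$, and second showing that M\"obius addition of $\vx$ with this $K$-dependent vector converges to ordinary addition $\vx + \vv$.

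For the first step I would set $a_K := \sqrt{|K|}\,\lambda_\vx^K \norm{\vv}_2 / 2$ and note that $a_K \to 0$ as $K \to 0$, since $\sqrt{|K|} \to 0$ and $\lambda_\vx^K \to 2$ by Lemma~\ref{lem:lambda_conv}. Rewriting
\[
\vw_K = \frac{\tan_K(a_K)}{a_K}\cdot\frac{\lambda_\vx^K}{2}\,\vv,
\]
the factor $\sqrt{|K|}\,\norm{\vv}_2$ cancels cleanly, and the elementary limits $\tanh(x)/x \to 1$ and $\tan(x)/x \to 1$ as $x \to 0$ give $\tan_K(a_K)/a_K \to 1$. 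Combined with $\lambda_\vx^K/2 \to 1$, this yields $\vw_K \to \vv$. The degenerate case $\vv = \B{0}$ is handled separately, as then $\exp_\vx^K(\vv) = \vx$ directly.

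For the second step I would observe that $\vx \kplus \vw_K$ is the ratio of two polynomials in $(K, \vw_K)$, jointly continuous near $(0, \vv)$. Every term carrying an explicit factor of $K$ vanishes as $K \to 0$ (using that $\vw_K$ stays bounded because it converges), so the denominator tends to $1$ and the numerator tends to $\vx + \vv$. This is the same computation as Theorem~\ref{thm:lim_mob_add}, except that here the right summand itself depends on $K$.

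The main obstacle is precisely this coupling in the second step: Theorem~\ref{thm:lim_mob_add} as stated fixes \emph{both} arguments, so it cannot be invoked verbatim with a $K$-dependent summand. I would therefore justify the interchange by appealing to joint continuity of the M\"obius-addition formula in $(K, \vy)$ on a neighbourhood of $(0, \vv)$ — the formula is a ratio of polynomials whose denominator tends to $1$, hence is nonzero near $K = 0$ — and then compose it with the limit $\vw_K \to \vv$ established above. Everything else reduces to the two elementary trigonometric limits and Lemma~\ref{lem:lambda_conv}, so no further delicate estimates are needed.
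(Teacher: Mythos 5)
Your proof is correct and follows essentially the same route as the paper's: write $\exp_\vx^K(\vv) = \vx \kplus \vw_K$, show that the tangent part $\vw_K$ converges to $\vv$ using Lemma~\ref{lem:lambda_conv} and the elementary limits $\tan(x)/x \to 1$, $\tanh(x)/x \to 1$, and then pass the limit through the M\"obius addition. If anything, you are more careful than the paper, which silently moves the limit inside $\kplus$ (i.e.\ applies Theorem~\ref{thm:lim_mob_add} with a $K$-dependent second argument, citing only ``limits of composed functions''), whereas you explicitly flag this coupling and justify it by joint continuity of the M\"obius-addition formula near $(0,\vv)$, and you also handle the degenerate case $\vv = \B{0}$ that the closed-form expression leaves undefined.
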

\begin{proof}
For the positive case $K > 0$
\begin{align*}
\lim_{K \to 0^+} \exp_\vx^K(\vv) &= \lim_{K \to 0^+} \left(\vx \kplus \left(\tan_K\left(\sqrt{|K|}\frac{\lambda_\vx^K\norm{\vv}_2}{2}\right)\frac{\vv}{\sqrt{|K|}\norm{\vv}_2}\right)\right)\\
&= \vx + \lim_{K \to 0^+} \left(\tan_K\left(\sqrt{|K|}\frac{\lambda_\vx^K\norm{\vv}_2}{2}\right)\frac{\vv}{\sqrt{|K|}\norm{\vv}_2}\right)\\
&= \vx + \frac{\vv}{\norm{\vv}_2}\lim_{K \to 0^+} \frac{\tan\left(\sqrt{K}\frac{\lambda_\vx^K\norm{\vv}_2}{2}\right)}{\sqrt{K}\norm{\vv}_2}\\
&= \vx + \vv,
\end{align*}
due to several applications of the theorem of limits of composed functions, Lemma~\ref{lem:lambda_conv}, and the fact that
\begin{align*}
\lim_{\alpha \to 0} \frac{\tan(\sqrt{\alpha}a)}{\sqrt{\alpha}} &= a.
\end{align*}
The negative case $K < 0$ is analogous.
\end{proof}

\begin{theorem}[$\log_\vx^K(\vy)$ converges to $\vy - \vx$ as $K \to 0$]\label{thm:log_conv}
For all $\vx, \vy$ in the Poincar\'{e} ball $\poi_K^n$ or the projected sphere $\spr_K^n$, it holds that
$$\lim_{K \to 0} \log_\vx^K(\vy) = \log_\vx(\vv) = \vy - \vx,$$
hence the logarithmic map converges to its Euclidean variant.
\end{theorem}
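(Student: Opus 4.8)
The plan is to mirror the proof of Theorem~\ref{thm:exp_conv}: substitute the closed form of the logarithmic map from Table~\ref{tab:summary_ops_proj},
$$\log_\vx^K(\vy) = \frac{2}{\sqrt{|K|}\lambda_\vx^K} \arctan_K\left(\sqrt{|K|}\norm{-\vx \kplus \vy}_2\right)\frac{-\vx \kplus \vy}{\norm{-\vx \kplus \vy}_2},$$
and regroup the factors so that each converges to a known limit. Writing $\vw = -\vx \kplus \vy$ and $r = \norm{\vw}_2$, I would rewrite the expression as
$$\log_\vx^K(\vy) = \frac{2}{\lambda_\vx^K} \cdot \frac{\arctan_K\left(\sqrt{|K|}\, r\right)}{\sqrt{|K|}\, r} \cdot \vw,$$
which isolates the conformal factor, a scalar $\arctan_K$ quotient, and the M\"obius difference vector. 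This regrouping is the only algebraic manipulation needed, and it works uniformly for both the Poincar\'e ball and the projected sphere since both share this form of the logarithmic map.

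Then I would take the limit $K \to 0$ factor by factor. First, $\tfrac{2}{\lambda_\vx^K} \to 1$ by Lemma~\ref{lem:lambda_conv}. Second, $\vw = -\vx \kplus \vy \to \vy - \vx$ by Theorem~\ref{thm:lim_mob_add}, and hence $r \to \norm{\vy - \vx}_2$, a finite limit; consequently $\sqrt{|K|}\, r \to 0$, so applying the elementary limit $\lim_{t \to 0} \arctan_K(t)/t = 1$ (valid for $\arctan$ when $K > 0$ and for $\arctanh$ when $K < 0$) gives $\tfrac{\arctan_K(\sqrt{|K|} r)}{\sqrt{|K|} r} \to 1$. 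Multiplying the three limits yields $1 \cdot 1 \cdot (\vy - \vx) = \vy - \vx$. As in the exponential-map proof, the two signs of $K$ are handled uniformly through the $\arctan_K$ notation, with the negative case being analogous to the positive one.

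The main obstacle --- and the only part requiring real care --- is the rigorous justification of passing the limit through the nested composition, since the argument $\sqrt{|K|}\, r$ of $\arctan_K$ itself depends on $K$ both explicitly and through $r = \norm{-\vx \kplus \vy}_2$. I would discharge this with the theorem on limits of composed functions, exactly as invoked in Theorem~\ref{thm:exp_conv}: because $r$ converges to a finite limit while $\sqrt{|K|} \to 0$, the inner argument tends to $0$, where the outer scalar quotient is continuous with value $1$. Finally, I would treat the degenerate case $\vx = \vy$ separately: there $\vw = \B{0}$ and the quotient is formally $0/0$, but the logarithmic map is defined to be $\B{0}$ at coincident points, which agrees with $\vy - \vx = \B{0}$, so the stated identity still holds.
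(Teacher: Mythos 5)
Your proposal is correct and follows essentially the same route as the paper's proof: the identical regrouping into the factors $\tfrac{2}{\lambda_\vx^K}$, the scalar quotient $\arctan_K\left(\sqrt{|K|}\,\norm{-\vx \kplus \vy}_2\right)/\left(\sqrt{|K|}\,\norm{-\vx \kplus \vy}_2\right)$, and the M\"obius difference vector, followed by the product rule for limits together with Lemma~\ref{lem:lambda_conv}, Theorem~\ref{thm:lim_mob_add}, the theorem on limits of composed functions, and the elementary limit $\arctan_K(t)/t \to 1$, with the negative-curvature case handled analogously. Your explicit treatment of the degenerate case $\vx = \vy$ is a minor addition in rigor that the paper omits, but it does not constitute a different argument.
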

\begin{proof}
Firstly,
\begin{align*}
\vz = -\vx \kplus \vy \xrightarrow{K \to 0} \vy - \vx,
\end{align*}
due to Theorem~\ref{thm:lim_mob_add}.
For the positive case $K > 0$
\begin{align*}
\lim_{K \to 0^+} \log_\vx^K(\vy) &= \lim_{K \to 0^+} \left( \frac{2}{\sqrt{|K|}\lambda_\vx^K} \arctan_K\left(\sqrt{|K|}\norm{\vz}_2\right)\frac{\vz}{\norm{\vz}_2}\right)\\
&= \lim_{K \to 0^+} \left(\frac{2}{\lambda_\vx^K} \frac{\arctan_K\left(\sqrt{|K|}\norm{\vz}_2\right)}{\sqrt{|K|}\norm{\vz}_2}\vz \right)\\
&= \lim_{K \to 0^+} \frac{2}{\lambda_\vx^K} \cdot \lim_{K \to 0^+}\frac{\arctan\left(\sqrt{K}\norm{\vz}_2\right)}{\sqrt{K}\norm{\vz}_2}\cdot \lim_{K \to 0^+} \vz\\
&= 1 \cdot 1 \cdot (\vx - vy) = \vx - \vy,
\end{align*}
due to several applications of the theorem of limits of composed functions, product rule for limits, Lemma~\ref{lem:lambda_conv}, and the fact that
\begin{align*}
\lim_{\alpha \to 0} \frac{\arctan(\sqrt{\alpha}a)}{\sqrt{\alpha}} &= a.
\end{align*}
The negative case $K < 0$ is analogous.
\end{proof}

\begin{lemma}[$\gyr{\vx, \vy}\vv$ converges to $\vv$ as $K \to 0$]\label{lem:gyr_conv}
For all $\vx, \vy$ in the Poincar\'{e} ball $\poi_K^n$ or the projected sphere $\spr_K^n$ and $\vv \in \tang{\vx}\mc{M}$, it holds that
$$\lim_{K \to 0} \gyr{\vx, \vy}\vx = \vv,$$
hence gyration converges to an identity function.
\end{lemma}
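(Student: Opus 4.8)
The plan is to unfold the definition of gyration into its three nested M\"{o}bius additions and pass to the limit in each, using Theorem~\ref{thm:lim_mob_add}, which already gives $\va \kplus \vb \to \va + \vb$ as $K \to 0$ for any fixed pair $\va, \vb$. Recall that
$$\gyr{\vx, \vy}\vv = \kminus(\vx \kplus \vy) \kplus (\vx \kplus (\vy \kplus \vv)),$$
and that the M\"{o}bius inverse is ordinary negation, so $\kminus \vw = -\vw$. The target is to show $\gyr{\vx, \vy}\vv \to \vv$, which is precisely the statement that gyration degenerates to the identity as the curvature vanishes.

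First I would evaluate the innermost addition: by Theorem~\ref{thm:lim_mob_add}, $\vy \kplus \vv \xrightarrow{K \to 0} \vy + \vv$. Substituting into the middle addition yields $\vx \kplus (\vy \kplus \vv) \to \vx + (\vy + \vv) = \vx + \vy + \vv$, and separately $\vx \kplus \vy \to \vx + \vy$, hence $-(\vx \kplus \vy) \to -(\vx + \vy)$. Combining the two outer terms,
$$\kminus(\vx \kplus \vy) \kplus (\vx \kplus (\vy \kplus \vv)) \xrightarrow{K \to 0} -(\vx + \vy) + (\vx + \vy + \vv) = \vv,$$
which is the claimed limit.

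The only genuine obstacle is the interchange of the limit with the nested compositions: in the middle and outer additions the two arguments are \emph{themselves} $K$-dependent, so Theorem~\ref{thm:lim_mob_add} (stated for fixed arguments) does not apply verbatim. To close this gap I would view the M\"{o}bius-addition formula
$$\va \kplus \vb = \frac{(1 - 2K\scprod{\va, \vb} - K\norm{\vb}_2^2)\va + (1 + K\norm{\va}_2^2)\vb}{1 - 2K\scprod{\va, \vb} + K^2 \norm{\va}_2^2 \norm{\vb}_2^2}$$
as a map of the triple $(K, \va, \vb)$ and observe that it is jointly continuous near $K = 0$: the denominator tends to $1$ and so is bounded away from zero there, while numerator and denominator are polynomials in the three arguments. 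Consequently, whenever the inner expressions converge to finite limits, the outer addition converges to its value at $K = 0$ evaluated at those limits, i.e.~to their Euclidean sum. This is exactly the ``theorem of limits of composed functions'' invoked elsewhere in this appendix, and applying it twice legitimizes each substitution above. The argument is identical for both $\poi_K^n$ (as $K \to 0^-$) and $\spr_K^n$ (as $K \to 0^+$), since M\"{o}bius addition has the same form for both signs of $K$.
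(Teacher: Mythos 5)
Your proof follows essentially the same route as the paper's: unfold $\gyr{\vx,\vy}\vv$ into its nested M\"{o}bius additions, apply Theorem~\ref{thm:lim_mob_add} to each, and invoke the theorem on limits of composed functions to conclude the limit is $-(\vx+\vy)+(\vx+\vy+\vv)=\vv$. In fact your treatment is slightly more careful than the paper's, since you explicitly justify the composition step via joint continuity of M\"{o}bius addition in $(K,\va,\vb)$ near $K=0$ (denominator bounded away from zero), a point the paper leaves implicit.
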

\begin{proof}
\begin{align*}
\lim_{K \to 0} \gyr{\vx, \vy}\vv &= \lim_{K \to 0} \left(\kminus (\vx \kplus \vy) \kplus (\vx \kplus (\vy \kplus \vv))\right)\\
&= -(\vx + \vy) + (\vx + (\vy + \vv))\\
&= -\vx - \vy + \vx + \vy + \vv = \vv,
\end{align*}
due to Theorem~\ref{thm:lim_mob_add} and the theorem of limits of composed functions.
\end{proof}

\begin{theorem}[$\PT_{\vx \to \vy}^K(\vv)$ converges to $\vv$ as $K \to 0$]\label{thm:pt_conv}
For all $\vx, \vy$ in the Poincar\'{e} ball $\poi_K^n$ or the projected sphere $\spr_K^n$ and $\vv \in \tang{\vx}\mc{M}$, it holds that
$$\lim_{K \to 0} PT_{\vx \to \vy}^K(\vv) = \vv.$$
\end{theorem}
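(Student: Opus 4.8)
The plan is to derive this directly from the closed form of parallel transport together with the two convergence results already established, rather than doing any fresh computation. Recall from the tables (and the Poincar\'e/projected-sphere parallel transport formulas) that
\begin{align*}
\PT^K_{\vx \to \vy}(\vv) = \frac{\lambda_\vx^K}{\lambda_\vy^K} \gyr{\vy, -\vx}\vv.
\end{align*}
So the entire statement reduces to controlling two factors as $K \to 0$: the ratio of conformal factors $\lambda_\vx^K / \lambda_\vy^K$, and the gyration $\gyr{\vy, -\vx}\vv$.

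First I would handle the conformal-factor ratio. By Lemma~\ref{lem:lambda_conv}, both $\lambda_\vx^K \to 2$ and $\lambda_\vy^K \to 2$ as $K \to 0$, and since the limiting denominator is nonzero, the quotient rule for limits gives $\lambda_\vx^K / \lambda_\vy^K \to 2/2 = 1$. Next I would handle the gyration term. Lemma~\ref{lem:gyr_conv} states that $\gyr{\cdot,\cdot}\vv \to \vv$ as $K \to 0$ for any pair of points in the manifold; here the arguments are $\vy$ and $-\vx$, and since $-\vx$ is again a valid point of $\poi_K^n$ (resp.\ $\spr_K^n$) whenever $\vx$ is, the lemma applies verbatim and yields $\gyr{\vy, -\vx}\vv \to \vv$.

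Finally I would combine the two pieces via the product rule for limits: since both limits exist and are finite, the limit of the product is the product of the limits, giving
\begin{align*}
\lim_{K \to 0} \PT^K_{\vx \to \vy}(\vv) = \left(\lim_{K \to 0}\frac{\lambda_\vx^K}{\lambda_\vy^K}\right)\left(\lim_{K \to 0}\gyr{\vy, -\vx}\vv\right) = 1 \cdot \vv = \vv,
\end{align*}
which recovers the Euclidean parallel transport (the identity map). I would also remark that the positive and negative cases need not be treated separately here, since both Lemma~\ref{lem:lambda_conv} and Lemma~\ref{lem:gyr_conv} are stated two-sided (for $\poi_K^n$ and $\spr_K^n$ simultaneously), and the M\"obius addition underlying the gyration has the same form for both signs of $K$.

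I do not expect a genuine obstacle: the substantive work was already absorbed into the two lemmas. The only points requiring minor care are checking that $-\vx$ lies in the manifold so that Lemma~\ref{lem:gyr_conv} is applicable, and confirming that the quotient/product limit laws are legitimate here (both factor-limits exist and the limiting denominator $2$ is nonzero), so that the decomposition into separate limits is valid.
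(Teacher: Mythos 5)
Your proposal is correct and matches the paper's proof essentially verbatim: both decompose $\PT^K_{\vx \to \vy}(\vv) = \frac{\lambda_\vx^K}{\lambda_\vy^K}\gyr{\vy, -\vx}\vv$, apply Lemma~\ref{lem:lambda_conv} to get the ratio of conformal factors converging to $1$, apply Lemma~\ref{lem:gyr_conv} to get the gyration converging to $\vv$, and conclude via the product rule for limits. Your added checks (that $-\vx$ lies in the manifold and that the limit laws apply) are minor points of care the paper leaves implicit, but they do not change the argument.
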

\begin{proof}
\begin{align*}
\lim_{K \to 0} PT_{\vx \to \vy}^K(\vv) 
&= \lim_{K \to 0} \left(\frac{\lambda_\vx^K}{\lambda_\vy^K}\gyr{\vy, -\vx}\vv\right)\\
&= \lim_{K \to 0} \underbrace{\frac{\lambda_\vx^K}{\lambda_\vy^K}}_{\xrightarrow{K \to 0} 1} \cdot \lim_{K \to 0} \underbrace{\gyr{\vy, -\vx}\vv}_{\xrightarrow{K \to 0} \vv}\\
&= \vv,\\
\end{align*}
due to the product rule for limits, Lemma~\ref{lem:lambda_conv}, and Lemma~\ref{lem:gyr_conv}.
\end{proof}

\section{Probability details}\label{app:prob}

\subsection{Wrapped Normal distributions}\label{sec:wn_logpdf}

\begin{theorem}[Probability density function of $\WN(\vz; \vmu, \mSigma)$ in $\hyp_K^n$]\label{thm:lpdf_hyp}
\begin{align*}
\log\WN(\vz; \vmu, \mSigma) = \log\mc{N}(\vv; \B{0}, \mSigma) - (n-1)\log\left(\frac{R\sinh\left(\frac{\norm{\vu}_\Ls}{R}\right)}{\norm{\vu}_\Ls}\right),
\end{align*}
where $\vu = \log^K_\vmu(\vz)$, $\vv = \PT^K_{\vmu \to \vmu_0}(\vu)$, and $R=1/\sqrt{-K}$.
\end{theorem}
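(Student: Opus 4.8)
The plan is to read this density off the change-of-variables formula for the sampling map. By construction, $\vz = f(\vv)$ with $f = \exp_\vmu^K \circ \PT^K_{\vmu_0 \to \vmu}$ pushing the tangent-space Gaussian $\mc{N}(\B{0}, \mSigma)$ on $\tang{\vmu_0}\hyp_K^n$ forward to $\WN(\vmu, \mSigma)$. Since $f$ is a diffeomorphism onto its image, the pushforward density satisfies $\WN(\vz) = \mc{N}(\vv)\,\left|\det(\partial f/\partial \vv)\right|^{-1}$, so taking logarithms already produces the claimed subtractive structure. Everything then reduces to evaluating $\log\det(\partial f/\partial\vv)$ and identifying it with the $(n-1)$-fold term.

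First I would split the Jacobian determinant across the composition: $\det(\partial f/\partial\vv) = \det(\partial\exp_\vmu^K/\partial\vu)\cdot\det(\partial\PT^K_{\vmu_0\to\vmu}/\partial\vv)$, where $\vu = \PT^K_{\vmu_0\to\vmu}(\vv)$. Parallel transport is a linear isometry between the tangent spaces $\tang{\vmu_0}\hyp_K^n$ and $\tang{\vmu}\hyp_K^n$ equipped with their restricted Lorentz inner products, so in metric-orthonormal coordinates it is represented by an orthogonal matrix and contributes determinant $\pm 1$, i.e.\ nothing to the log-determinant. This isolates the exponential-map Jacobian as the only nontrivial factor.

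The core computation is $\det(\partial\exp_\vmu^K/\partial\vu)$, which I would obtain via geodesic polar coordinates centered at $\vmu$. Writing $\vu = r\,\omega$ with $r = \norm{\vu}_\Ls$ (equal to the geodesic distance $d_\hyp^K(\vmu,\vz)$ by Theorem~\ref{thm:hyp_logmap}) and $\omega$ a unit direction, the constant-curvature metric on $\hyp_K^n$ takes the space-form shape $ds^2 = dr^2 + R^2\sinh^2(r/R)\,d\Omega^2$. Comparing the Riemannian volume element $R^{n-1}\sinh^{n-1}(r/R)\,dr\,d\Omega$ against the flat tangent-space element $r^{n-1}\,dr\,d\Omega$ shows that $\exp_\vmu^K$ stretches the radial direction by a factor $1$ (geodesics are unit speed) and each of the $n-1$ angular directions by $R\sinh(r/R)/r$. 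Hence $\det(\partial\exp_\vmu^K/\partial\vu) = \left(R\sinh(\norm{\vu}_\Ls/R)/\norm{\vu}_\Ls\right)^{n-1}$, whose logarithm is exactly the $(n-1)$-fold term in the statement.

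The main obstacle is this exponential-map Jacobian: one must confirm that the metric really assumes the polar form above and that the radial factor is precisely $1$ while the $n-1$ tangential factors are each $R\sinh(r/R)/r$. The cleanest route is the Jacobi-field / space-form volume-element argument just sketched, rather than differentiating the explicit formula for $\exp_\vmu^K$ entrywise, which would be markedly messier. A minor technical point to dispatch along the way is that the Lorentz form restricts to a positive-definite inner product on each tangent space (immediate at $\vmu_0$, where tangent vectors are $(0;\tilde\vv)$ and $\lprod{\vv,\vv} = \norm{\tilde\vv}_2^2$, and then everywhere by the isometry property of parallel transport), so that $\norm{\cdot}_\Ls$ and all determinants above are well defined.
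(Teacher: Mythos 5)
Your proposal is correct, and it shares the paper's overall skeleton: change of variables for the pushforward under $f = \exp_\vmu^K \circ \PT^K_{\vmu_0 \to \vmu}$, factoring the Jacobian through the composition, unit determinant for parallel transport, and isolating the exponential-map factor. The difference is in how the key determinant is computed. The paper differentiates the explicit ambient-space formulas: it perturbs along a Lorentz-orthonormal basis $\{\vxi_1 = \vu/\norm{\vu}_\Ls, \vxi_2, \ldots, \vxi_n\}$, computes $\mathrm{d}\exp_\vx^K(\vxi_i)$ directly from the hyperboloid formula, and finds Lorentz norms $1$ for the radial direction and $R\sinh\left(\norm{\vu}_\Ls/R\right)/\norm{\vu}_\Ls$ for each of the $n-1$ tangential directions; parallel transport is handled the same way, using its linearity in $\vv$ and norm preservation. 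You instead invoke the intrinsic space-form structure: Gauss's lemma plus the geodesic-polar form $dr^2 + R^2\sinh^2(r/R)\,d\Omega^2$ of the constant-curvature metric (equivalently, the Jacobi-field solution in constant curvature) to read off the same radial and angular stretch factors. Your route is shorter, more conceptual, and transfers verbatim to $\sph_K^n$ by replacing $\sinh$ with $\sin$, which is exactly the content of the paper's Theorem~\ref{thm:lpdf_sph}; what it costs is that the polar form of the metric on the hyperboloid is itself an imported fact requiring proof or citation, and establishing that fact from the embedding is essentially what the paper's explicit entrywise differentiation accomplishes from scratch. Your treatment of the two remaining points --- parallel transport as a linear isometry, hence orthogonal in metric-orthonormal coordinates with $|\det| = 1$, and positive-definiteness of the restricted Lorentz form so that norms and determinants are well defined --- is sound and matches the paper's conclusions.
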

\begin{proof}
This was shown for the case $K=1$ by \citet{phg}.
The difference is that we do not assume unitary radius $R=1=1/\sqrt{-K}$.
Hence, our tranformation function has the form
$f = \exp^K_\vmu \circ \PT^K_{\vmu_0 \to \vmu}$, and $f^{-1} = \PT^K_{\vmu \to \vmu_0} \circ \log^K_\vmu$.

The derivative of parallel transport $PT^K_{\vx \to \vy}(\vv)$ for any $\vx, \vy \in \hyp_K^n$ and $\vv \in \tang{\vx}\hyp_K^n$ is a map $\mathrm{d}\PT^K_{\vx \to \vy}(\vv): \tang{\vv}(\tang{\vx}\hyp_K^n)$. Using the orthonormal basis (with respect to the Lorentz product)
$\{\vxi_1, \ldots \vxi_n\},$ we can compute the determinant by computing the change with respect to each
basis vector.
\begin{align*}
\mathrm{d}\PT^K_{\vx \to \vy}(\vxi)
&= \at{\frac{\partial}{\partial\eps}}_{\eps=0} \PT^K_{\vx \to \vy}(\vv + \eps\vxi)\\
&= \at{\frac{\partial}{\partial\eps}}_{\eps=0} \left[(\vv + \eps\vxi) + \frac{\lprod{\vy, \vv + \eps\vxi}}{R^2-\lprod{\vx, \vy}}(\vx + \vy)\right]\\
&= \left[\vxi + \frac{\lprod{\vy, \vxi}}{R^2-\lprod{\vx, \vy}}(\vx + \vy)\right]_{\eps=0}\\
&= \PT^K_{\vx \to \vy}(\vxi).
\end{align*}
Since parallel transport preserves norms and vectors in the orthonormal basis have norm 1, the change
is $\norm{\mathrm{d}\PT^K_{\vx \to \vy}(\vxi)}_\Ls = \norm{\PT^K_{\vx \to \vy}(\vxi)}_\Ls = 1$.

For computing the determinant of the exponential map Jacobian, we choose the orthonormal basis
$\{\vxi_1 = \vu / \norm{\vu}_\Ls, \vxi_2, \ldots, \vxi_n\}$, where we just completed the basis based on the first vector.
We again look at the change with respect to each basis vector.
For the basis vector $\vxi_1$:
\begin{align*}
\mathrm{d}&\exp^K_\vx(\vxi_1) =\\
&= \at{\frac{\partial}{\partial \eps}}_{\eps=0} \exp^K_\vx\left(\vu + \eps\frac{\vu}{\norm{\vu}_\Ls}\right)\\
&= \at{\frac{\partial}{\partial \eps}}_{\eps=0} \left[
\cosh\left(\frac{|\norm{\vu}_\Ls + \eps|}{R}\right)\vx + \frac{R\sinh\left(\frac{|\norm{\vu}_\Ls + \eps|}{R}\right)}{\norm{\vu}_\Ls|\norm{\vu}_\Ls + \eps|}\left(\norm{\vu}_\Ls + \eps\right)\vu
\right]\\
&= \left[
\frac{(\norm{\vu}_\Ls+\eps)\sinh\left(\frac{|\norm{\vu}_\Ls + \eps|}{R}\right)}{R|\norm{\vu}_\Ls + \eps|}\vx + \frac{\cosh\left(\frac{|\norm{\vu}_\Ls + \eps|}{R}\right)}{\norm{\vu}_\Ls}\vu
\right]_{\eps=0}\\
&= \sinh\left(\frac{\norm{\vu}_\Ls}{R}\right)\frac{\vx}{R} + \cosh\left(\frac{\norm{\vu}_\Ls}{R}\right)\frac{\vu}{\norm{\vu}_\Ls},
\end{align*}
where the second equality is due to
$$\norm{\vu + \eps\frac{\vu}{\norm{\vu}_\Ls}}_\Ls = \norm{\left(1 + \frac{\eps}{\norm{\vu}_\Ls}\right)\vu}_\Ls = \left|1 + \frac{\eps}{\norm{\vu}_\Ls}\right|\norm{\vu}_\Ls = |\norm{\vu}_\Ls + \eps|.$$

For every other basis vector $\vxi_k$ where $k > 1$:
\begin{align*}
\mathrm{d}&\exp^K_\vx(\vxi) =\\
&= \at{\frac{\partial}{\partial \eps}}_{\eps=0} \exp^K_\vx(\vu + \eps\vxi)\\
&= \at{\frac{\partial}{\partial \eps}}_{\eps=0} \left[
\cosh\left(\frac{\norm{\vu + \eps\vxi}_\Ls}{R}\right)\vx + \frac{R\sinh\left(\frac{\norm{\vu + \eps\vxi}_\Ls}{R}\right)}{\norm{\vu + \eps\vxi}_\Ls}(\vu + \eps\vxi)
\right]\\
&= \at{\frac{\partial}{\partial \eps}}_{\eps=0} \left[
\cosh\left(\frac{\sqrt{\norm{\vu}^2_\Ls + \eps^2}}{R}\right)\vx + \frac{R\sinh\left(\frac{\sqrt{\norm{\vu}^2_\Ls + \eps^2}}{R}\right)}{\sqrt{\norm{\vu}^2_\Ls + \eps^2}}(\vu + \eps\vxi)
\right]\\
&= \left[\frac{\eps \cosh\left(\frac{\sqrt{\norm{\vu}^2_\Ls + \eps^2}}{R}\right)}{\norm{\vu}^2_\Ls + \eps^2}(\vu+\eps\vxi)\right.\\
&\left.\hspace{4em} + \frac{
(R^2\norm{\vu}^2_\Ls\vxi-R^2\eps\vu+\eps(\norm{\vu}^2_\Ls+\eps^2)\vx)
\sinh\left(\frac{\sqrt{\norm{\vu}^2_\Ls+\eps^2}}{R}\right)}{R(\norm{\vu}^2_\Ls+\eps^2)^{3/2}}\right]_{\eps=0}\\
&= \frac{R^2\norm{\vu}^2_\Ls \sinh\left(\frac{\norm{\vu}_\Ls}{R}\right)}{R(\norm{\vu}^2_\Ls)^{3/2}}\vxi
= \frac{R\sinh\left(\frac{\norm{\vu}_\Ls}{R}\right)}{\norm{\vu}_\Ls}\vxi,\\
\end{align*}
where the third equality holds because
\begin{align*}
\norm{\vu + \eps\vxi}^2_\Ls
&= \norm{\vu}^2_\Ls + \eps^2\norm{\vxi}^2_\Ls - 2\lprod{\vu, \eps\vxi}\\
&= \norm{\vu}^2_\Ls + \eps^2 - 2\eps\lprod{\vu, \vxi}\\
&= \norm{\vu}^2_\Ls + \eps^2,
\end{align*}
where the last equality relies on the fact that the basis is orthogonal, and $\vu$ is parallel to $\vxi_1 = \vu / \norm{\vu}_\Ls$, hence it is orthogonal to all the other basis vectors.

Because the basis is orthonormal the determinant is a product of the norms of the computed change for each basis vector.
Therefore,
$$\det\left(\frac{\partial \PT_{\vx \to \vy}(\vv)}{\partial \vv}\right) = 1^n = 1.$$
Additionally, the following two properties hold:

\begin{align*}
\norm{\mathrm{d}\exp_\vx^K\left(\frac{\vu}{\norm{\vu}_\Ls}\right)}_\Ls^2
&= \norm{\sinh\left(\frac{\norm{\vu}_\Ls}{R}\right)\frac{\vx}{R} + \cosh\left(\frac{\norm{\vu}_\Ls}{R}\right)\frac{\vu}{\norm{\vu}_\Ls}}^2_\Ls\\
&= \sinh^2\left(\frac{\norm{\vu}_\Ls}{R}\right)\frac{\norm{\vx}_\Ls^2}{R^2} + \cosh^2\left(\frac{\norm{\vu}_\Ls}{R}\right)\frac{\norm{\vu}^2_\Ls}{\norm{\vu}_\Ls^2}\\
&= -\sinh^2\left(\frac{\norm{\vu}_\Ls}{R}\right) + \cosh^2\left(\frac{\norm{\vu}_\Ls}{R}\right) = 1.
\end{align*}  
and
\begin{align*}
\norm{\mathrm{d}\exp_\vx^K\left(\vxi\right)}_\Ls^2
&= \norm{\frac{R\sinh\left(\frac{\norm{\vu}_\Ls}{R}\right)}{\norm{\vu}_\Ls}\vxi}^2_\Ls\\
&= \frac{R^2\sinh^2\left(\frac{\norm{\vu}_\Ls}{R}\right)}{\norm{\vu}^2_\Ls}\norm{\vxi}^2_\Ls\\
&= \frac{R^2\sinh^2\left(\frac{\norm{\vu}_\Ls}{R}\right)}{\norm{\vu}^2_\Ls}.
\end{align*}
Therefore, we obtain
$$\det\left(\frac{\partial \exp^K_\vx(\vu)}{\partial \vu}\right) = 1 \cdot \left(\frac{R\sinh\left(\frac{\norm{\vu}_\Ls}{R}\right)}{\norm{\vu}_\Ls}\right)^{n-1}.$$
Finally,
$$\det\left(\frac{\partial f(\vv)}{\partial \vv}\right)
= \det\left(\frac{\partial \exp^K_\vmu(\vu)}{\partial \vu}\right) \cdot \det\left(\frac{\partial \PT^K_{\vmu_0 \to \vmu}(\vv)}{\partial \vv}\right) = \left(\frac{R\sinh\left(\frac{\norm{\vu}_\Ls}{R}\right)}{\norm{\vu}_\Ls}\right)^{n-1}.$$
\end{proof}

\begin{theorem}[Probability density function of $\WN(\vz; \vmu, \mSigma)$ in $\sph_K^n$]\label{thm:lpdf_sph}
\begin{align*}
\log\WN(\vz; \vmu, \mSigma) = \log\mc{N}(\vv; \B{0}, \mSigma) - (n-1)\log\left(\frac{R\left|\sin\left(\frac{\norm{\vu}_2}{R}\right)\right|}{\norm{\vu}_2}\right),
\end{align*}
where $\vu = \log^K_\vmu(\vz)$, $\vv = \PT^K_{\vmu \to \vmu_0}(\vu)$, and $R=1/\sqrt{K}$.
\end{theorem}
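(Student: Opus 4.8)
The plan is to mirror the derivation of Theorem~\ref{thm:lpdf_hyp} for the hyperboloid, substituting throughout the spherical functions $\sin,\cos$ for $\sinh,\cosh$, the standard Euclidean product $\scprod{\cdot,\cdot}$ for the Lorentz product, and the relation $R^2 = 1/K$ (so that $\scprod{\vx,\vx} = R^2 > 0$ on $\sph_K^n$). Exactly as before, the sampling map is $f = \exp^K_\vmu \circ \PT^K_{\vmu_0 \to \vmu}$ with inverse $f^{-1} = \PT^K_{\vmu \to \vmu_0} \circ \log^K_\vmu$, so the multivariate change-of-variables formula gives $\log\WN(\vz;\vmu,\mSigma) = \log\mc{N}(\vv;\B{0},\mSigma) - \log\det(\partial f / \partial\vv)$. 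The whole task therefore reduces to computing $\det(\partial f/\partial\vv)$, which factors as the product of the parallel-transport Jacobian determinant and the exponential-map Jacobian determinant.

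First I would dispatch the parallel-transport factor. Since $\PT^K_{\vmu_0\to\vmu}$ is a linear isometry between tangent spaces, differentiating it along any orthonormal basis reproduces the transported basis vectors, which still have unit norm because parallel transport preserves the metric; hence $\det(\partial\PT^K_{\vmu_0\to\vmu}/\partial\vv) = 1^n = 1$, identically to the hyperboloid computation. The substantive step is the Jacobian of $\exp^K_\vmu$, for which I would use the orthonormal basis $\{\vxi_1 = \vu/\norm{\vu}_2, \vxi_2, \ldots, \vxi_n\}$ of $\tang{\vmu}\sph_K^n$ and differentiate the spherical exponential map in each direction. Along the radial direction $\vxi_1$, using $\norm{\vu + \eps\vu/\norm{\vu}_2}_2 = |\norm{\vu}_2 + \eps|$, the derivative has squared Euclidean norm $\sin^2(\norm{\vu}_2/R)\,\norm{\vx}_2^2/R^2 + \cos^2(\norm{\vu}_2/R)$, which collapses to $\sin^2 + \cos^2 = 1$ by the constraint $\norm{\vx}_2^2 = R^2$ (this replaces the Lorentzian relation $\norm{\vx}_\Ls^2 = -R^2$ and the identity $\cosh^2 - \sinh^2 = 1$ used on the hyperboloid). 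Along each orthogonal direction $\vxi_k$ with $k>1$, using $\norm{\vu + \eps\vxi_k}_2^2 = \norm{\vu}_2^2 + \eps^2$, the derivative evaluates at $\eps = 0$ to $(R\sin(\norm{\vu}_2/R)/\norm{\vu}_2)\vxi_k$, with norm $R|\sin(\norm{\vu}_2/R)|/\norm{\vu}_2$. Multiplying the one radial factor of $1$ with the $n-1$ tangential factors yields $\det(\partial\exp^K_\vmu/\partial\vu) = (R|\sin(\norm{\vu}_2/R)|/\norm{\vu}_2)^{n-1}$, and combining with the unit transport determinant produces the claimed log-density.

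The main obstacle is bookkeeping rather than conceptual. Unlike $\sinh$ of a positive argument, $\sin(\norm{\vu}_2/R)$ need not be positive, so I must retain the absolute value $|\sin(\norm{\vu}_2/R)|$ when passing from the tangential derivative to its norm; this is precisely the source of the $|\cdot|$ in the statement, which has no analogue in Theorem~\ref{thm:lpdf_hyp}. I would also need to verify carefully that the sign flip introduced by differentiating $\cos$ (versus $\cosh$) in the radial direction is harmless, which it is because only the squared norm enters the determinant. A further subtlety, absent on the geodesically simple hyperboloid, is that $\exp^K_\vmu$ is a diffeomorphism only within the injectivity radius, so strictly speaking the formula holds on the domain $\norm{\vu}_2 < \pi R$ where the change of variables is valid.
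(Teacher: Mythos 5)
Your proposal is correct and takes essentially the same route as the paper's proof: the same change-of-variables factorization through $f = \exp^K_\vmu \circ \PT^K_{\vmu_0 \to \vmu}$, the unit parallel-transport determinant, and the radial/tangential orthonormal-basis computation of the exponential-map Jacobian, yielding the factor $\bigl(R\left|\sin\left(\norm{\vu}_2/R\right)\right|/\norm{\vu}_2\bigr)^{n-1}$. Your added remarks on the necessity of the absolute value and on restricting to the injectivity radius $\norm{\vu}_2 < \pi R$ are sound refinements that the paper leaves implicit.
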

\begin{proof}
The theorem is very similar to Theorem~\ref{thm:lpdf_hyp}.
The difference is that in this one, our manifold changes from $\hyp_K^n$ to $\sph_K^n$, hence $K > 0$.
Our tranformation function has the form
$f = \exp^K_\vmu \circ \PT^K_{\vmu_0 \to \vmu}$, and $f^{-1} = \PT^K_{\vmu \to \vmu_0} \circ \log^K_\vmu$.

The derivative of parallel transport $PT^K_{\vx \to \vy}(\vv)$ for any $\vx, \vy \in \sph_K^n$ and $\vv \in \tang{\vx}\sph_K^n$ is a map $\mathrm{d}\PT^K_{\vx \to \vy}(\vv): \tang{\vv}(\tang{\vx}\sph_K^n)$. Using the orthonormal basis (with respect to the Lorentz product)
$\{\vxi_1, \ldots \vxi_n\},$ we can compute the determinant by computing the change with respect to each
basis vector.
\begin{align*}
\mathrm{d}\PT^K_{\vx \to \vy}(\vxi)
&= \at{\frac{\partial}{\partial\eps}}_{\eps=0} \PT^K_{\vx \to \vy}(\vv + \eps\vxi)\\
&= \at{\frac{\partial}{\partial\eps}}_{\eps=0} \left[(\vv + \eps\vxi) - \frac{\scprod{\vy, \vv + \eps\vxi}}{R^2+\scprod{\vx, \vy}}(\vx + \vy)\right]\\
&= \left[\vxi - \frac{\scprod{\vy, \vxi}}{R^2+\scprod{\vx, \vy}}(\vx + \vy)\right]_{\eps=0}\\
&= \PT^K_{\vx \to \vy}(\vxi).
\end{align*}
Since parallel transport preserves norms and vectors in the orthonormal basis have norm 1, the change
is $\norm{\mathrm{d}\PT^K_{\vx \to \vy}(\vxi)}_2 = \norm{\PT^K_{\vx \to \vy}(\vxi)}_2 = 1$.

For computing the determinant of the exponential map Jacobian, we choose the orthonormal basis
$\{\vxi_1 = \vu / \norm{\vu}_2, \vxi_2, \ldots, \vxi_n\}$, where we just completed the basis based on the first vector.
We again look at the change with respect to each basis vector.
For the basis vector $\vxi_1$:
\begin{align*}
\mathrm{d}&\exp^K_\vx(\vxi_1) =\\
&= \at{\frac{\partial}{\partial \eps}}_{\eps=0} \exp^K_\vx\left(\vu + \eps\frac{\vu}{\norm{\vu}_2}\right)\\
&= \at{\frac{\partial}{\partial \eps}}_{\eps=0} \left[
\cos\left(\frac{|\norm{\vu}_2 + \eps|}{R}\right)\vx + \frac{R\sin\left(\frac{|\norm{\vu}_2 + \eps|}{R}\right)}{\norm{\vu}_2|\norm{\vu}_2 + \eps|}\left(\norm{\vu}_2 + \eps\right)\vu
\right]\\
&= \left[
-\frac{(\norm{\vu}_2+\eps)\sin\left(\frac{|\norm{\vu}_2 + \eps|}{R}\right)}{R|\norm{\vu}_2 + \eps|}\vx
+ \frac{\cos\left(\frac{|\norm{\vu}_2 + \eps|}{R}\right)}{\norm{\vu}_2}\vu
\right]_{\eps=0}\\
&= \cos\left(\frac{\norm{\vu}_2}{R}\right)\frac{\vu}{\norm{\vu}_2}-\sin\left(\frac{\norm{\vu}_2}{R}\right)\frac{\vx}{R},\\
\end{align*}
where the second equality is due to
$$\norm{\vu + \eps\frac{\vu}{\norm{\vu}_2}}_2 = \norm{\left(1 + \frac{\eps}{\norm{\vu}_2}\right)\vu}_2 = \left|1 + \frac{\eps}{\norm{\vu}_2}\right|\norm{\vu}_2 = |\norm{\vu}_2 + \eps|.$$

For every other basis vector $\vxi_k$ where $k > 1$:
\begin{align*}
\mathrm{d}&\exp^K_\vx(\vxi) =\\
&= \at{\frac{\partial}{\partial \eps}}_{\eps=0} \exp^K_\vx(\vu + \eps\vxi)\\
&= \at{\frac{\partial}{\partial \eps}}_{\eps=0} \left[
\cos\left(\frac{\norm{\vu + \eps\vxi}_2}{R}\right)\vx + \frac{R\sin\left(\frac{\norm{\vu + \eps\vxi}_2}{R}\right)}{\norm{\vu + \eps\vxi}_2}(\vu + \eps\vxi)
\right]\\
&= \at{\frac{\partial}{\partial \eps}}_{\eps=0} \left[
\cos\left(\frac{\sqrt{\norm{\vu}^2_2 + \eps^2}}{R}\right)\vx + \frac{R\sin\left(\frac{\sqrt{\norm{\vu}^2_2 + \eps^2}}{R}\right)}{\sqrt{\norm{\vu}^2_2 + \eps^2}}(\vu + \eps\vxi)
\right]\\
&= \left[\frac{\eps \cos\left(\frac{\sqrt{\norm{\vu}^2_2 + \eps^2}}{R}\right)}{\norm{\vu}^2_2 + \eps^2}(\vu+\eps\vxi)\right.\\
&\left.\hspace{4em} + \frac{
(R^2\norm{\vu}^2_2\vxi-R^2\eps\vu-\eps(\norm{\vu}^2_2+\eps^2)\vx)
\sin\left(\frac{\sqrt{\norm{\vu}^2_2+\eps^2}}{R}\right)}{R(\norm{\vu}^2_2+\eps^2)^{3/2}}\right]_{\eps=0}\\
&= \frac{R^2\norm{\vu}^2_2 \sin\left(\frac{\norm{\vu}_2}{R}\right)}{R(\norm{\vu}^2_2)^{3/2}}\vxi
= \frac{R\sin\left(\frac{\norm{\vu}_2}{R}\right)}{\norm{\vu}_2}\vxi,\\
\end{align*}
where the third equality holds because  
\begin{align*}
\norm{\vu + \eps\vxi}^2_2
&= \norm{\vu}^2_2 + \eps^2\norm{\vxi}^2_2 - 2\scprod{\vu, \eps\vxi}\\
&= \norm{\vu}^2_2 + \eps^2 - 2\eps\scprod{\vu, \vxi}\\
&= \norm{\vu}^2_2 + \eps^2,
\end{align*}
where the last equality relies on the fact that the basis is orthogonal, and $\vu$ is parallel to $\vxi_1 = \vu / \norm{\vu}_2$, hence it is orthogonal to all the other basis vectors.

Because the basis is orthonormal the determinant is a product of the norms of the computed change for each basis vector.
Therefore,
$$\det\left(\frac{\partial \PT_{\vx \to \vy}(\vv)}{\partial \vv}\right) = 1^n = 1.$$
Additionally, the following two properties hold:
\begin{align*}
\norm{\mathrm{d}\exp_\vx^K\left(\frac{\vu}{\norm{\vu}_2}\right)}_2^2
&= \norm{\cos\left(\frac{\norm{\vu}_2}{R}\right)\frac{\vu}{\norm{\vu}_2}-\sin\left(\frac{\norm{\vu}_2}{R}\right)\frac{\vx}{R}}^2_2\\
&= \sin^2\left(\frac{\norm{\vu}_2}{R}\right)\frac{\norm{\vx}_2^2}{R^2} + \cos^2\left(\frac{\norm{\vu}_2}{R}\right)\frac{\norm{\vu}^2_2}{\norm{\vu}_2^2}\\
&= \sin^2\left(\frac{\norm{\vu}_2}{R}\right) + \cos^2\left(\frac{\norm{\vu}_2}{R}\right) = 1.
\end{align*}  
and
\begin{align*}
\norm{\mathrm{d}\exp_\vx^K\left(\vxi\right)}_2^2
&= \norm{\frac{R\sin\left(\frac{\norm{\vu}_2}{R}\right)}{\norm{\vu}_2}\vxi}^2_2\\
&= \frac{R^2\sin^2\left(\frac{\norm{\vu}_2}{R}\right)}{\norm{\vu}^2_2}\norm{\vxi}^2_2\\
&= \frac{R^2\sin^2\left(\frac{\norm{\vu}_2}{R}\right)}{\norm{\vu}^2_2}.
\end{align*}
Therefore, we obtain
$$\det\left(\frac{\partial \exp^K_\vx(\vu)}{\partial \vu}\right) = 1 \cdot \left(\frac{R\left|\sin\left(\frac{\norm{\vu}_2}{R}\right)\right|}{\norm{\vu}_2}\right)^{n-1}.$$
Finally,
$$\det\left(\frac{\partial f(\vv)}{\partial \vv}\right)
= \det\left(\frac{\partial \exp^K_\vmu(\vu)}{\partial \vu}\right) \cdot \det\left(\frac{\partial \PT^K_{\vmu_0 \to \vmu}(\vv)}{\partial \vv}\right) = \left(\frac{R\left|\sin\left(\frac{\norm{\vu}_2}{R}\right)\right|}{\norm{\vu}_2}\right)^{n-1}.$$
\end{proof}

\begin{theorem}[Probability density function of $\WN(\vz; \vmu, \mSigma)$ in $\poi_K^n$]\label{thm:lpdf_poi}
$$\log\WN_{\poi_K^n}(\vz; \vmu, \mSigma) = \log\WN_{\hyp^n_K}(\rho^{-1}_K(\vz); \rho^{-1}_K(\vmu), \mSigma).$$
\end{theorem}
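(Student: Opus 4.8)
The plan is to exploit the fact that the stereographic projection $\rho_K$ is not merely a distance-preserving bijection but a full Riemannian isometry between $\hyp_K^n$ and $\poi_K^n$: the Poincar\'e metric is obtained precisely as the pushforward of the hyperboloid metric under $\rho_K$ (equivalently, Theorem~\ref{thm:backproj_poi} together with the definition $d_\poi(\vx,\vy)=d_\hyp(\rho_K^{-1}(\vx),\rho_K^{-1}(\vy))$ shows $\rho_K$ preserves distances, hence the metric tensor and the Riemannian volume measure). Because the Wrapped Normal is built entirely out of intrinsic Riemannian operations, this isometry transports the whole construction from one model to the other, and the claimed identity becomes a change-of-variables statement.

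First I would record how the intrinsic operations intertwine with $\rho_K$. Writing $L=\mathrm{d}\rho_K$ for the differential, a Riemannian isometry satisfies $\rho_K(\vmu_0)=\B{0}$, $\rho_K\circ\exp_\vx^K=\exp_{\rho_K(\vx)}^K\circ L$, and $L\circ\PT^K_{\vx\to\vy}=\PT^K_{\rho_K(\vx)\to\rho_K(\vy)}\circ L$. Setting $\vmu'=\rho_K^{-1}(\vmu)$ and composing these, the two sampling maps $f_\hyp=\exp_{\vmu'}^K\circ\PT^K_{\vmu_0\to\vmu'}$ and $f_\poi=\exp_\vmu^K\circ\PT^K_{\B{0}\to\vmu}$ satisfy the single clean identity $\rho_K\circ f_\hyp=f_\poi\circ L|_{\vmu_0}$.

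Next I would push the tangent-space Gaussian through this identity. Since $L|_{\vmu_0}$ is a linear isometry of the tangent inner-product spaces, it sends an orthonormal frame to an orthonormal frame, so $\mc{N}(\B{0},\mSigma)$ expressed in orthonormal coordinates is invariant under $L|_{\vmu_0}$; combined with the intertwining identity, the law of $\rho_K(f_\hyp(\vv))$ equals the law of $f_\poi(\vv)$, so the pushforward measures agree, and because $\rho_K$ preserves the Riemannian volume the corresponding densities obey $\WN_{\poi_K^n}(\vz;\vmu,\mSigma)=\WN_{\hyp_K^n}(\rho_K^{-1}(\vz);\rho_K^{-1}(\vmu),\mSigma)$. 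Equivalently, one may argue at the level of the decomposition $\log\mc{N}(\vv;\B{0},\mSigma)-\log\det(\partial f/\partial\vv)$ used in Theorem~\ref{thm:lpdf_hyp}: the Gaussian terms match because the orthonormal coordinates of $\vv$ coincide, and the Jacobian terms match because the chain rule applied to $\rho_K\circ f_\hyp=f_\poi\circ L|_{\vmu_0}$ gives $\det(\partial f_\poi/\partial\vv)=\det(\partial f_\hyp/\partial\vv)$ once every determinant is taken against orthonormal bases (the determinant of $L$ being $\pm1$).

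The main obstacle is the tangent-space coordinate convention, hidden in the conformal factor $\lambda_{\B{0}}^K=2$. In ambient $\R^n$ coordinates $L|_{\vmu_0}$ acts as multiplication by $\frac{1}{2}$ — one checks directly that $\rho_K(\exp^K_{\vmu_0}(\vv))=\exp^K_{\B{0}}(\tilde{\vv}/2)$ for $\vv=(0;\tilde{\vv}^T)^T$, using $\sinh a/(1+\cosh a)=\tanh(a/2)$ — so a naive ambient-coordinate reading would introduce a spurious factor of $\frac{1}{4}$ in the covariance. The proof must therefore make explicit that $\mSigma$ is specified in orthonormal coordinates of the tangent space, equivalently that the Jacobian in Theorem~\ref{thm:lpdf_hyp} is taken against orthonormal frames, so that the conformal factor is absorbed and the identical $\mSigma$ legitimately appears on both sides.
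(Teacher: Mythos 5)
Your proof is correct and takes essentially the same route as the paper: the paper's entire proof is the one-line remark that the identity follows from Theorems~\ref{thm:lpdf_hyp} and~\ref{thm:backproj_poi}, i.e.\ from the fact that $\rho_K$ is an isometry carrying the hyperboloid's Wrapped Normal construction (tangent Gaussian, parallel transport, exponential map) onto the Poincar\'e ball's, which is precisely the argument you make explicit via the intertwining identity $\rho_K \circ f_\hyp = f_\poi \circ \mathrm{d}\rho_K|_{\vmu_0}$ and volume preservation. Your closing caveat --- that the identity with the \emph{same} $\mSigma$ on both sides only makes sense when the tangent Gaussians are read in corresponding orthonormal frames, so that $\lambda^K_{\B{0}}=2$ (equivalently $\mathrm{d}\rho_K|_{\vmu_0}=\tfrac{1}{2}\,\mathrm{Id}$ in ambient coordinates) is absorbed rather than producing a spurious factor of $\tfrac{1}{4}$ in the covariance --- is a genuine subtlety that the paper's terse proof leaves entirely implicit.
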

\begin{proof}
Follows from Theorem~\ref{thm:lpdf_hyp} and \ref{thm:backproj_poi}.

Also proven by \citep{hvae} in a slightly different form for a scalar scale parameter $\WN(\vz; \vmu, \sigma^2\eye)$.
Given
$$\log\mc{N}(\vz; \vmu, \sigma^2\eye) = -\frac{d_\euc(\vmu, \vz)^2}{2\sigma^2} - \frac{n}{2}\log\left(2\pi\sigma^2\right)
$$
\begin{align*}
\log\WN(\vz; \vmu, \sigma^2\eye) = &-\frac{d_\poi^K(\vmu, \vz)^2}{2\sigma^2} - \frac{n}{2}\log\left(2\pi\sigma^2\right)\\ &+ (n-1) \log\left(\frac{\sqrt{-K}d_\poi^K(\vmu, \vz)}{\sinh(\sqrt{-K}d_\poi^K(\vmu, \vz))}\right).
\end{align*}
\end{proof}

\begin{theorem}[Probability density function of $\WN(\vz; \vmu, \mSigma)$ in $\spr_K^n$]\label{thm:lpdf_spr}
$$\log\WN_{\spr_K^n}(\vz; \vmu, \mSigma) = \log\WN_{\sph^n_K}(\rho^{-1}_K(\vz); \rho^{-1}_K(\vmu), \mSigma).$$
\end{theorem}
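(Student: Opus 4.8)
The plan is to mirror exactly the proof of Theorem~\ref{thm:lpdf_poi}, replacing the Poincar\'e ball / hyperboloid pair $(\poi_K^n, \hyp_K^n)$ by the spherical pair $(\spr_K^n, \sph_K^n)$, and invoking Theorem~\ref{thm:lpdf_sph} in place of Theorem~\ref{thm:lpdf_hyp} and Theorem~\ref{thm:backproj_spr} in place of Theorem~\ref{thm:backproj_poi}. First I would recall that the Wrapped Normal on $\spr_K^n$ is defined by the same push-forward recipe as on the sphere: sample $\vv \sim \mc{N}(\B{0}, \mSigma)$ in $\tang{\vmu_0}\spr_K^n$, parallel-transport it to $\vmu$, and apply $\exp^K_\vmu$, using the projected-space operations of Table~\ref{tab:summary_ops_proj}. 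The density in the statement is understood with respect to the Riemannian volume measure of each manifold.

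The heart of the argument is that the stereographic projection $\rho_K$ is an \emph{isometry} between $\sph_K^n \setminus \{-\vmu_0\}$ and $\spr_K^n$. This is essentially by construction: the projected distance was \emph{defined} by $d_\spr(\vx,\vy) = d_\sph(\rho_K^{-1}(\vx), \rho_K^{-1}(\vy))$, and Theorem~\ref{thm:backproj_spr} guarantees that $\rho_K^{-1}$ lands on $\sph_K^n$, so $\rho_K$ carries the induced metric of the sphere onto that of $\spr_K^n$. Since the exponential map, logarithmic map, and parallel transport are intrinsic Riemannian notions, I would argue that $\rho_K$ intertwines them, i.e.
\begin{align*}
\exp^K_{\rho_K^{-1}(\vmu)} \circ\, \mathrm{d}\rho_K^{-1} = \rho_K^{-1} \circ \exp^K_\vmu, \qquad \PT^K_{\rho_K^{-1}(\vmu_0) \to \rho_K^{-1}(\vmu)} \circ\, \mathrm{d}\rho_K^{-1} = \mathrm{d}\rho_K^{-1} \circ \PT^K_{\vmu_0 \to \vmu}.
\end{align*}
Consequently the sampling procedure defining $\WN_{\spr_K^n}(\vmu, \mSigma)$ is, point for point, the image under $\rho_K$ of the procedure defining $\WN_{\sph_K^n}(\rho_K^{-1}(\vmu), \mSigma)$, with the \emph{same} Gaussian $\mc{N}(\B{0}, \mSigma)$ in the common tangent space at the origin.

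Finally, because an isometry pushes the Riemannian volume measure of $\sph_K^n$ exactly onto that of $\spr_K^n$ (its volume Jacobian is identically $1$), the change of variables introduces no extra factor, so the two densities agree and taking logarithms yields the claim. As a cross-check I would also keep in reserve the purely computational route that parallels Theorem~\ref{thm:lpdf_sph}: express $\log\WN_{\spr_K^n}$ through its own transformation $f = \exp^K_\vmu \circ \PT^K_{\vmu_0 \to \vmu}$ together with the log-determinant of its Jacobian (using the projected operations), and then verify that the Jacobian of $\rho_K^{-1}$ precisely reconciles this with the spherical $(n-1)\log(\cdot)$ correction term of Theorem~\ref{thm:lpdf_sph}.

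The main obstacle is precisely this reconciliation: the projected-space operations in Table~\ref{tab:summary_ops_proj} were obtained through the gyrovector formalism rather than defined outright as the $\rho_K$-push-forwards of the spherical operations, so I must confirm that they genuinely coincide with those push-forwards and that $\rho_K$ is a true Riemannian isometry (not merely conformal in the flat sense). Once that intertwining and the attendant volume preservation are established, the remainder is the same hyperbolic-to-trigonometric substitution ($\sinh \to \sin$, $R^2 = -1/K \to R^2 = 1/K$) already displayed between Theorems~\ref{thm:lpdf_hyp} and~\ref{thm:lpdf_sph}.
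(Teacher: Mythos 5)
Your proposal is correct and follows essentially the same route as the paper: the paper's own proof is a one-line appeal to Theorem~\ref{thm:lpdf_sph} together with the backprojection result (Theorem~\ref{thm:backproj_spr}, i.e.~Theorem~\ref{thm:backproj_poi} adapted from $\poi$ to $\spr$), which implicitly relies on exactly the fact you make explicit --- that $\rho_K$ is a Riemannian isometry (the metric and distance on $\spr_K^n$ being defined as pullbacks through $\rho_K^{-1}$), so the density transfers with no extra Jacobian factor. Your more careful treatment, including the intertwining of $\exp$, $\log$, and $\PT$ and the concern about whether the gyrovector-derived operations of Table~\ref{tab:summary_ops_proj} agree with the $\rho_K$-push-forwards, spells out details the paper leaves unstated rather than departing from its argument.
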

\begin{proof}
Follows from Theorem~\ref{thm:lpdf_sph} and \ref{thm:backproj_poi} adapted from $\poi$ to $\spr$.
\end{proof}

\section{Related work}\label{sec:relwork}

\paragraph{Universal models of geometry}

Duality between spaces of constant curvature was first noticed by \citet{lambert1770},
and later gave rise to various theorems that have the same or similar forms in all three geometries,
like the law of sines \citep{bolyai1832}
$$\frac{\sin A}{p_K(a)} = \frac{\sin B}{p_K(b)} = \frac{\sin C}{p_K(c)},$$
where $p_K(r) = 2\pi\sin_K(r)$ denotes the circumference of a circle of radius $r$ in a space of constant curvature $K$,
and $$\sin_K(x) = x - \frac{Kx^3}{3!} + \frac{K^2x^5}{5!} - \ldots = \sum_{i=0}^\infty \frac{(-1)^i K^i x^{2i+1}}{(2i+1)!} .$$
Other unified formulas for the law of cosines, or recently, a unified Pythagorean theorem has also been proposed \citep{uni_pythagoreas}:
$$A(c) = A(a) + A(b) - \frac{K}{2\pi}A(a)A(b),$$
where $A(r)$ is the area of a circle of radius $r$ in a space of constant curvature $K$.
Unfortunately, in this formulation $A(r)$ still depends on the sign of $K$ w.r.t.~the choice of trigonometric functions in its definition. 

There also exist approaches defining a universal geometry of constant curvature spaces.
\citet[Chapter~4]{uni_model} define a unified model of all three geometries using the null cone (light cone) of a Minkowski space.
The term ``Minkowski space'' comes from special relativity and is usually denoted as $\R^{1,n}$, similar to the ambient space of what we defined as $\hyp^n$, with the Lorentz scalar product $\lprod{\cdot, \cdot}$.
The hyperboloid $\hyp^n$ corresponds to the positive (upper, future) null cone of $\R^{1,n}$. All the other models
can be defined in this space using the appropriate stereographic projections and pulling back the metric onto the specific sub-manifold. Unfortunately, we found the formalism not useful for our application, apart from providing a very interesting theoretical connection among the models.

\paragraph{Concurrent VAE approaches}

The variational autoencoder was originally proposed in \citet{vae} and concurrently in \citet{vae2}.
One of the most common improvements on the VAE in practice is the choice of the encoder and decoder maps,
ranging from linear parametrizations of the posterior to feed-forward neural networks, convolutional neural networks, etc.
For different data domains, extensions like the GraphVAE \citep{graphvae} using graph convolutional neural networks for the encoder and decoder were proposed.

The basic VAE framework was mostly improved upon by using autoregressive flows \citep{vlae}
or small changes to the ELBO loss function \citep{betavae,burda}.
An important work in this area is $\beta$-VAE, which adds a simple scalar multiplicative constant to the KL divergence
term in the ELBO, and has shown to improve both sample quality and (if $\beta > 1$) disentanglement
of different dimensions in the latent representation.
For more information on disentanglement, see \citet{cca}.

\paragraph{Geometric deep learning}

One of the emerging trends in deep learning has been to leverage non-Euclidean
geometry to
learn representations, originally emerging from knowledge-base and graph representation learning \citep{noneucldata}.

Recently, several approaches to learning representations in Euclidean spaces have been generalized to non-Euclidean spaces \citep{hyp_text_emb,entailment_cones,hypemb}.
Since then, this research direction has grown immensely and accumulated more approaches, mostly for hyperbolic spaces, like \citet{hnn,lorentz,hypglove,lorentz_dist_learning}. Similarly, spherical spaces have also been leveraged
for learning non-Euclidean representations \citep{sphemb,sphemb2}.

To be able to learn representations in these spaces, new Riemannian optimization methods
were required as well \citep{hyp_gd,rsgd,radam}.

The generalization to products of constant curvature Riemannian manifolds is only natural and has been proposed by \citet{products}.
They evaluated their approach by directly optimizing a distance-based loss function using Riemannian optimization in products of spaces on graph reconstruction and word analogy tasks,
in both cases reaping the benefits of non-Euclidean geometry, especially when learning lower-dimensional representations. Further use of product spaces with constant curvature components to train Graph Convolutional Networks was concurrently with this work done by \citet{ccgcn}.

\paragraph{Geometry in VAEs}

One of the first attempts at leveraging geometry in VAEs was
\citet{latent_odd}. They examine how a Euclidean VAE benefits both in sample quality and latent representation distribution
quality when employing a non-Euclidean Riemannian metric in the latent space using kernel transformations.

Hence, a potential improvement area of VAEs could be the choice of the posterior family and prior distribution.
However, the Gaussian (Normal) distribution works very well in practice, as it is the maximum entropy probability distribution
with a known variance, and imposes no constraints on higher-order moments (skewness, kurtosis, etc.) of the distribution.
Recently, non-Euclidean geometry has been used in learning variational autoencoders as well.
Generalizing Normal distributions to these spaces is in general non-trivial..

Two similar approaches, \citet{svae} and \citet{svae2}, used the von Mises-Fischer distribution on the unit hypersphere
to generalize VAEs to spherical spaces. The von Mises-Fischer distribution is again a maximum entropy probability distribution
on the unit hypersphere, but only has a spherical covariance parameter, which makes it less general than a Gaussian distribution.

Conversely, two approaches, \citet{hvae} and \citet{phg}, have generalized VAEs to hyperbolic spaces -- both the Poincar\'{e} ball and the hyperboloid, respectively. They both adopt a non-maximum entropy probability distribution called the Wrapped Normal. Additionally, \citet{hvae} also derive the Riemannian Normal, which is a maximum entropy distribution on the Poincar\'{e} disk, but in practice performs similar to the Wrapped Normal, especially in higher dimensions.

Our approach generalizes on the afore-mentioned geometrical VAE work, by employing a ``products of spaces'' approach similar to \citet{products} and unifying the different approaches into a single framework for all spaces of constant curvature.

\section{Extended future work}\label{sec:futurework}

Even though we have shown that one can approximate the true posterior very well with Normal-like distributions in
Riemannian manifolds of constant curvature, there remain several promising directions of explorations.

First of all, an interesting extension of this work would be to try mixed-curvature VAEs on graph data,
e.g.~link prediction on social networks, as some of our models might be well suited for sparse
and structured data. Another very beneficial extension would be to investigate why
the obtained results have a relatively big variance across runs and try to reduce it. However, this is a problem that
affects the Euclidean VAE as well, even if not as flagrantly.

Secondly, we have empirically noticed that it seems to be significantly harder to optimize our models
in spherical spaces -- they seem more prone to divergence. In discussions, other researchers
have also observed similar behavior, but a more thorough investigation is not available at the moment.
We have side-stepped some optimization problems by introducing products of spaces -- previously,
it has been reported that both spherical and hyperbolic VAEs generally do not scale well to
dimensions greater than 20 or 40. For those cases, we could successfully optimize a subdivided space $(\sph^2)^{36}$ instead of one big manifold $\sph^{72}$.
However, that also does not seem to be a conclusive rule.
Especially in higher dimensions, we have noticed that our VAEs $(\sph^2)^{36}$ with learnable curvature and $\spr^{72}_1$ with fixed curvature seem to consistently diverge. In a few cases $\sph^{72}$ with fixed curvature and 
even the product $(\euc^2)^{12} \times (\hyp^2)^{12} \times (\sph^2)^{12}$ with learnable curvature seemed to diverge quite often as well.

The most promising future direction seems to be the use of ``Normalizing Flows'' for variational inference as presented by \citet{normalizing_flows}
and \citet{normalizing_flows_rie}.
More recently, it was also combined with ``autoregressive flows'' in \citet{neural_autoreg}.
Using normalizing flows, one should be able to achieve the desired level of complexity of the latent distribution in a VAE,
which should, similarly to our work, help to approximate the true posterior of the data better. The advantage of normalizing flows
is the flexibility of the modeled distributions, at the expense of being more computationally expensive.

Finally, another interesting extension would be to extend the defined geometrical models to allow for training generative adversarial networks (GANs) \citep{gan} in products of constant curvature spaces and benefit from the better sharpness and quality of samples that GANs provide.
Finally, one could synthesize the above to achieve adversarially trained autoencoders in Riemannian manifolds similarly to \citet{adv_reg_graphvae,adv_reg_vae,adv_vae} and aim to achieve good sample quality and a well-formed latent space at the same time.

\section{Extended results}\label{app:res}

\begin{table}[!ht]
  \centering
  \caption{Summary of results (mean and standard-deviation) with latent space dimension of 6, spherical covariance parametrization, on the BDP dataset.}
  \label{tab:bdp_z6_scalar}
  \begin{tabular}{l|rrrrr}
    \toprule
    \textbf{Model} & LL & ELBO & BCE & KL \\
    \midrule
$(\sph_{1}^{2})^{3}$ & $-55.89${\scriptsize$\pm 0.36$} & $-56.72${\scriptsize$\pm 0.40$} & $51.01${\scriptsize$\pm 0.31$} & $5.72${\scriptsize$\pm 0.09$}\\
$\sph_{1}^{6}$ & $-55.81${\scriptsize$\pm 0.35$} & $-56.57${\scriptsize$\pm 0.44$} & $51.16${\scriptsize$\pm 0.78$} & $5.41${\scriptsize$\pm 0.42$}\\
$(\vmf\;\sph_{1}^{2})^{3}$ & $-57.87${\scriptsize$\pm 1.52$} & $-58.64${\scriptsize$\pm 1.63$} & $53.96${\scriptsize$\pm 2.16$} & $4.68${\scriptsize$\pm 0.53$}\\
$\vmf\;\sph_{1}^{6}$ & $-58.78${\scriptsize$\pm 0.83$} & $-60.74${\scriptsize$\pm 2.29$} & $56.03${\scriptsize$\pm 2.64$} & $4.71${\scriptsize$\pm 0.48$}\\
$(\spr_{1}^{2})^{3}$ & $-56.01${\scriptsize$\pm 0.24$} & $-56.67${\scriptsize$\pm 0.31$} & $51.02${\scriptsize$\pm 0.40$} & $5.65${\scriptsize$\pm 0.10$}\\
$\spr_{1}^{6}$ & $-55.78${\scriptsize$\pm 0.07$} & $-56.38${\scriptsize$\pm 0.06$} & $50.85${\scriptsize$\pm 0.20$} & $5.53${\scriptsize$\pm 0.24$}\\
$(\euc^{2})^{3}$ & $-56.34${\scriptsize$\pm 0.45$} & $-56.94${\scriptsize$\pm 0.50$} & $51.32${\scriptsize$\pm 0.55$} & $5.62${\scriptsize$\pm 0.19$}\\
$\euc^{6}$ & $-56.28${\scriptsize$\pm 0.56$} & $-56.99${\scriptsize$\pm 0.59$} & $51.58${\scriptsize$\pm 0.69$} & $5.41${\scriptsize$\pm 0.29$}\\
$(\hyp_{-1}^{2})^{3}$ & $-56.08${\scriptsize$\pm 0.52$} & $-56.80${\scriptsize$\pm 0.54$} & $50.94${\scriptsize$\pm 0.38$} & $5.86${\scriptsize$\pm 0.25$}\\
$\hyp_{-1}^{6}$ & $-56.18${\scriptsize$\pm 0.32$} & $-57.10${\scriptsize$\pm 0.21$} & $51.48${\scriptsize$\pm 0.47$} & $5.62${\scriptsize$\pm 0.31$}\\
$(\poi_{-1}^{2})^{3}$ & $-55.98${\scriptsize$\pm 0.62$} & $-56.49${\scriptsize$\pm 0.62$} & $50.96${\scriptsize$\pm 0.61$} & $5.52${\scriptsize$\pm 0.31$}\\
$\poi_{-1}^{6}$ & $-56.74${\scriptsize$\pm 0.55$} & $-57.61${\scriptsize$\pm 0.74$} & $52.01${\scriptsize$\pm 0.71$} & $5.60${\scriptsize$\pm 0.24$}\\
$(\RN\;\poi_{-1}^{2})^{3}$ & $-54.99${\scriptsize$\pm 0.12$} & $-55.90${\scriptsize$\pm 0.13$} & $52.42${\scriptsize$\pm 0.71$} & $3.48${\scriptsize$\pm 0.60$}\\
\midrule
$(\sph^{2})^{3}$ & $-56.05${\scriptsize$\pm 0.21$} & $-56.69${\scriptsize$\pm 0.36$} & $51.07${\scriptsize$\pm 0.21$} & $5.61${\scriptsize$\pm 0.22$}\\
$(\vmf\;\sph^{2})^{3}$ & $-57.56${\scriptsize$\pm 0.88$} & $-57.80${\scriptsize$\pm 0.89$} & $52.68${\scriptsize$\pm 1.62$} & $5.12${\scriptsize$\pm 0.84$}\\
$\sph^{6}$ & $-56.06${\scriptsize$\pm 0.51$} & $-56.65${\scriptsize$\pm 0.64$} & $50.93${\scriptsize$\pm 0.38$} & $5.72${\scriptsize$\pm 0.40$}\\
$\vmf\;\sph^{6}$ & $-58.21${\scriptsize$\pm 0.92$} & $-59.87${\scriptsize$\pm 1.51$} & $54.99${\scriptsize$\pm 1.79$} & $4.88${\scriptsize$\pm 0.39$}\\
$(\spr^{2})^{3}$ & $-56.06${\scriptsize$\pm 0.36$} & $-56.69${\scriptsize$\pm 0.54$} & $50.95${\scriptsize$\pm 0.40$} & $5.74${\scriptsize$\pm 0.17$}\\
$\spr^{6}$ & $-56.10${\scriptsize$\pm 0.25$} & $-56.69${\scriptsize$\pm 0.17$} & $50.90${\scriptsize$\pm 0.19$} & $5.79${\scriptsize$\pm 0.03$}\\
$(\hyp^{2})^{3}$ & $-55.80${\scriptsize$\pm 0.32$} & $-56.72${\scriptsize$\pm 0.16$} & $51.14${\scriptsize$\pm 0.39$} & $5.58${\scriptsize$\pm 0.28$}\\
$\hyp^{6}$ & $-56.03${\scriptsize$\pm 0.21$} & $-56.82${\scriptsize$\pm 0.20$} & $50.99${\scriptsize$\pm 0.16$} & $5.83${\scriptsize$\pm 0.27$}\\
$(\poi^{2})^{3}$ & $-56.29${\scriptsize$\pm 0.05$} & $-57.11${\scriptsize$\pm 0.22$} & $51.41${\scriptsize$\pm 0.19$} & $5.69${\scriptsize$\pm 0.30$}\\
$\poi^{6}$ & $-56.40${\scriptsize$\pm 0.31$} & $-57.13${\scriptsize$\pm 0.25$} & $51.17${\scriptsize$\pm 0.33$} & $5.96${\scriptsize$\pm 0.27$}\\
$(\RN\;\poi^{2})^{3}$ & $-56.25${\scriptsize$\pm 0.56$} & $-57.26${\scriptsize$\pm 0.45$} & $53.16${\scriptsize$\pm 1.07$} & $4.11${\scriptsize$\pm 0.64$}\\
\midrule
$\spr^{2} \times \euc^{2} \times \poi^{2}$ & $-55.87${\scriptsize$\pm 0.22$} & $-56.35${\scriptsize$\pm 0.22$} & $50.67${\scriptsize$\pm 0.57$} & $5.69${\scriptsize$\pm 0.43$}\\
$\spr_{1}^{2} \times \euc^{2} \times \poi_{-1}^{2}$ & $-56.06${\scriptsize$\pm 0.41$} & $-56.86${\scriptsize$\pm 0.65$} & $51.23${\scriptsize$\pm 0.67$} & $5.64${\scriptsize$\pm 0.11$}\\
$\spr^{2} \times \euc^{2} \times (\RN\;\poi^{2})$ & $-56.35${\scriptsize$\pm 0.82$} & $-57.06${\scriptsize$\pm 0.78$} & $51.89${\scriptsize$\pm 0.71$} & $5.17${\scriptsize$\pm 0.12$}\\
$\spr_{1}^{2} \times \euc^{2} \times (\RN\;\poi_{-1}^{2})$ & $-56.17${\scriptsize$\pm 0.43$} & $-56.75${\scriptsize$\pm 0.56$} & $51.80${\scriptsize$\pm 0.80$} & $4.95${\scriptsize$\pm 0.32$}\\
$\euc^{2} \times \hyp^{2} \times \sph^{2}$ & $-55.92${\scriptsize$\pm 0.42$} & $-56.54${\scriptsize$\pm 0.45$} & $51.13${\scriptsize$\pm 0.74$} & $5.41${\scriptsize$\pm 0.40$}\\
$\euc^{2} \times \hyp_{-1}^{2} \times \sph_{1}^{2}$ & $-56.04${\scriptsize$\pm 0.57$} & $-56.71${\scriptsize$\pm 0.77$} & $51.09${\scriptsize$\pm 0.86$} & $5.62${\scriptsize$\pm 0.12$}\\
$\euc^{2} \times \hyp^{2} \times (\vmf\;\sph^{2})$ & $-55.82${\scriptsize$\pm 0.43$} & $-56.32${\scriptsize$\pm 0.47$} & $51.10${\scriptsize$\pm 0.67$} & $5.21${\scriptsize$\pm 0.20$}\\
$\euc^{2} \times \hyp_{-1}^{2} \times (\vmf\;\sph_{1}^{2})$ & $-55.77${\scriptsize$\pm 0.51$} & $-56.34${\scriptsize$\pm 0.65$} & $51.33${\scriptsize$\pm 0.57$} & $5.01${\scriptsize$\pm 0.17$}\\
\midrule
$(\uni^{2})^{3}$ & $-55.56${\scriptsize$\pm 0.15$} & $-56.05${\scriptsize$\pm 0.32$} & $50.68${\scriptsize$\pm 0.23$} & $5.37${\scriptsize$\pm 0.10$}\\
$\uni^{6}$ & $-55.84${\scriptsize$\pm 0.38$} & $-56.46${\scriptsize$\pm 0.41$} & $50.66${\scriptsize$\pm 0.38$} & $5.81${\scriptsize$\pm 0.18$}\\
    \bottomrule
  \end{tabular}
\end{table}

\begin{table}[!ht]
  \centering
  \caption{Summary of results (mean and standard-deviation) with latent space dimension of 6, spherical covariance parametrization, on the MNIST dataset.}
  \label{tab:mnist_z6_scalar}
  \begin{tabular}{l|rrrrr}
    \toprule
    \textbf{Model} & LL & ELBO & BCE & KL \\
\midrule
$(\sph_{1}^{2})^{3}$ & $-96.77${\scriptsize$\pm 0.26$} & $-101.66${\scriptsize$\pm 0.32$} & $87.04${\scriptsize$\pm 0.49$} & $14.62${\scriptsize$\pm 0.18$}\\
$(\vmf\;\sph_{1}^{2})^{3}$ & $-97.72${\scriptsize$\pm 0.22$} & $-102.98${\scriptsize$\pm 0.15$} & $87.77${\scriptsize$\pm 0.18$} & $15.21${\scriptsize$\pm 0.07$}\\
$\sph_{1}^{6}$ & $-96.71${\scriptsize$\pm 0.17$} & $-101.55${\scriptsize$\pm 0.30$} & $86.90${\scriptsize$\pm 0.30$} & $14.65${\scriptsize$\pm 0.10$}\\
$\vmf\;\sph_{1}^{6}$ & $-97.03${\scriptsize$\pm 0.14$} & $-102.12${\scriptsize$\pm 0.26$} & $87.42${\scriptsize$\pm 0.28$} & $14.69${\scriptsize$\pm 0.03$}\\
$(\spr_{1}^{2})^{3}$ & $-97.84${\scriptsize$\pm 0.10$} & $-102.75${\scriptsize$\pm 0.22$} & $88.43${\scriptsize$\pm 0.12$} & $14.33${\scriptsize$\pm 0.13$}\\
$\spr_{1}^{6}$ & $-98.21${\scriptsize$\pm 0.23$} & $-103.02${\scriptsize$\pm 0.14$} & $88.44${\scriptsize$\pm 0.05$} & $14.58${\scriptsize$\pm 0.11$}\\
$(\euc^{2})^{3}$ & $-97.04${\scriptsize$\pm 0.14$} & $-101.44${\scriptsize$\pm 0.18$} & $86.77${\scriptsize$\pm 0.22$} & $14.67${\scriptsize$\pm 0.22$}\\
$\euc^{6}$ & $-97.16${\scriptsize$\pm 0.15$} & $-101.67${\scriptsize$\pm 0.14$} & $87.17${\scriptsize$\pm 0.26$} & $14.50${\scriptsize$\pm 0.20$}\\
$(\hyp_{-1}^{2})^{3}$ & $-97.31${\scriptsize$\pm 0.09$} & $-102.20${\scriptsize$\pm 0.29$} & $87.81${\scriptsize$\pm 0.23$} & $14.39${\scriptsize$\pm 0.13$}\\
$\hyp_{-1}^{6}$ & $-97.10${\scriptsize$\pm 0.44$} & $-101.89${\scriptsize$\pm 0.33$} & $87.32${\scriptsize$\pm 0.22$} & $14.56${\scriptsize$\pm 0.20$}\\$(\poi_{-1}^{2})^{3}$ & $-97.56${\scriptsize$\pm 0.04$} & $-102.33${\scriptsize$\pm 0.22$} & $87.93${\scriptsize$\pm 0.32$} & $14.40${\scriptsize$\pm 0.10$}\\
$(\RN\;\poi_{-1}^{2})^{3}$ & $-92.54${\scriptsize$\pm 0.19$} & $-97.19${\scriptsize$\pm 0.21$} & $88.42${\scriptsize$\pm 0.20$} & $8.76${\scriptsize$\pm 0.04$}\\
$\poi_{-1}^{6}$ & $-97.80${\scriptsize$\pm 0.05$} & $-102.60${\scriptsize$\pm 0.04$} & $88.14${\scriptsize$\pm 0.08$} & $14.46${\scriptsize$\pm 0.07$}\\
\midrule
$(\sph^{2})^{3}$ & $-96.46${\scriptsize$\pm 0.12$} & $-101.30${\scriptsize$\pm 0.17$} & $86.79${\scriptsize$\pm 0.25$} & $14.51${\scriptsize$\pm 0.09$}\\
$(\vmf\;\sph^{2})^{3}$ & $-97.62${\scriptsize$\pm 0.30$} & $-102.72${\scriptsize$\pm 0.37$} & $87.48${\scriptsize$\pm 0.37$} & $15.24${\scriptsize$\pm 0.03$}\\
$\sph^{6}$ & $-96.72${\scriptsize$\pm 0.15$} & $-101.39${\scriptsize$\pm 0.16$} & $86.69${\scriptsize$\pm 0.15$} & $14.70${\scriptsize$\pm 0.13$}\\
$\vmf\;\sph^{6}$ & $-96.72${\scriptsize$\pm 0.18$} & $-101.55${\scriptsize$\pm 0.21$} & $86.82${\scriptsize$\pm 0.23$} & $14.73${\scriptsize$\pm 0.02$}\\
$(\spr^{2})^{3}$ & $-97.68${\scriptsize$\pm 0.24$} & $-102.51${\scriptsize$\pm 0.44$} & $88.11${\scriptsize$\pm 0.34$} & $14.41${\scriptsize$\pm 0.11$}\\
$\spr^{6}$ & $-97.72${\scriptsize$\pm 0.15$} & $-102.31${\scriptsize$\pm 0.16$} & $87.70${\scriptsize$\pm 0.22$} & $14.61${\scriptsize$\pm 0.06$}\\
$(\hyp^{2})^{3}$ & $-97.37${\scriptsize$\pm 0.13$} & $-102.07${\scriptsize$\pm 0.24$} & $87.56${\scriptsize$\pm 0.30$} & $14.51${\scriptsize$\pm 0.11$}\\
$\hyp^{6}$ & $-97.47${\scriptsize$\pm 0.16$} & $-102.18${\scriptsize$\pm 0.20$} & $87.64${\scriptsize$\pm 0.23$} & $14.53${\scriptsize$\pm 0.07$}\\
$(\poi^{2})^{3}$ & $-97.62${\scriptsize$\pm 0.05$} & $-102.34${\scriptsize$\pm 0.16$} & $87.92${\scriptsize$\pm 0.16$} & $14.43${\scriptsize$\pm 0.06$}\\
$(\RN\;\poi^{2})^{3}$ & $-94.16${\scriptsize$\pm 0.68$} & $-98.65${\scriptsize$\pm 0.66$} & $89.27${\scriptsize$\pm 0.79$} & $9.38${\scriptsize$\pm 0.15$}\\
$\poi^{6}$ & $-97.71${\scriptsize$\pm 0.24$} & $-102.55${\scriptsize$\pm 0.21$} & $88.24${\scriptsize$\pm 0.23$} & $14.32${\scriptsize$\pm 0.04$}\\
\midrule
$\spr^{2} \times \euc^{2} \times \poi^{2}$ & $-97.48${\scriptsize$\pm 0.18$} & $-102.22${\scriptsize$\pm 0.29$} & $87.85${\scriptsize$\pm 0.17$} & $14.37${\scriptsize$\pm 0.13$}\\
$\spr_{1}^{2} \times \euc^{2} \times \poi_{-1}^{2}$ & $-97.58${\scriptsize$\pm 0.13$} & $-102.23${\scriptsize$\pm 0.15$} & $87.75${\scriptsize$\pm 0.15$} & $14.49${\scriptsize$\pm 0.15$}\\
$\spr^{2} \times \euc^{2} \times (\RN\;\poi^{2})$ & $-96.43${\scriptsize$\pm 0.47$} & $-101.31${\scriptsize$\pm 0.51$} & $88.82${\scriptsize$\pm 0.50$} & $12.50${\scriptsize$\pm 0.03$}\\
$\spr_{1}^{2} \times \euc^{2} \times (\RN\;\poi^{2}_{-1})$ & $-96.18${\scriptsize$\pm 0.21$} & $-100.91${\scriptsize$\pm 0.31$} & $88.58${\scriptsize$\pm 0.47$} & $12.33${\scriptsize$\pm 0.19$}\\
$\euc^{2} \times \hyp^{2} \times \sph^{2}$ & $-96.80${\scriptsize$\pm 0.20$} & $-101.60${\scriptsize$\pm 0.33$} & $87.13${\scriptsize$\pm 0.19$} & $14.47${\scriptsize$\pm 0.17$}\\
$\euc^{2} \times \hyp_{-1}^{2} \times \sph_{1}^{2}$ & $-96.76${\scriptsize$\pm 0.09$} & $-101.48${\scriptsize$\pm 0.13$} & $86.99${\scriptsize$\pm 0.17$} & $14.49${\scriptsize$\pm 0.05$}\\
$\euc^{2} \times \hyp^{2} \times (\vmf\;\sph^{2})$ & $-96.56${\scriptsize$\pm 0.27$} & $-101.49${\scriptsize$\pm 0.28$} & $86.58${\scriptsize$\pm 0.36$} & $14.91${\scriptsize$\pm 0.14$}\\
$\euc^{2} \times \hyp_{-1}^{2} \times (\vmf\;\sph_{1}^{2})$ & $-96.76${\scriptsize$\pm 0.39$} & $-101.82${\scriptsize$\pm 0.13$} & $87.08${\scriptsize$\pm 0.06$} & $14.74${\scriptsize$\pm 0.13$}\\
\midrule
$(\uni^{2})^{3}$ & $-97.12${\scriptsize$\pm 0.04$} & $-101.68${\scriptsize$\pm 0.06$} & $87.13${\scriptsize$\pm 0.14$} & $14.55${\scriptsize$\pm 0.16$}\\
$\uni^{6}$ & $-97.26${\scriptsize$\pm 0.16$} & $-102.05${\scriptsize$\pm 0.18$} & $87.54${\scriptsize$\pm 0.21$} & $14.51${\scriptsize$\pm 0.11$}\\
    \bottomrule
  \end{tabular}
\end{table}

\begin{table}[!ht]
  \centering
  \caption{Summary of results (mean and standard-deviation) with latent space dimension of 6, diagonal covariance parametrization, on the MNIST dataset.}
  \label{tab:mnist_z6_diag}
  \begin{tabular}{l|rrrrr}
    \toprule
    \textbf{Model} & LL & ELBO & BCE & KL \\
    \midrule
$(\sph_{1}^{2})^{3}$ & $-96.57${\scriptsize$\pm 0.04$} & $-101.34${\scriptsize$\pm 0.12$} & $86.88${\scriptsize$\pm 0.17$} & $14.45${\scriptsize$\pm 0.10$}\\
$\sph_{1}^{6}$ & $-96.51${\scriptsize$\pm 0.09$} & $-101.29${\scriptsize$\pm 0.18$} & $86.71${\scriptsize$\pm 0.20$} & $14.58${\scriptsize$\pm 0.13$}\\
$(\spr_{1}^{2})^{3}$ & $-97.81${\scriptsize$\pm 0.14$} & $-102.58${\scriptsize$\pm 0.23$} & $88.31${\scriptsize$\pm 0.25$} & $14.27${\scriptsize$\pm 0.02$}\\
$\spr_{1}^{6}$ & $-97.89${\scriptsize$\pm 0.10$} & $-102.65${\scriptsize$\pm 0.10$} & $88.39${\scriptsize$\pm 0.16$} & $14.26${\scriptsize$\pm 0.08$}\\
$(\euc^{2})^{3}$ & $-96.94${\scriptsize$\pm 0.34$} & $-101.34${\scriptsize$\pm 0.41$} & $86.89${\scriptsize$\pm 0.36$} & $14.44${\scriptsize$\pm 0.11$}\\
$\euc^{6}$ & $-96.88${\scriptsize$\pm 0.16$} & $-101.36${\scriptsize$\pm 0.08$} & $86.90${\scriptsize$\pm 0.14$} & $14.46${\scriptsize$\pm 0.07$}\\
$(\hyp_{-1}^{2})^{3}$ & $-97.19${\scriptsize$\pm 0.32$} & $-102.06${\scriptsize$\pm 0.28$} & $87.63${\scriptsize$\pm 0.37$} & $14.42${\scriptsize$\pm 0.10$}\\
$\hyp_{-1}^{6}$ & $-97.38${\scriptsize$\pm 0.73$} & $-102.22${\scriptsize$\pm 0.95$} & $87.75${\scriptsize$\pm 0.59$} & $14.47${\scriptsize$\pm 0.37$}\\
$(\poi_{-1}^{2})^{3}$ & $-97.57${\scriptsize$\pm 0.12$} & $-102.22${\scriptsize$\pm 0.18$} & $87.83${\scriptsize$\pm 0.30$} & $14.39${\scriptsize$\pm 0.13$}\\
$\poi_{-1}^{6}$ & $-97.33${\scriptsize$\pm 0.15$} & $-102.02${\scriptsize$\pm 0.35$} & $87.71${\scriptsize$\pm 0.36$} & $14.31${\scriptsize$\pm 0.04$}\\
\midrule
$(\sph^{2})^{3}$ & $-96.78${\scriptsize$\pm 0.35$} & $-101.43${\scriptsize$\pm 0.24$} & $86.93${\scriptsize$\pm 0.28$} & $14.50${\scriptsize$\pm 0.05$}\\
$\sph^{6}$ & $-96.44${\scriptsize$\pm 0.20$} & $-101.18${\scriptsize$\pm 0.36$} & $86.74${\scriptsize$\pm 0.38$} & $14.44${\scriptsize$\pm 0.05$}\\
$(\spr^{2})^{3}$ & $-97.61${\scriptsize$\pm 0.19$} & $-102.37${\scriptsize$\pm 0.26$} & $87.96${\scriptsize$\pm 0.21$} & $14.41${\scriptsize$\pm 0.06$}\\
$\spr^{6}$ & $-97.53${\scriptsize$\pm 0.22$} & $-102.31${\scriptsize$\pm 0.38$} & $87.97${\scriptsize$\pm 0.37$} & $14.34${\scriptsize$\pm 0.08$}\\
$(\hyp^{2})^{3}$ & $-96.86${\scriptsize$\pm 0.31$} & $-101.61${\scriptsize$\pm 0.30$} & $87.13${\scriptsize$\pm 0.30$} & $14.48${\scriptsize$\pm 0.08$}\\
$\hyp^{6}$ & $-96.90${\scriptsize$\pm 0.26$} & $-101.48${\scriptsize$\pm 0.35$} & $87.18${\scriptsize$\pm 0.48$} & $14.30${\scriptsize$\pm 0.15$}\\
$(\poi^{2})^{3}$ & $-97.52${\scriptsize$\pm 0.02$} & $-102.30${\scriptsize$\pm 0.07$} & $88.11${\scriptsize$\pm 0.07$} & $14.19${\scriptsize$\pm 0.12$}\\
$\poi^{6}$ & $-97.26${\scriptsize$\pm 0.16$} & $-102.00${\scriptsize$\pm 0.17$} & $87.58${\scriptsize$\pm 0.16$} & $14.42${\scriptsize$\pm 0.08$}\\
\midrule
$\spr^{2} \times \euc^{2} \times \poi^{2}$ & $-97.37${\scriptsize$\pm 0.14$} & $-102.12${\scriptsize$\pm 0.19$} & $87.78${\scriptsize$\pm 0.23$} & $14.34${\scriptsize$\pm 0.12$}\\
$\spr_{1}^{2} \times \euc^{2} \times \poi_{-1}^{2}$ & $-97.29${\scriptsize$\pm 0.16$} & $-101.86${\scriptsize$\pm 0.16$} & $87.54${\scriptsize$\pm 0.17$} & $14.32${\scriptsize$\pm 0.04$}\\
$\euc^{2} \times \hyp^{2} \times \sph^{2}$ & $-96.71${\scriptsize$\pm 0.19$} & $-101.34${\scriptsize$\pm 0.16$} & $86.91${\scriptsize$\pm 0.17$} & $14.43${\scriptsize$\pm 0.06$}\\
$\euc^{2} \times \hyp_{-1}^{2} \times \sph_{1}^{2}$ & $-96.66${\scriptsize$\pm 0.27$} & $-101.46${\scriptsize$\pm 0.44$} & $87.02${\scriptsize$\pm 0.38$} & $14.44${\scriptsize$\pm 0.08$}\\
\midrule
$(\uni^{2})^{3}$ & $-97.06${\scriptsize$\pm 0.13$} & $-101.66${\scriptsize$\pm 0.19$} & $87.22${\scriptsize$\pm 0.12$} & $14.44${\scriptsize$\pm 0.07$}\\
$\uni^{6}$ & $-96.90${\scriptsize$\pm 0.10$} & $-101.68${\scriptsize$\pm 0.07$} & $87.27${\scriptsize$\pm 0.11$} & $14.42${\scriptsize$\pm 0.12$}\\
    \bottomrule
  \end{tabular}
\end{table}

\begin{figure}[!ht]
  \centering
  \includegraphics[width=1.0\textwidth]{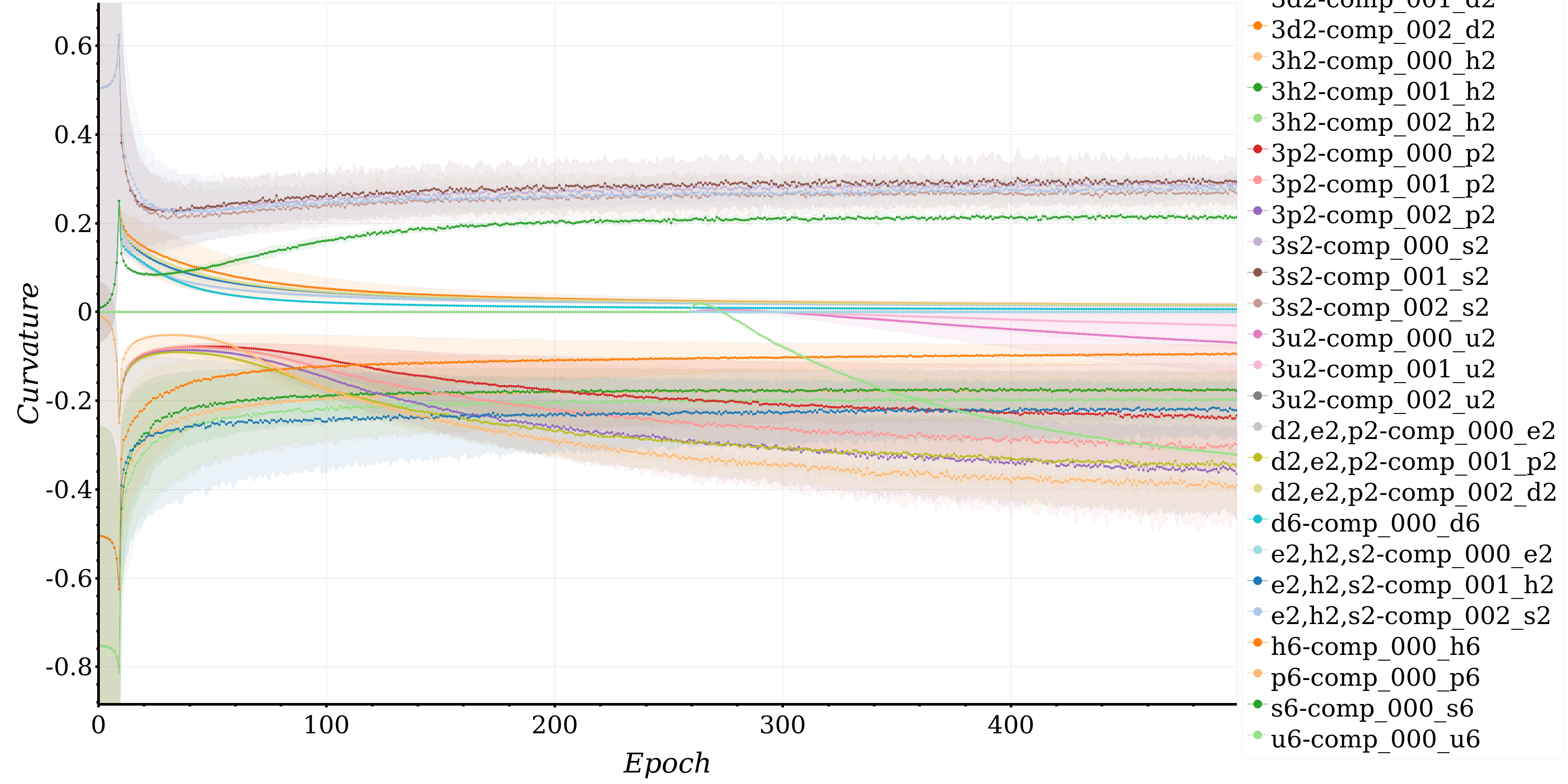}
  \caption{Learned curvature across epochs (with standard deviation) with latent space dimension of 6, diagonal covariance parametrization, on the MNIST dataset.}
  \label{fig:mnist_z6_diag_curvature}
\end{figure}

\begin{table}[!ht]
  \centering
  \caption{Summary of results (mean and standard-deviation) with latent space dimension of 12, diagonal covariance parametrization, on the MNIST dataset.}
  \label{tab:mnist_z12_diag}
  \begin{tabular}{l|rrrrr}
    \toprule
    \textbf{Model} & LL & ELBO & BCE & KL \\
    \midrule
$(\sph_{1}^{2})^{6}$ & $-79.92${\scriptsize$\pm 0.21$} & $-84.88${\scriptsize$\pm 0.14$} & $62.83${\scriptsize$\pm 0.21$} & $22.06${\scriptsize$\pm 0.07$}\\
$\sph_{1}^{12}$ & $-80.72${\scriptsize$\pm 0.34$} & $-85.73${\scriptsize$\pm 0.36$} & $63.86${\scriptsize$\pm 0.32$} & $21.87${\scriptsize$\pm 0.04$}\\

$(\spr_{1}^{2})^{6}$ & $-80.53${\scriptsize$\pm 0.10$} & $-85.59${\scriptsize$\pm 0.08$} & $63.62${\scriptsize$\pm 0.12$} & $21.97${\scriptsize$\pm 0.16$}\\
$\spr_{1}^{12}$ & $-80.81${\scriptsize$\pm 0.12$} & $-86.40${\scriptsize$\pm 0.17$} & $64.42${\scriptsize$\pm 0.19$} & $21.98${\scriptsize$\pm 0.06$}\\
$(\euc^{2})^{6}$ & $-79.51${\scriptsize$\pm 0.10$} & $-83.91${\scriptsize$\pm 0.12$} & $61.84${\scriptsize$\pm 0.06$} & $22.07${\scriptsize$\pm 0.13$}\\
$\euc^{12}$ & $-79.51${\scriptsize$\pm 0.09$} & $-83.95${\scriptsize$\pm 0.06$} & $61.66${\scriptsize$\pm 0.10$} & $22.29${\scriptsize$\pm 0.04$}\\
$(\hyp_{-1}^{2})^{6}$ & $-80.54${\scriptsize$\pm 0.23$} & $-86.05${\scriptsize$\pm 0.52$} & $63.78${\scriptsize$\pm 0.26$} & $22.27${\scriptsize$\pm 0.26$}\\
$\hyp_{-1}^{12}$ & $-79.37${\scriptsize$\pm 0.14$} & $-84.76${\scriptsize$\pm 0.08$} & $62.32${\scriptsize$\pm 0.05$} & $22.44${\scriptsize$\pm 0.10$}\\
$(\poi_{-1}^{2})^{6}$ & $-80.39${\scriptsize$\pm 0.07$} & $-85.46${\scriptsize$\pm 0.15$} & $63.48${\scriptsize$\pm 0.22$} & $21.98${\scriptsize$\pm 0.17$}\\
$\poi_{-1}^{12}$ & $-80.88${\scriptsize$\pm 0.20$} & $-85.87${\scriptsize$\pm 0.45$} & $63.66${\scriptsize$\pm 0.59$} & $22.21${\scriptsize$\pm 0.17$}\\
\midrule
$(\sph^{2})^{6}$ & $-79.95${\scriptsize$\pm 0.14$} & $-84.90${\scriptsize$\pm 0.25$} & $62.83${\scriptsize$\pm 0.34$} & $22.07${\scriptsize$\pm 0.17$}\\
$\sph^{12}$ & $-79.99${\scriptsize$\pm 0.27$} & $-84.78${\scriptsize$\pm 0.26$} & $62.89${\scriptsize$\pm 0.29$} & $21.89${\scriptsize$\pm 0.18$}\\
$(\spr^{2})^{6}$ & $-80.40${\scriptsize$\pm 0.09$} & $-85.38${\scriptsize$\pm 0.08$} & $63.49${\scriptsize$\pm 0.12$} & $21.89${\scriptsize$\pm 0.18$}\\
$\spr^{12}$ & $-80.37${\scriptsize$\pm 0.16$} & $-85.26${\scriptsize$\pm 0.19$} & $63.24${\scriptsize$\pm 0.15$} & $22.02${\scriptsize$\pm 0.13$}\\
$(\hyp^{2})^{6}$ & $-80.13${\scriptsize$\pm 0.08$} & $-85.22${\scriptsize$\pm 0.24$} & $63.32${\scriptsize$\pm 0.34$} & $21.90${\scriptsize$\pm 0.10$}\\
$\hyp^{12}$ & $-79.77${\scriptsize$\pm 0.10$} & $-84.58${\scriptsize$\pm 0.15$} & $62.49${\scriptsize$\pm 0.10$} & $22.09${\scriptsize$\pm 0.20$}\\
$(\poi^{2})^{6}$ & $-80.31${\scriptsize$\pm 0.08$} & $-85.35${\scriptsize$\pm 0.10$} & $63.57${\scriptsize$\pm 0.17$} & $21.79${\scriptsize$\pm 0.07$}\\
$\poi^{12}$ & $-80.66${\scriptsize$\pm 0.09$} & $-85.55${\scriptsize$\pm 0.03$} & $63.55${\scriptsize$\pm 0.17$} & $22.00${\scriptsize$\pm 0.14$}\\
\midrule
$(\spr^{2})^{2} \times (\euc^{2})^{2} \times (\poi^{2})^{2}$ & $-80.30${\scriptsize$\pm 0.31$} & $-85.22${\scriptsize$\pm 0.40$} & $63.52${\scriptsize$\pm 0.48$} & $21.70${\scriptsize$\pm 0.11$}\\
$(\spr_{1}^{2})^{2} \times (\euc^{2})^{2} \times (\poi_{-1}^{2})^{2}$ & $-80.14${\scriptsize$\pm 0.11$} & $-85.00${\scriptsize$\pm 0.08$} & $62.99${\scriptsize$\pm 0.16$} & $22.01${\scriptsize$\pm 0.24$}\\
$\spr^{4} \times \euc^{4} \times \poi^{4}$ & $-80.17${\scriptsize$\pm 0.11$} & $-84.95${\scriptsize$\pm 0.27$} & $62.87${\scriptsize$\pm 0.39$} & $22.08${\scriptsize$\pm 0.18$}\\
$\spr_{1}^{4} \times \euc^{4} \times \poi_{-1}^{4}$ & $-80.14${\scriptsize$\pm 0.20$} & $-84.99${\scriptsize$\pm 0.26$} & $63.06${\scriptsize$\pm 0.26$} & $21.92${\scriptsize$\pm 0.08$}\\
$(\euc^{2})^{2} \times (\hyp^{2})^{2} \times (\sph^{2})^{2}$ & $-79.59${\scriptsize$\pm 0.25$} & $-84.43${\scriptsize$\pm 0.20$} & $62.68${\scriptsize$\pm 0.20$} & $21.75${\scriptsize$\pm 0.20$}\\
$(\euc^{2})^{2} \times (\hyp_{-1}^{2})^{2} \times (\sph_{1}^{2})^{2}$ & $-79.87${\scriptsize$\pm 0.45$} & $-84.82${\scriptsize$\pm 0.61$} & $62.66${\scriptsize$\pm 0.42$} & $22.17${\scriptsize$\pm 0.20$}\\
$\euc^{4} \times \hyp^{4} \times \sph^{4}$ & $-79.69${\scriptsize$\pm 0.14$} & $-84.45${\scriptsize$\pm 0.12$} & $62.64${\scriptsize$\pm 0.28$} & $21.81${\scriptsize$\pm 0.21$}\\
$\euc^{4} \times \hyp_{-1}^{4} \times \sph_{1}^{4}$ & $-79.77${\scriptsize$\pm 0.09$} & $-84.75${\scriptsize$\pm 0.03$} & $62.68${\scriptsize$\pm 0.25$} & $22.07${\scriptsize$\pm 0.24$}\\
\midrule
$(\uni^{2})^{6}$ & $-79.61${\scriptsize$\pm 0.06$} & $-84.13${\scriptsize$\pm 0.04$} & $61.92${\scriptsize$\pm 0.22$} & $22.21${\scriptsize$\pm 0.23$}\\
$\uni^{12}$ & $-80.01${\scriptsize$\pm 0.30$} & $-84.86${\scriptsize$\pm 0.51$} & $62.90${\scriptsize$\pm 0.63$} & $21.96${\scriptsize$\pm 0.16$}\\
    \bottomrule
  \end{tabular}
\end{table}

\begin{table}[!ht]
  \centering
  \caption{Summary of results (mean and standard-deviation) with latent space dimension of 72, diagonal covariance parametrization, on the MNIST dataset.}
  \label{tab:mnist_z72_diag}
  \begin{tabular}{l|rrrrr}
    \toprule
    \textbf{Model} & LL & ELBO & BCE & KL \\
\midrule
$(\sph_{1}^{2})^{36}$ & $-78.43${\scriptsize$\pm 0.44$} & $-84.99${\scriptsize$\pm 0.49$} & $56.88${\scriptsize$\pm 0.28$} & $28.11${\scriptsize$\pm 0.56$}\\
$(\spr_{1}^{2})^{36}$ & $-76.03${\scriptsize$\pm 0.17$} & $-83.04${\scriptsize$\pm 0.25$} & $54.35${\scriptsize$\pm 0.15$} & $28.69${\scriptsize$\pm 0.17$}\\
$(\euc^{2})^{36}$ & $-74.53${\scriptsize$\pm 0.06$} & $-80.05${\scriptsize$\pm 0.10$} & $50.91${\scriptsize$\pm 0.17$} & $29.15${\scriptsize$\pm 0.07$}\\
$\euc^{72}$ & $-74.42${\scriptsize$\pm 0.06$} & $-80.09${\scriptsize$\pm 0.12$} & $51.45${\scriptsize$\pm 0.30$} & $28.63${\scriptsize$\pm 0.20$}\\
$(\hyp_{-1}^{2})^{36}$ & $-77.92${\scriptsize$\pm 0.32$} & $-84.76${\scriptsize$\pm 0.55$} & $56.85${\scriptsize$\pm 0.60$} & $27.91${\scriptsize$\pm 0.42$}\\
$\hyp_{-1}^{72}$ & $-77.30${\scriptsize$\pm 0.12$} & $-86.98${\scriptsize$\pm 0.09$} & $58.04${\scriptsize$\pm 0.29$} & $28.94${\scriptsize$\pm 0.25$}\\
$(\poi_{-1}^{2})^{36}$ & $-76.11${\scriptsize$\pm 0.08$} & $-82.63${\scriptsize$\pm 0.19$} & $53.89${\scriptsize$\pm 0.36$} & $28.74${\scriptsize$\pm 0.30$}\\
$\poi_{-1}^{72}$ & $-77.50${\scriptsize$\pm 0.05$} & $-84.53${\scriptsize$\pm 0.13$} & $55.80${\scriptsize$\pm 0.20$} & $28.73${\scriptsize$\pm 0.18$}\\
\midrule
$\sph^{72}$ & $-75.24${\scriptsize$\pm 0.01$} & $-81.39${\scriptsize$\pm 0.14$} & $53.03${\scriptsize$\pm 0.27$} & $28.36${\scriptsize$\pm 0.16$}\\
$(\spr^{2})^{36}$ & $-75.66${\scriptsize$\pm 0.06$} & $-81.94${\scriptsize$\pm 0.09$} & $53.32${\scriptsize$\pm 0.16$} & $28.61${\scriptsize$\pm 0.11$}\\
$\spr^{72}$ & $-77.11${\scriptsize$\pm 2.21$} & $-83.94${\scriptsize$\pm 2.81$} & $54.94${\scriptsize$\pm 2.55$} & $29.00${\scriptsize$\pm 1.31$}\\
$(\hyp^{2})^{36}$ & $-77.87${\scriptsize$\pm 0.02$} & $-83.95${\scriptsize$\pm 0.02$} & $55.71${\scriptsize$\pm 0.35$} & $28.24${\scriptsize$\pm 0.36$}\\
$\hyp^{72}$ & $-75.03${\scriptsize$\pm 0.11$} & $-81.23${\scriptsize$\pm 0.14$} & $52.63${\scriptsize$\pm 0.10$} & $28.61${\scriptsize$\pm 0.11$}\\
$(\poi^{2})^{36}$ & $-75.77${\scriptsize$\pm 0.12$} & $-82.07${\scriptsize$\pm 0.02$} & $53.65${\scriptsize$\pm 0.38$} & $28.43${\scriptsize$\pm 0.39$}\\
$\poi^{72}$ & $-75.71${\scriptsize$\pm 0.08$} & $-81.95${\scriptsize$\pm 0.09$} & $53.29${\scriptsize$\pm 0.14$} & $28.67${\scriptsize$\pm 0.05$}\\
\midrule
$(\spr^{2})^{12} \times (\euc^{2})^{12} \times (\poi^{2})^{12}$ & $-77.40${\scriptsize$\pm 0.55$} & $-83.35${\scriptsize$\pm 0.41$} & $53.90${\scriptsize$\pm 0.40$} & $29.45${\scriptsize$\pm 0.12$}\\
$(\spr_{1}^{2})^{12} \times (\euc^{2})^{12} \times (\poi_{-1}^{2})^{12}$ & $-75.36${\scriptsize$\pm 0.23$} & $-81.53${\scriptsize$\pm 0.42$} & $53.02${\scriptsize$\pm 0.39$} & $28.51${\scriptsize$\pm 0.45$}\\
$\spr^{24} \times \euc^{24} \times \poi^{24}$ & $-75.11${\scriptsize$\pm 0.05$} & $-80.99${\scriptsize$\pm 0.07$} & $52.48${\scriptsize$\pm 0.19$} & $28.52${\scriptsize$\pm 0.16$}\\
$(\euc^{2})^{12} \times (\hyp_{-1}^{2})^{12} \times (\sph_{1}^{2})^{12}$ & $-77.53${\scriptsize$\pm 0.34$} & $-83.95${\scriptsize$\pm 0.40$} & $55.54${\scriptsize$\pm 0.43$} & $28.42${\scriptsize$\pm 0.08$}\\
$\euc^{24} \times \hyp^{24} \times \sph^{24}$ & $-75.04${\scriptsize$\pm 0.16$} & $-81.17${\scriptsize$\pm 0.18$} & $52.61${\scriptsize$\pm 0.32$} & $28.55${\scriptsize$\pm 0.38$}\\
\midrule
$(\uni^{2})^{36}$ & $-74.64${\scriptsize$\pm 0.08$} & $-80.52${\scriptsize$\pm 0.10$} & $52.04${\scriptsize$\pm 0.10$} & $28.48${\scriptsize$\pm 0.07$}\\
$\uni^{72}$ & $-75.46${\scriptsize$\pm 0.09$} & $-81.76${\scriptsize$\pm 0.09$} & $53.27${\scriptsize$\pm 0.18$} & $28.49${\scriptsize$\pm 0.18$}\\
    \bottomrule
  \end{tabular}
\end{table}

\begin{figure}[!ht]
  \centering

  \begin{subfigure}[b]{0.48\textwidth}
    \includegraphics[width=1.0\textwidth]{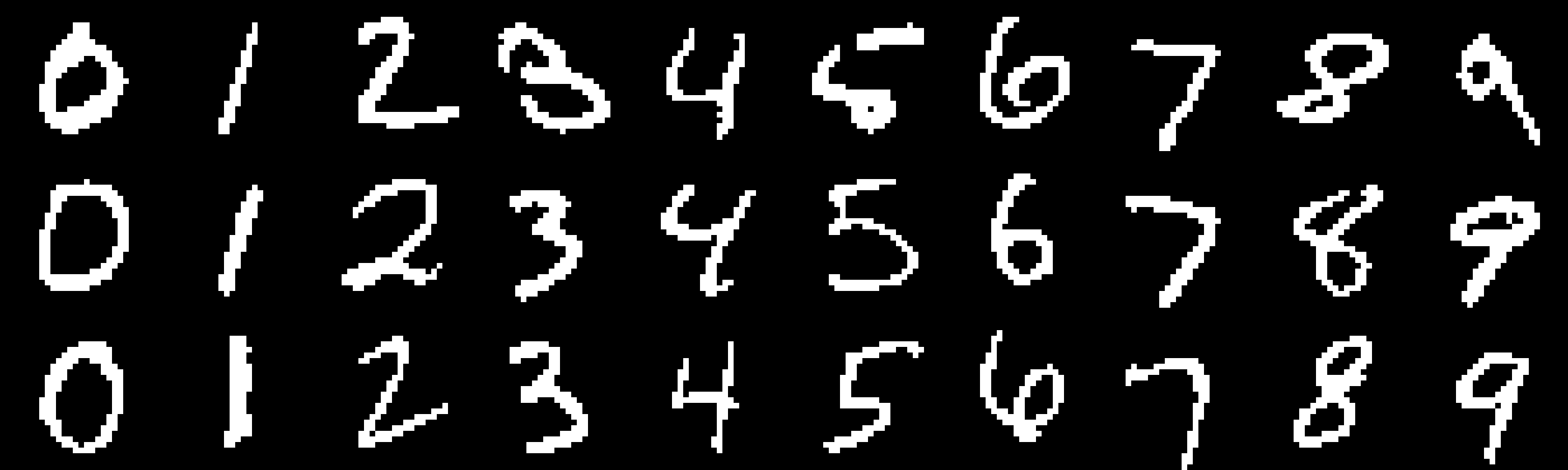}
    \caption{Original}
  \end{subfigure}
  ~
  \begin{subfigure}[b]{0.48\textwidth}
    \includegraphics[width=1.0\textwidth]{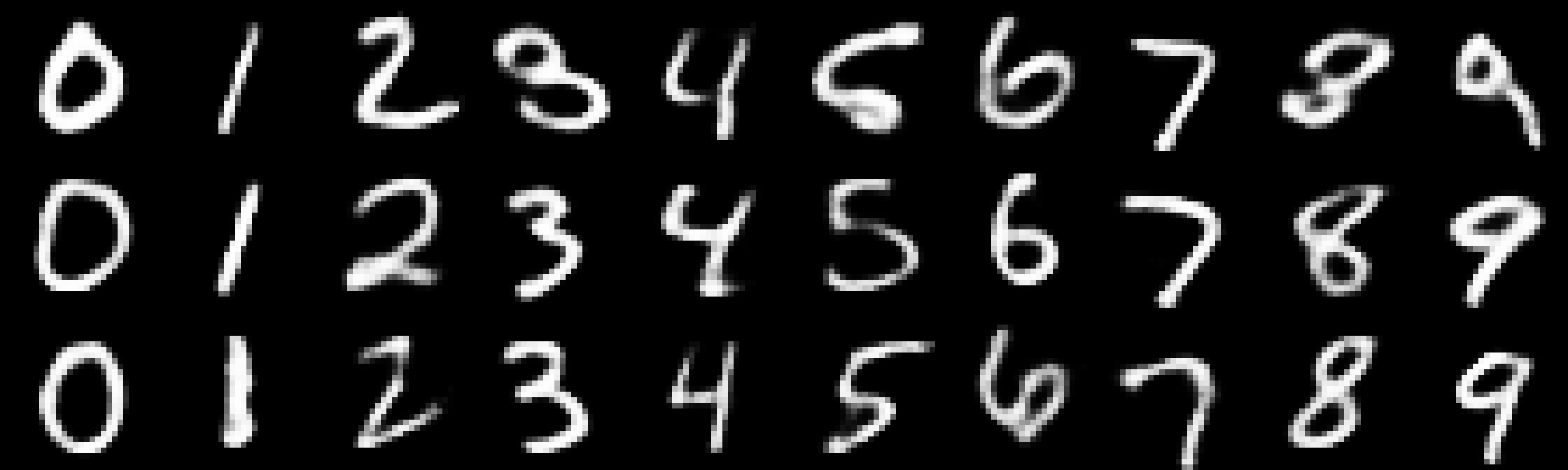}
    \caption{$(\euc^2)^{36}$}
  \end{subfigure}

  \begin{subfigure}[b]{0.48\textwidth}
    \includegraphics[width=1.0\textwidth]{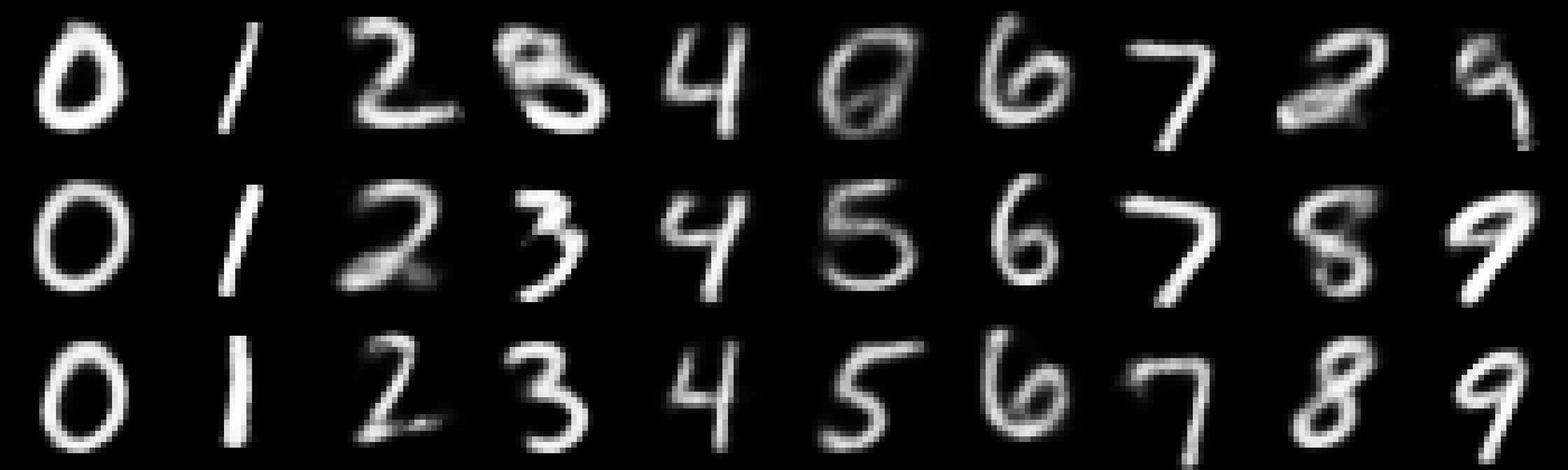}
    \caption{$(\euc^2)^3$}
  \end{subfigure}
  ~
  \begin{subfigure}[b]{0.48\textwidth}
    \includegraphics[width=1.0\textwidth]{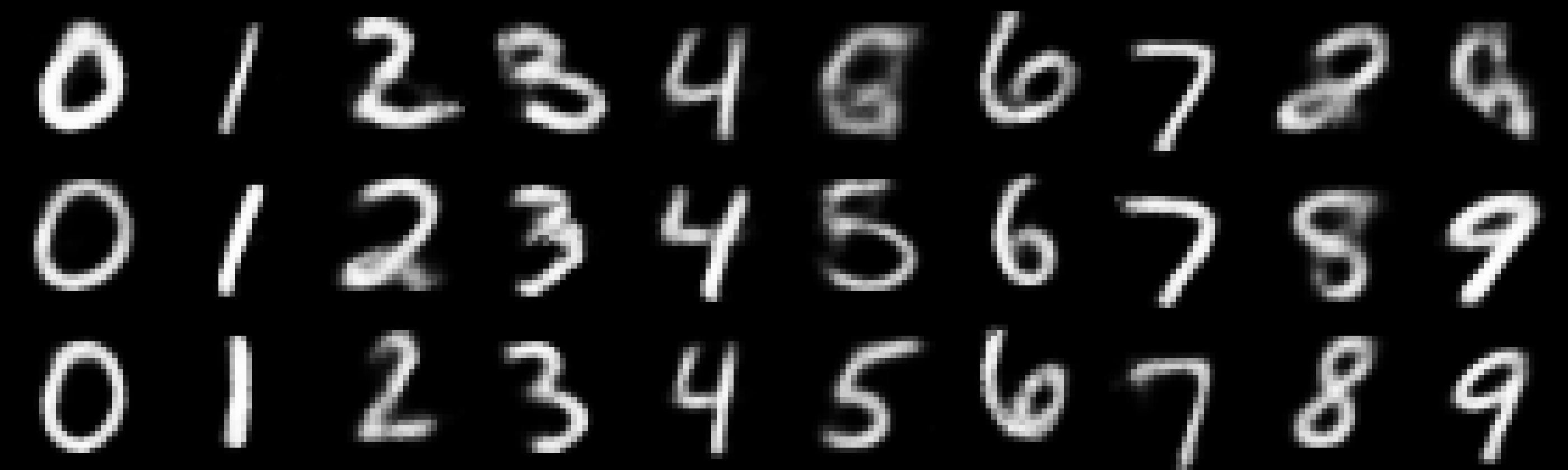}
    \caption{$\euc^2 \times \hyp^2 \times \sph^2$}
  \end{subfigure}

  \begin{subfigure}[b]{0.48\textwidth}
    \includegraphics[width=1.0\textwidth]{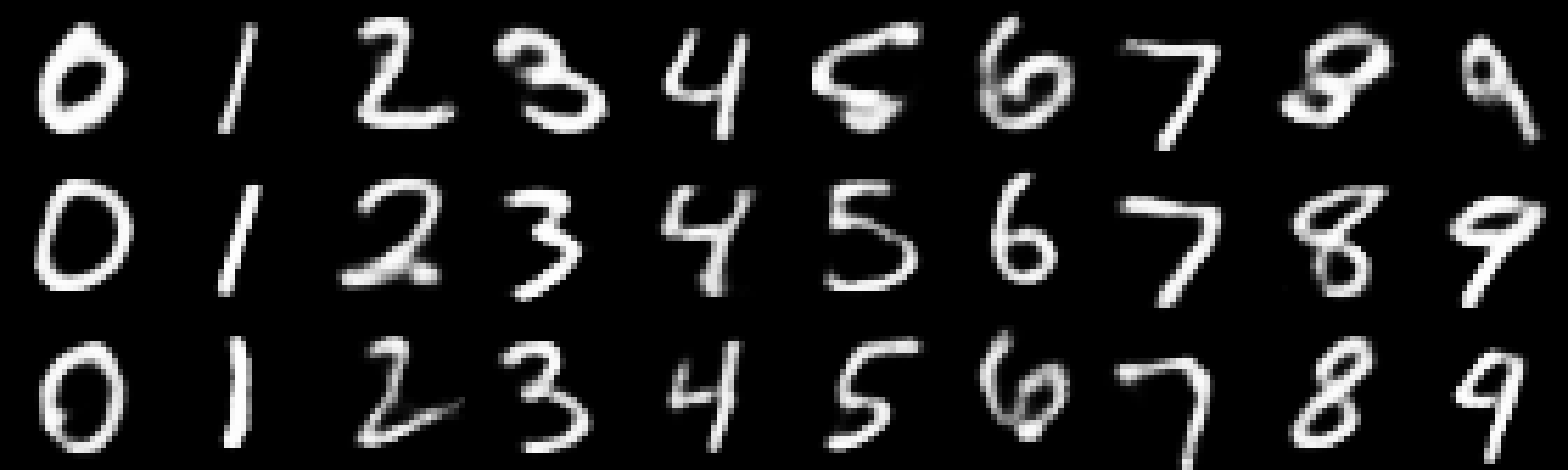}
    \caption{$(\euc^2)^{12} \times (\hyp^2)^{12} \times (\sph^2)^{12}$}
  \end{subfigure}
  ~
  \begin{subfigure}[b]{0.48\textwidth}
    \includegraphics[width=1.0\textwidth]{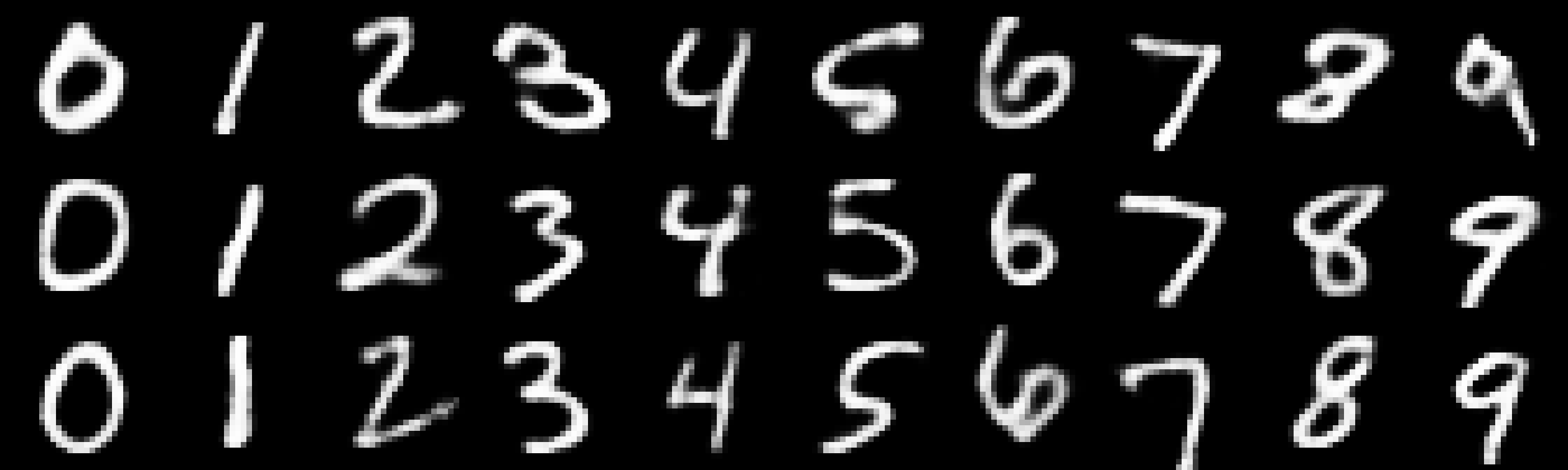}
    \caption{$(\uni^2)^{36}$}
  \end{subfigure}

  \caption{Qualitative comparison of reconstruction quality of randomly selected runs of a selection of well-performing
  models on MNIST test set digits.}
  \label{fig:mnist_recon}
\end{figure}

\begin{figure}[!ht]
  \centering

  \begin{subfigure}[b]{0.45\textwidth}
    \includegraphics[width=1.0\textwidth]{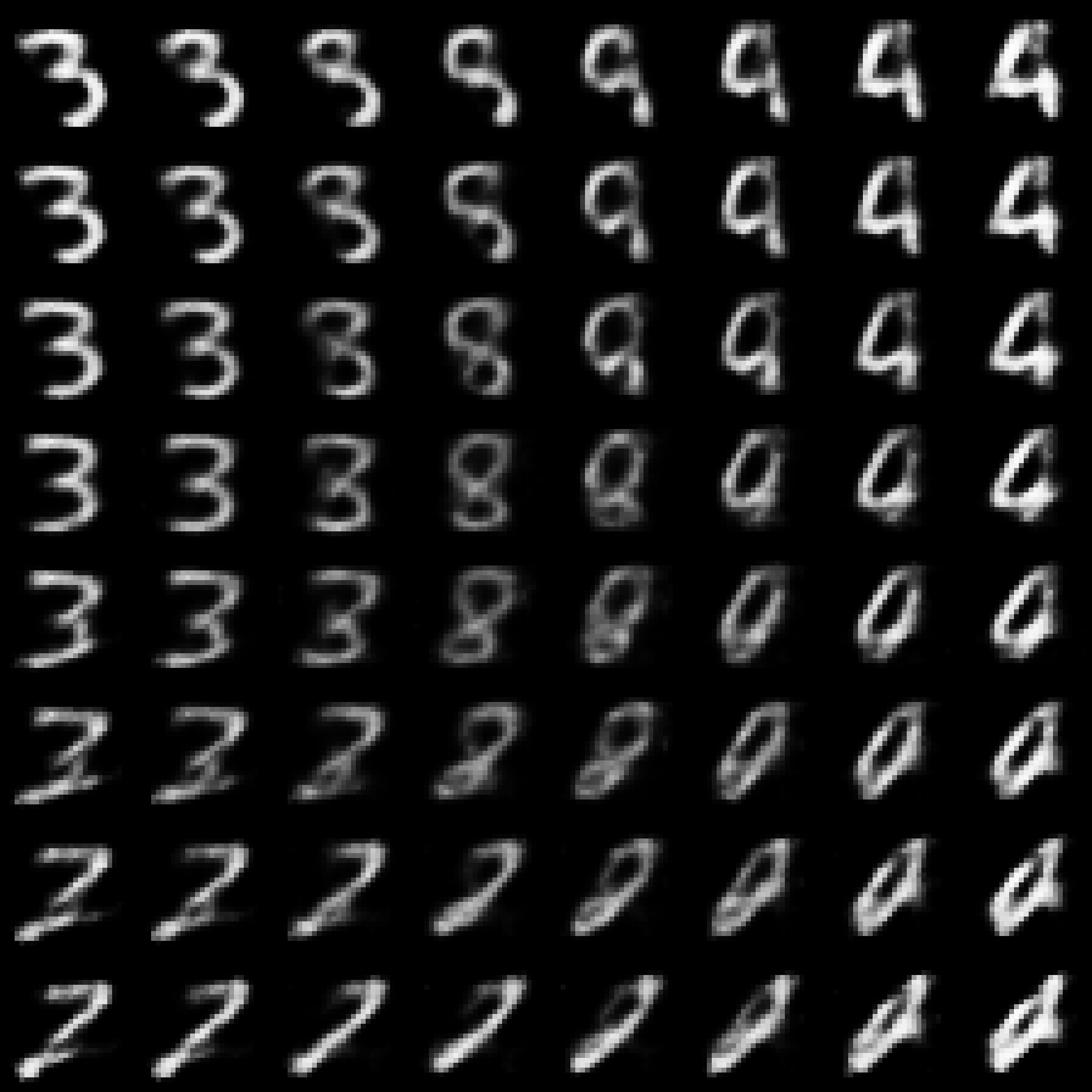}
    \caption{First component of $(\euc^2)^3$.}
  \end{subfigure}
  ~
  \begin{subfigure}[b]{0.45\textwidth}
    \includegraphics[width=1.0\textwidth]{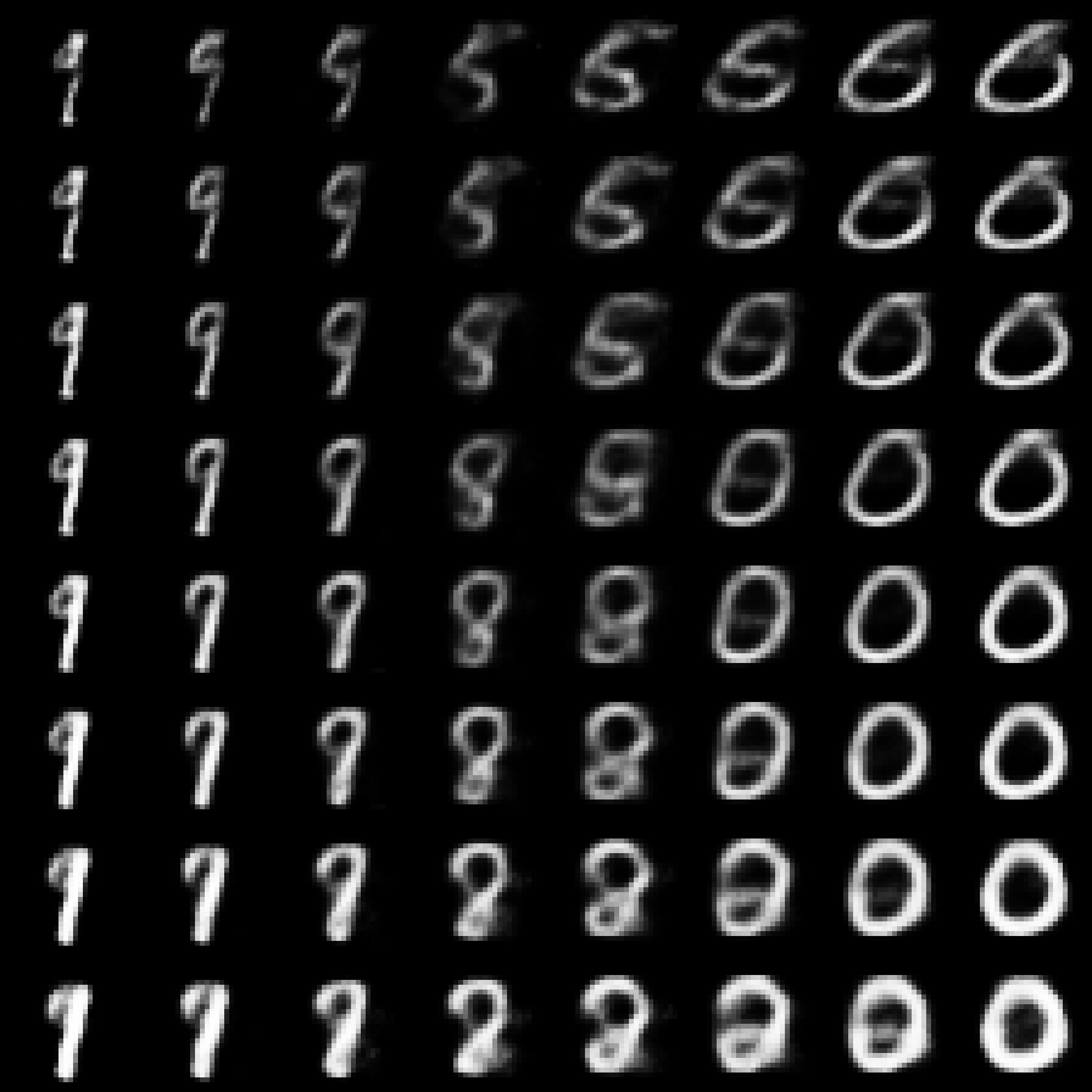}
    \caption{Second component of $(\euc^2)^3$.}
  \end{subfigure}

  \vspace{1em}
  \begin{subfigure}[b]{0.45\textwidth}
    \includegraphics[width=1.0\textwidth]{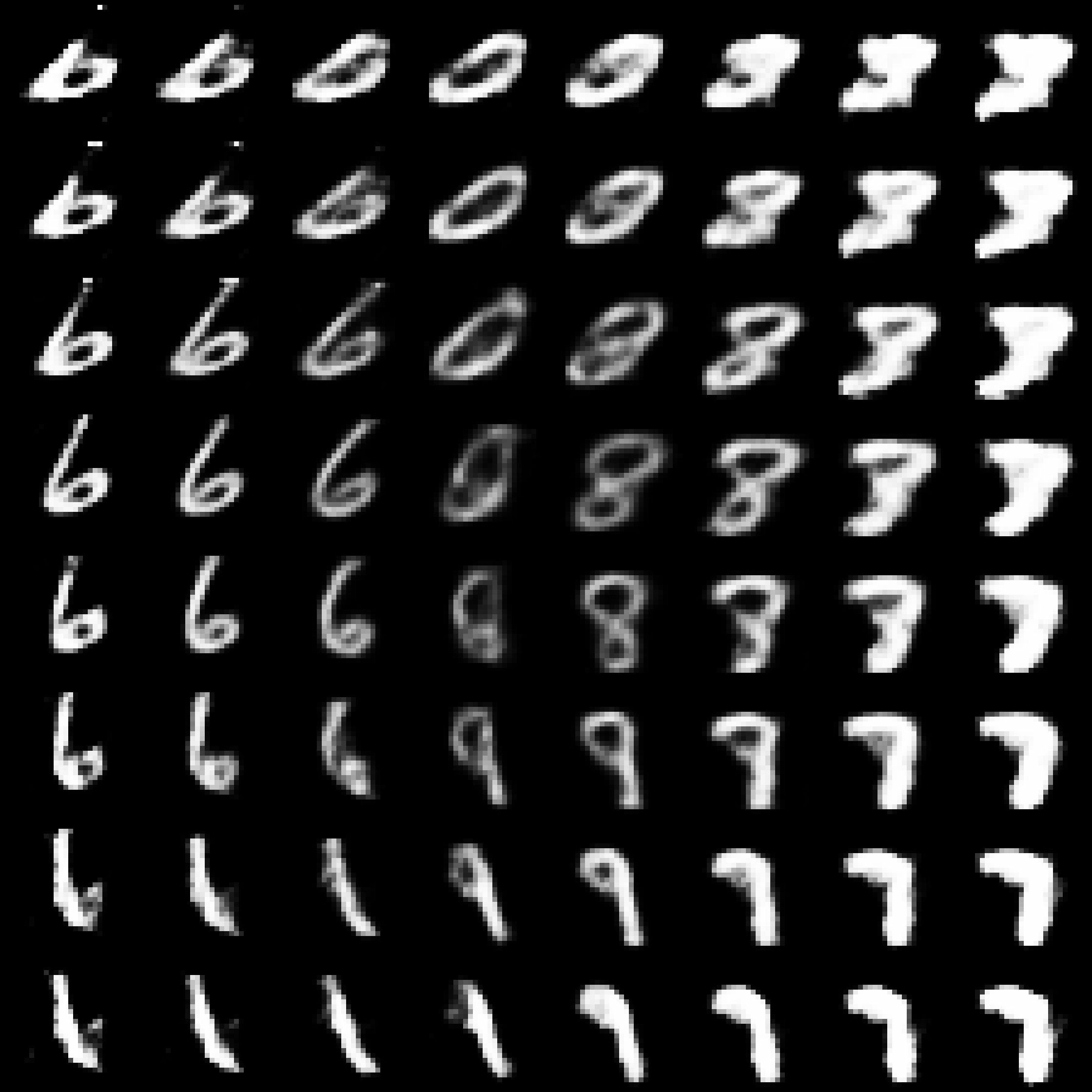}
    \caption{First (negative) component of $(\uni^2)^3$.}
  \end{subfigure}
  ~
  \begin{subfigure}[b]{0.45\textwidth}
    \includegraphics[width=1.0\textwidth]{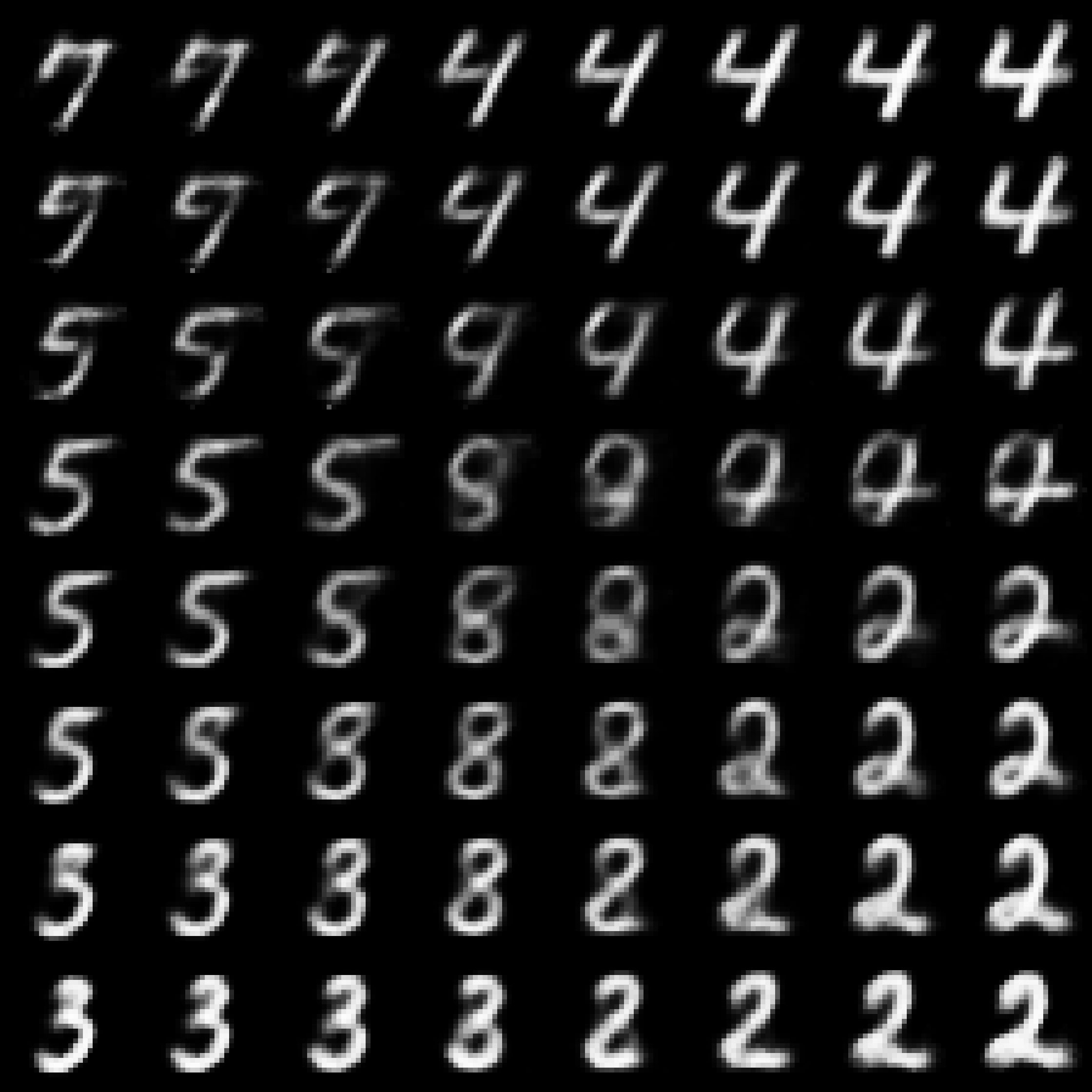}
    \caption{Second (negative) component of $(\uni^2)^3$.}
  \end{subfigure}

  \vspace{1em}
  \begin{subfigure}[b]{0.45\textwidth}
    \includegraphics[width=1.0\textwidth]{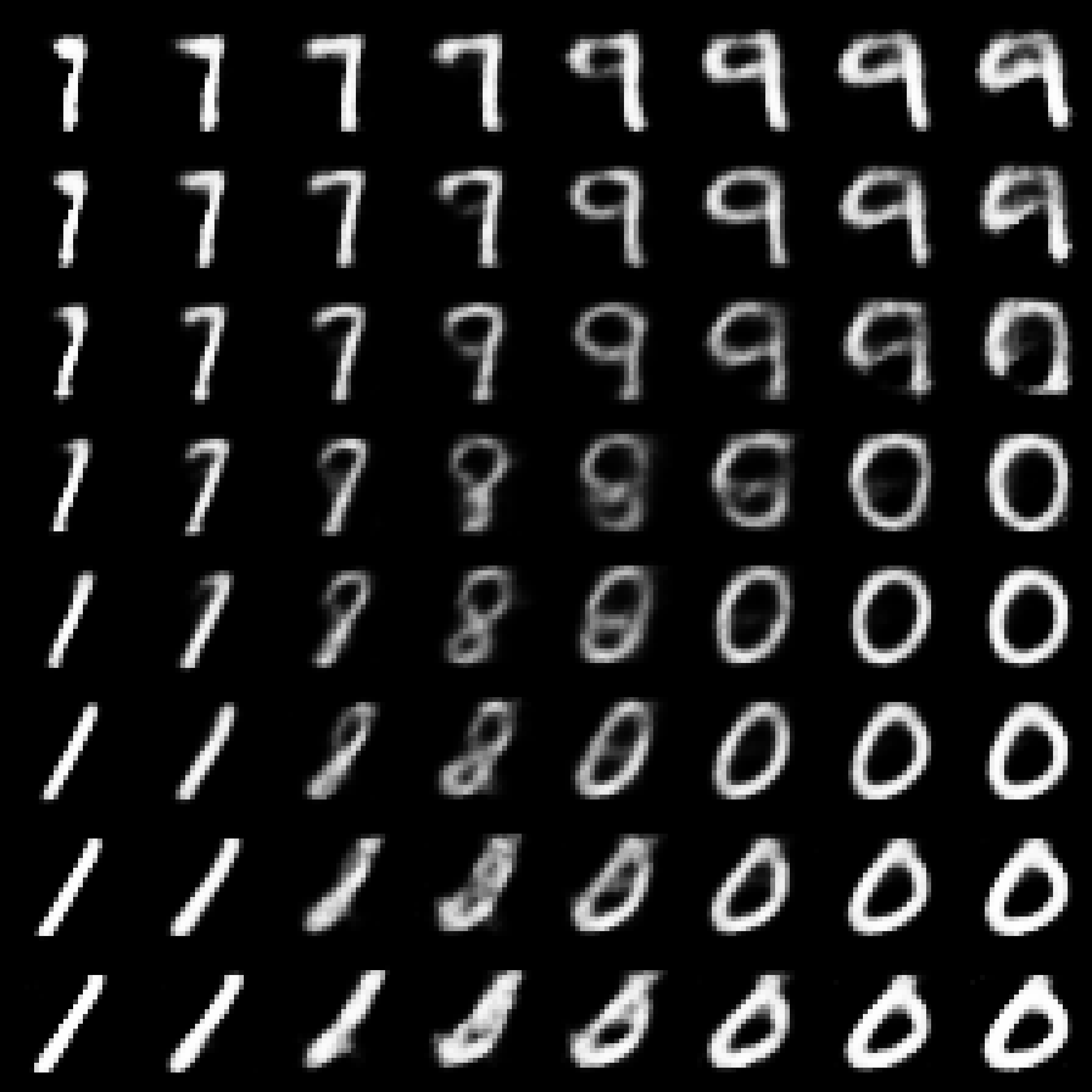}
    \caption{$\poi^2$ of $\euc^2 \times \poi^2 \times \spr^2$.}
  \end{subfigure}
  ~
  \begin{subfigure}[b]{0.45\textwidth}
    \includegraphics[width=1.0\textwidth]{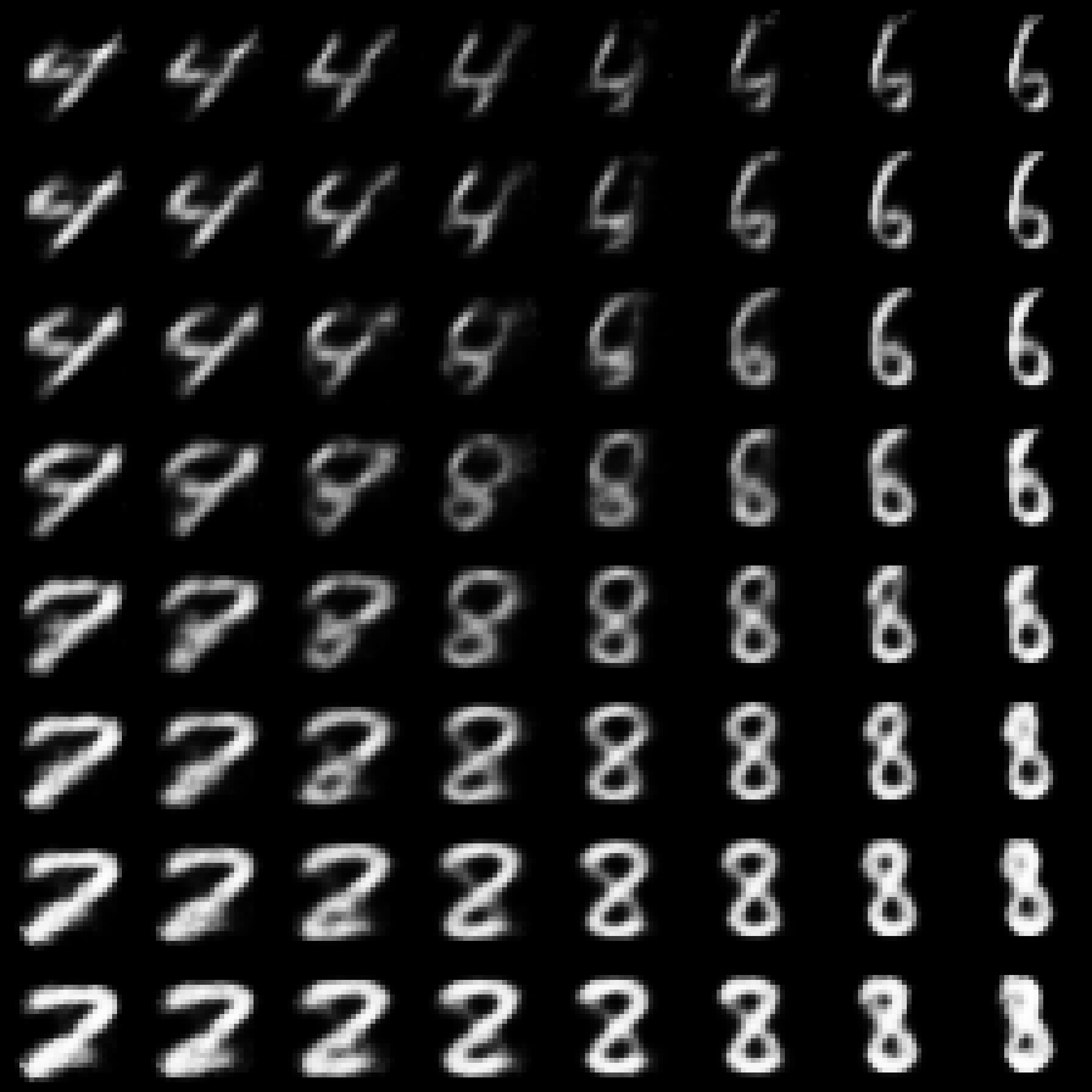}
    \caption{$\spr^2$ of $\euc^2 \times \poi^2 \times \spr^2$.}
  \end{subfigure}
  \caption{Samples from various models of a grid search around $\B{0}$ of a single component's latent space
  on MNIST test digits.}
  \label{fig:mnist_interpol}
\end{figure}

\begin{figure}[!ht]
  \centering
  \begin{subfigure}[b]{0.42\textwidth}
    \includegraphics[clip, trim=.8em .8em .8em .0em, width=1.0\textwidth]{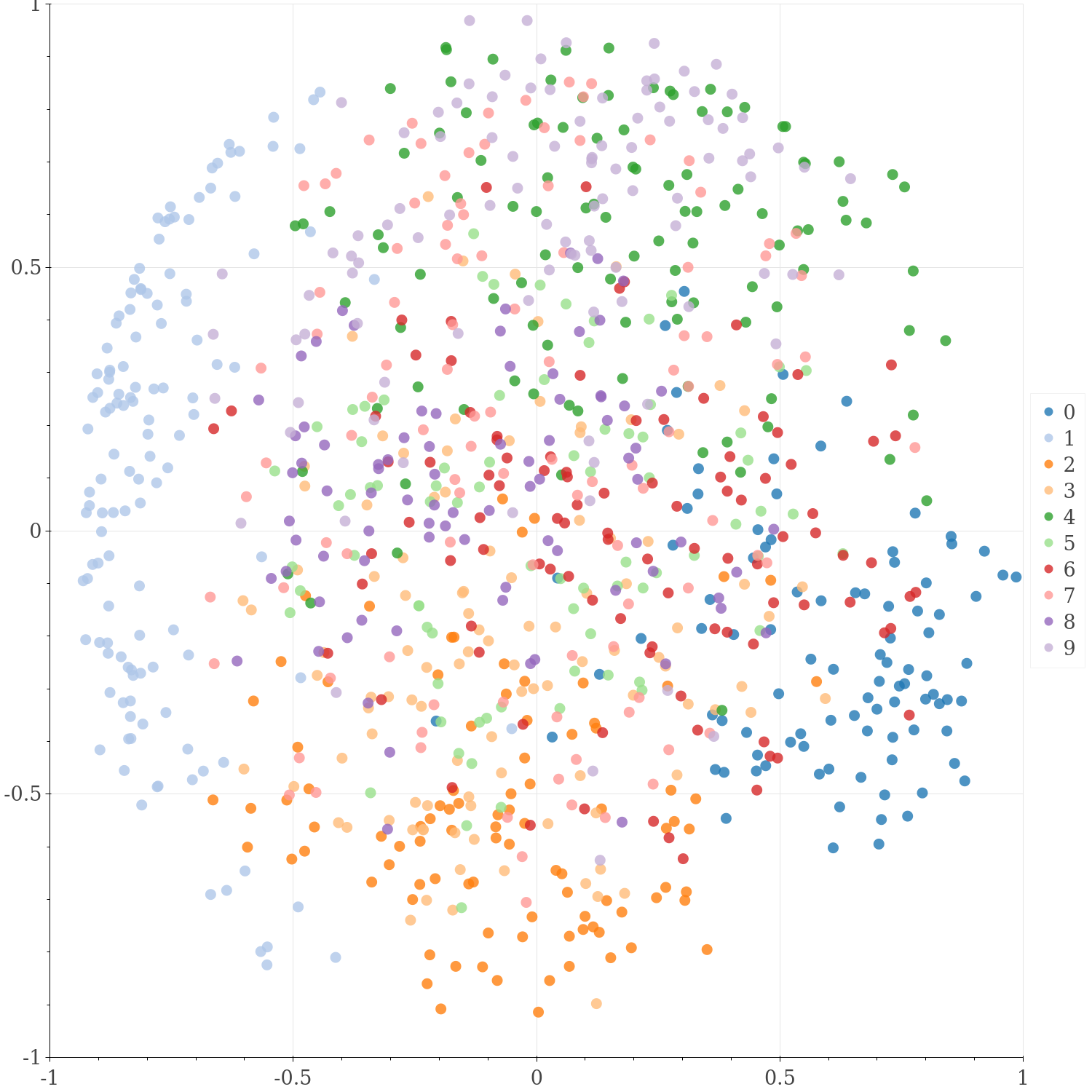}
    \caption{$\hyp^2$ of $\euc^2 \times \hyp^2 \times \sph^2$}
    \label{fig:latspace_mix_h2}
\end{subfigure}
~
\begin{subfigure}[b]{0.42\textwidth}
  \includegraphics[clip, trim=.8em .8em .8em .0em, width=1.0\textwidth]{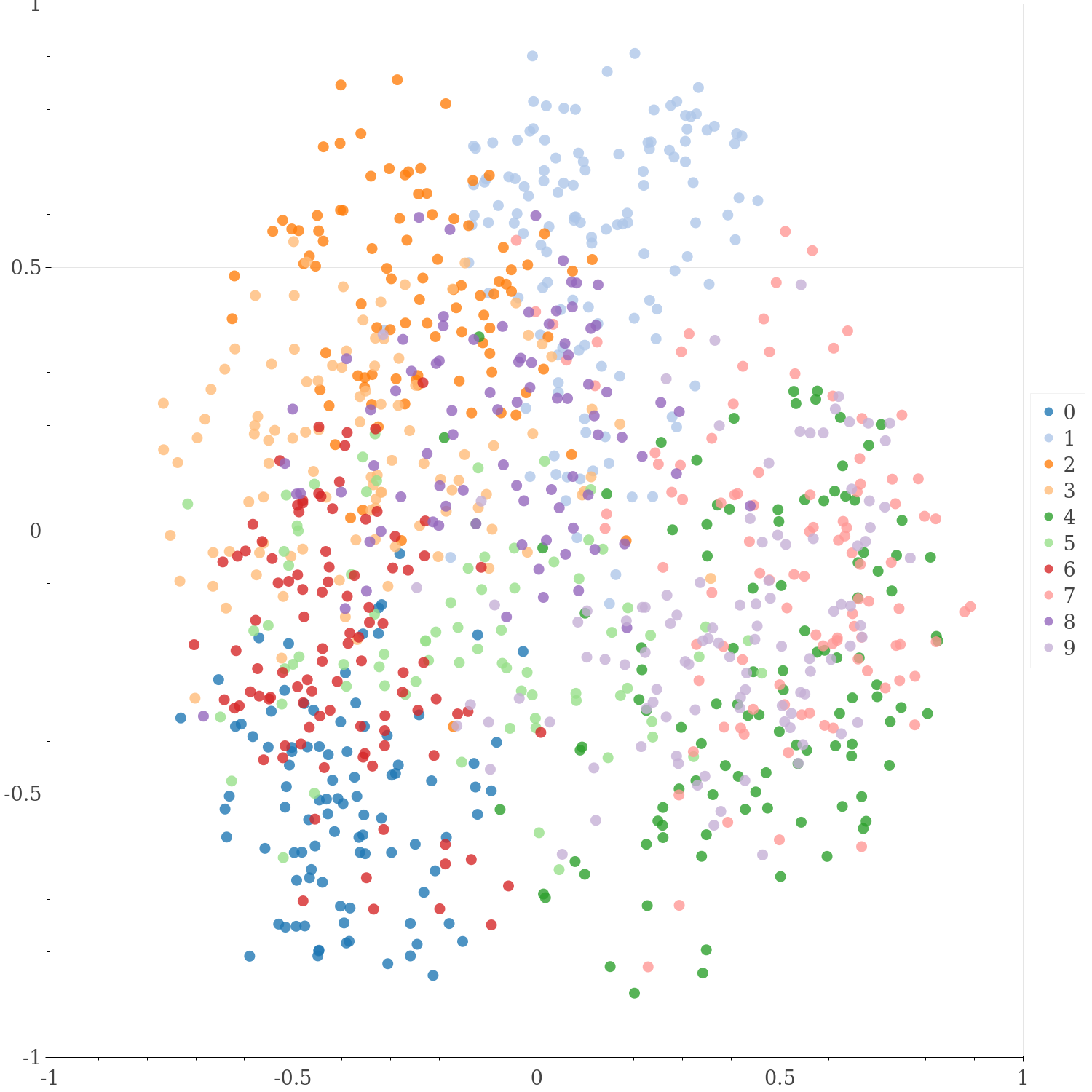}
  \caption{$\hyp^6$}
  \label{fig:latspace_h6}
\end{subfigure}

  \begin{subfigure}[b]{0.42\textwidth}
        \includegraphics[clip, trim=.8em .8em .8em .0em, width=1.0\textwidth]{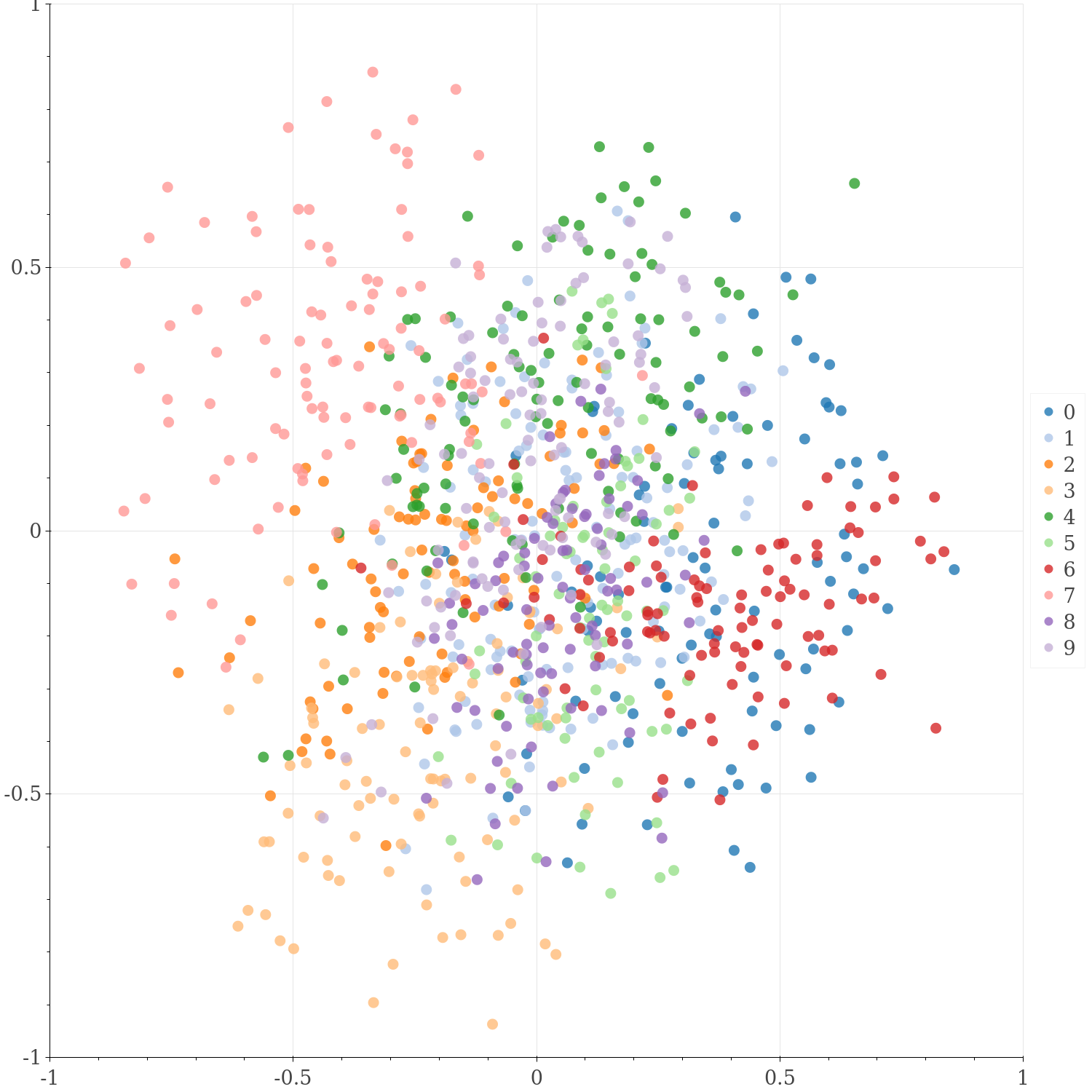}
        \caption{$\euc^2$ of $\euc^2 \times \hyp^2 \times \sph^2$}
        \label{fig:latspace_mix_e2}
    \end{subfigure}
    ~
    \begin{subfigure}[b]{0.42\textwidth}
      \includegraphics[clip, trim=.8em .8em .8em .0em, width=1.0\textwidth]{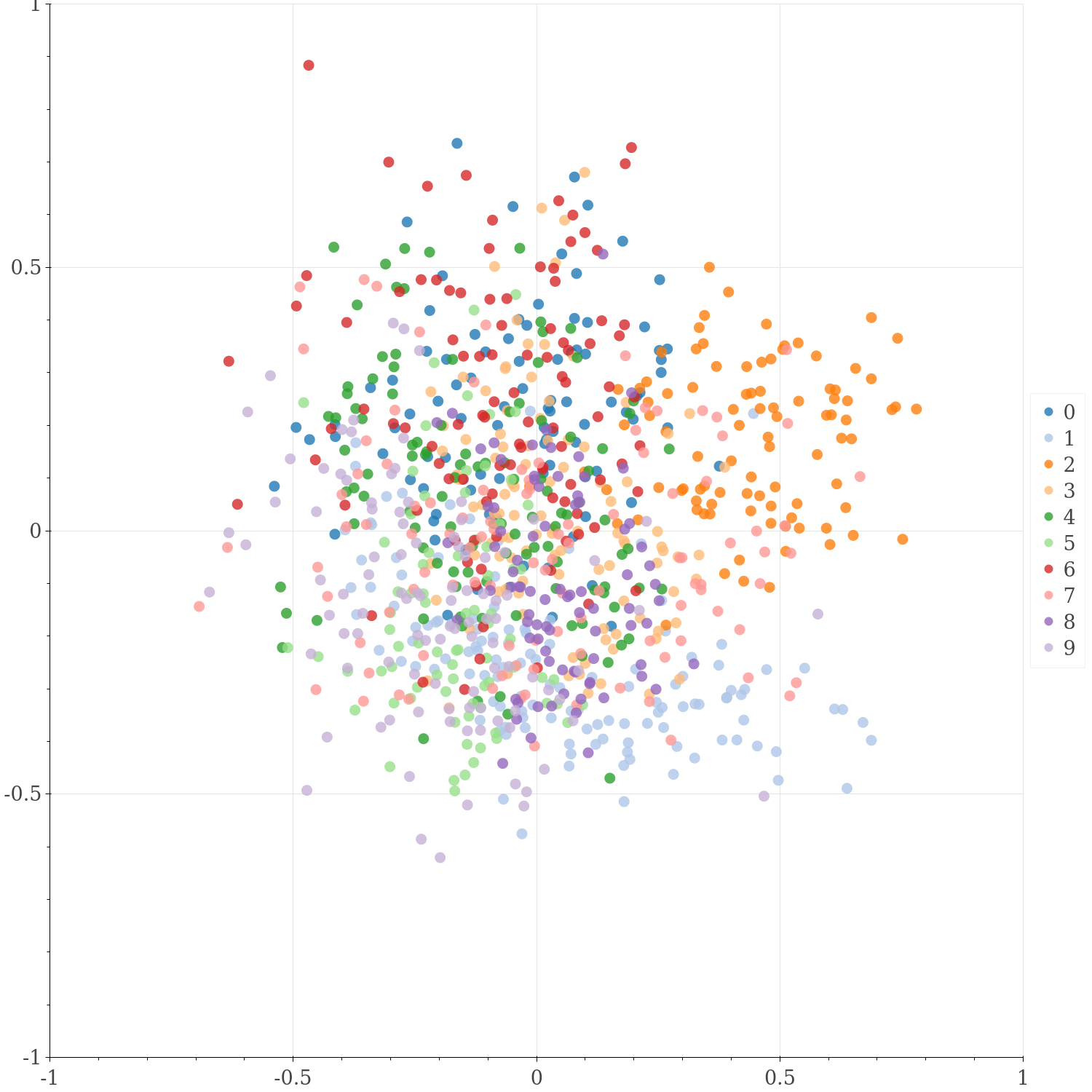}
      \caption{$\euc^6$}
      \label{fig:latspace_e6}
  \end{subfigure}

    \begin{subfigure}[b]{0.42\textwidth}
        \includegraphics[clip, trim=.8em .8em .8em .0em, width=1.0\textwidth]{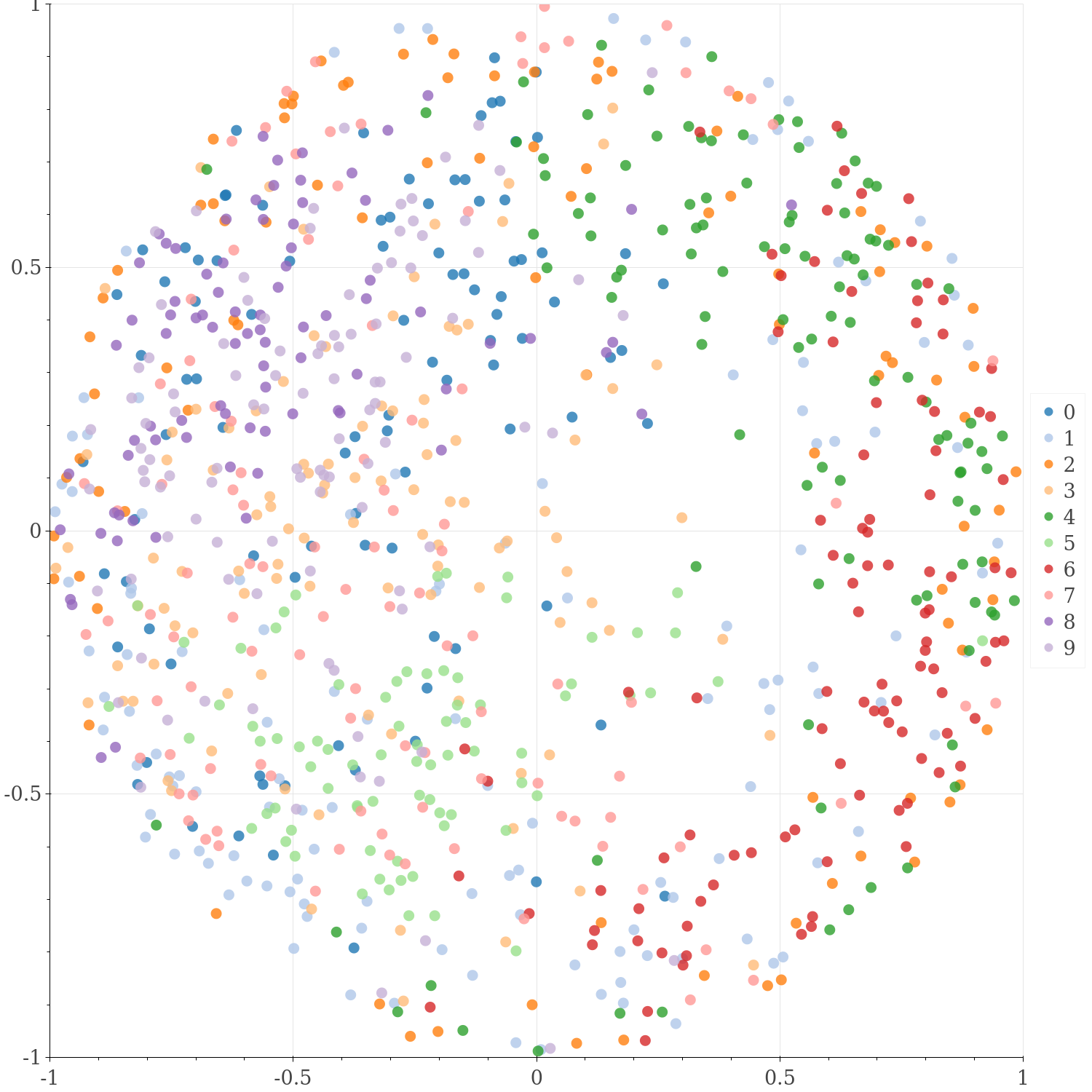}
        \caption{$\sph^2$ of $\euc^2 \times \hyp^2 \times \sph^2$}
        \label{fig:latspace_mix_s2}
    \end{subfigure}
    ~
    \begin{subfigure}[b]{0.42\textwidth}
      \includegraphics[clip, trim=.8em .8em .8em .0em, width=1.0\textwidth]{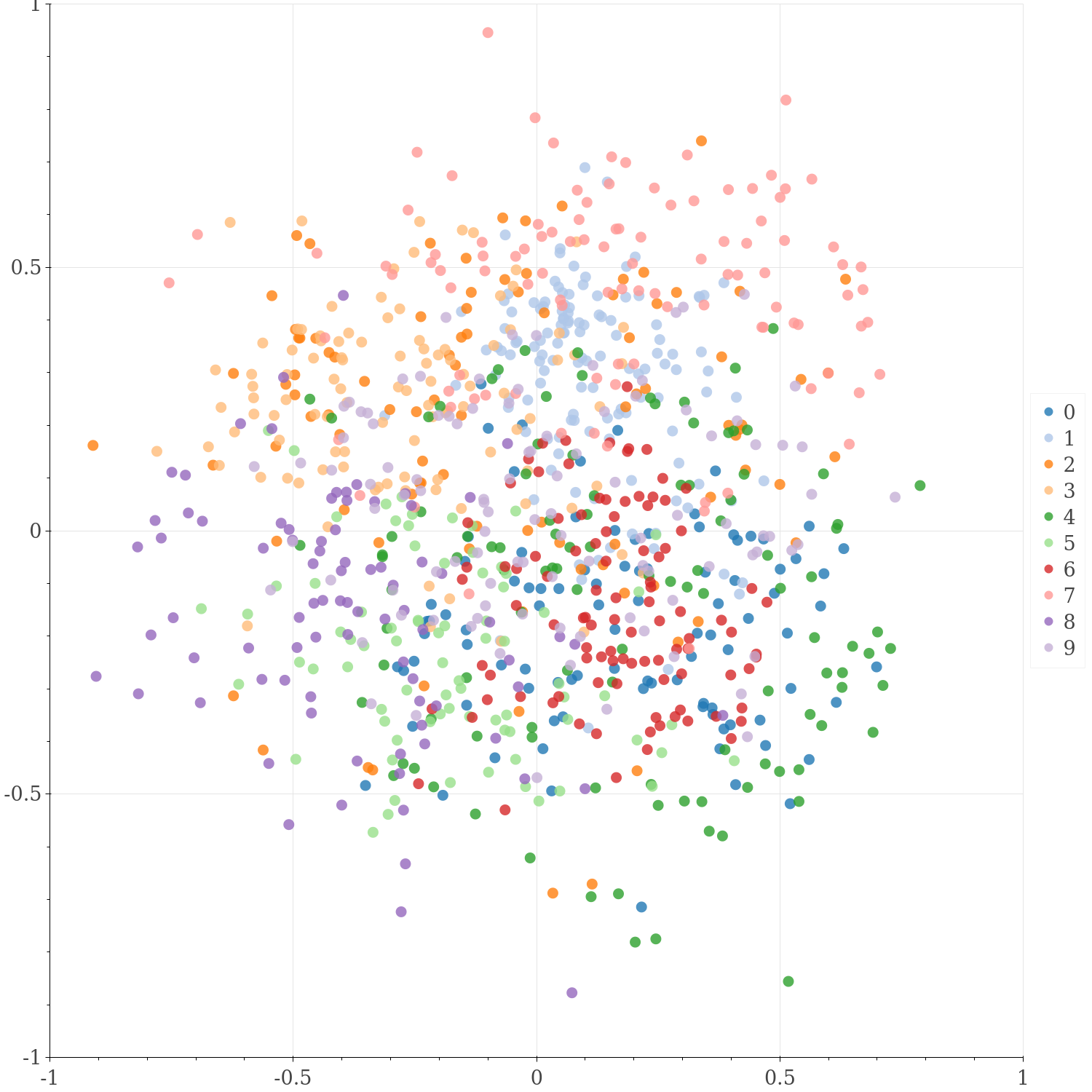}
      \caption{$\sph^6$}
      \label{fig:latspace_s6}
    \end{subfigure}
  \caption{Illustrative latent space visualization of a randomly selected run of the models $\euc^2 \times \hyp^2 \times \sph^2$, $\euc^6$, $\hyp^6$, and $\sph^6$ on MNIST with spherical covariance.
  $\euc^2$ is visualized directly, $\sph^2$ is visualized using a Lambert azimuthal equal-area projection \cite[Chapter 24]{snyder}, $\hyp_2$ is transformed to the Poincar\'{e} ball model using a stereographic conformal projection. All other latent space sizes were first projected using the respective transformation (to Poincar\'{e} ball, Lambert projection) if applicable, and then projected to $\R^2$ using Principal Component Analysis \cite[PCA]{pca} and visualized directly.}
  \label{fig:mnist_z6_latspace}
\end{figure}

\begin{table}[!ht]
  \centering
  \caption{Summary of results (mean and standard-deviation) with latent space dimension of 6, diagonal covariance parametrization, on the Omniglot dataset.}
  \label{tab:omni_z6_diag}
  \begin{tabular}{l|rrrrr}
    \toprule
    \textbf{Model} & LL & ELBO & BCE & KL \\
\midrule
$(\sph_{1}^{2})^{3}$ & $-136.80${\scriptsize$\pm 1.31$} & $-141.68${\scriptsize$\pm 1.52$} & $131.73${\scriptsize$\pm 5.65$} & $9.95${\scriptsize$\pm 4.33$}\\
$\sph_{1}^{6}$ & $-136.69${\scriptsize$\pm 0.94$} & $-141.46${\scriptsize$\pm 0.92$} & $129.52${\scriptsize$\pm 0.74$} & $11.94${\scriptsize$\pm 0.19$}\\
$(\spr_{1}^{2})^{3}$ & $-136.21${\scriptsize$\pm 0.12$} & $-140.44${\scriptsize$\pm 0.17$} & $128.93${\scriptsize$\pm 0.14$} & $11.51${\scriptsize$\pm 0.04$}\\
$\spr_{1}^{6}$ & $-137.42${\scriptsize$\pm 1.20$} & $-141.95${\scriptsize$\pm 1.94$} & $130.70${\scriptsize$\pm 2.18$} & $11.25${\scriptsize$\pm 0.26$}\\
$(\euc^{2})^{3}$ & $-136.08${\scriptsize$\pm 0.21$} & $-140.46${\scriptsize$\pm 0.24$} & $128.85${\scriptsize$\pm 0.34$} & $11.62${\scriptsize$\pm 0.14$}\\
$\euc^{6}$ & $-136.05${\scriptsize$\pm 0.29$} & $-140.50${\scriptsize$\pm 0.35$} & $128.95${\scriptsize$\pm 0.41$} & $11.55${\scriptsize$\pm 0.14$}\\
$(\hyp_{-1}^{2})^{3}$ & $-137.14${\scriptsize$\pm 0.13$} & $-141.87${\scriptsize$\pm 0.16$} & $130.18${\scriptsize$\pm 0.21$} & $11.69${\scriptsize$\pm 0.10$}\\
$\hyp_{-1}^{6}$ & $-137.09${\scriptsize$\pm 0.06$} & $-142.22${\scriptsize$\pm 0.19$} & $130.37${\scriptsize$\pm 0.21$} & $11.85${\scriptsize$\pm 0.12$}\\
$(\poi_{-1}^{2})^{3}$ & $-136.16${\scriptsize$\pm 0.20$} & $-140.63${\scriptsize$\pm 0.32$} & $129.29${\scriptsize$\pm 0.34$} & $11.34${\scriptsize$\pm 0.03$}\\
$\poi_{-1}^{6}$ & $-135.86${\scriptsize$\pm 0.20$} & $-140.36${\scriptsize$\pm 0.19$} & $128.92${\scriptsize$\pm 0.23$} & $11.44${\scriptsize$\pm 0.16$}\\
\midrule
$(\sph^{2})^{3}$ & $-136.14${\scriptsize$\pm 0.27$} & $-140.68${\scriptsize$\pm 0.32$} & $128.98${\scriptsize$\pm 0.27$} & $11.70${\scriptsize$\pm 0.13$}\\
$\sph^{6}$ & $-136.20${\scriptsize$\pm 0.44$} & $-140.76${\scriptsize$\pm 0.45$} & $129.10${\scriptsize$\pm 0.37$} & $11.66${\scriptsize$\pm 0.13$}\\
$(\spr^{2})^{3}$ & $-136.13${\scriptsize$\pm 0.17$} & $-140.59${\scriptsize$\pm 0.15$} & $129.10${\scriptsize$\pm 0.20$} & $11.49${\scriptsize$\pm 0.12$}\\
$\spr^{6}$ & $-136.30${\scriptsize$\pm 0.08$} & $-140.74${\scriptsize$\pm 0.14$} & $129.35${\scriptsize$\pm 0.16$} & $11.39${\scriptsize$\pm 0.05$}\\
$(\hyp^{2})^{3}$ & $-136.17${\scriptsize$\pm 0.09$} & $-140.65${\scriptsize$\pm 0.17$} & $129.26${\scriptsize$\pm 0.07$} & $11.39${\scriptsize$\pm 0.16$}\\
$\hyp^{6}$ & $-136.24${\scriptsize$\pm 0.32$} & $-140.92${\scriptsize$\pm 0.33$} & $129.48${\scriptsize$\pm 0.27$} & $11.45${\scriptsize$\pm 0.12$}\\
$(\poi^{2})^{3}$ & $-136.09${\scriptsize$\pm 0.07$} & $-140.41${\scriptsize$\pm 0.08$} & $129.04${\scriptsize$\pm 0.05$} & $11.37${\scriptsize$\pm 0.08$}\\
$\poi^{6}$ & $-136.05${\scriptsize$\pm 0.44$} & $-140.42${\scriptsize$\pm 0.47$} & $129.04${\scriptsize$\pm 0.53$} & $11.38${\scriptsize$\pm 0.07$}\\
\midrule
$\spr^{2} \times \euc^{2} \times \poi^{2}$ & $-135.89${\scriptsize$\pm 0.40$} & $-140.28${\scriptsize$\pm 0.42$} & $128.75${\scriptsize$\pm 0.40$} & $11.53${\scriptsize$\pm 0.04$}\\
$\spr_{1}^{2} \times \euc^{2} \times \poi_{-1}^{2}$ & $-136.01${\scriptsize$\pm 0.31$} & $-140.52${\scriptsize$\pm 0.35$} & $129.02${\scriptsize$\pm 0.27$} & $11.50${\scriptsize$\pm 0.11$}\\
$\euc^{2} \times \hyp^{2} \times \sph^{2}$ & $-135.93${\scriptsize$\pm 0.48$} & $-140.51${\scriptsize$\pm 0.53$} & $128.85${\scriptsize$\pm 0.48$} & $11.66${\scriptsize$\pm 0.14$}\\
$\euc^{2} \times \hyp_{-1}^{2} \times \sph_{1}^{2}$ & $-136.34${\scriptsize$\pm 0.41$} & $-141.02${\scriptsize$\pm 0.46$} & $129.24${\scriptsize$\pm 0.47$} & $11.78${\scriptsize$\pm 0.10$}\\
\midrule
$(\uni^{2})^{3}$ & $-136.21${\scriptsize$\pm 0.07$} & $-140.65${\scriptsize$\pm 0.30$} & $129.14${\scriptsize$\pm 0.34$} & $11.52${\scriptsize$\pm 0.15$}\\
$\uni^{6}$ & $-136.04${\scriptsize$\pm 0.17$} & $-140.43${\scriptsize$\pm 0.14$} & $129.07${\scriptsize$\pm 0.27$} & $11.36${\scriptsize$\pm 0.13$}\\
    \bottomrule
  \end{tabular}
\end{table}

\begin{table}[!ht]
  \centering
  \caption{Summary of results (mean and standard-deviation) with latent space dimension of 72, diagonal covariance parametrization, on the Omniglot dataset.}
  \label{tab:omni_z72_diag}
  \begin{tabular}{l|rrrrr}
    \toprule
    \textbf{Model} & LL & ELBO & BCE & KL \\
\midrule
$(\sph_{1}^{2})^{36}$ & $-112.33${\scriptsize$\pm 0.14$} & $-118.94${\scriptsize$\pm 0.14$} & $91.04${\scriptsize$\pm 0.37$} & $27.90${\scriptsize$\pm 0.23$}\\
$(\spr_{1}^{2})^{36}$ & $-108.66${\scriptsize$\pm 0.24$} & $-116.06${\scriptsize$\pm 0.18$} & $85.95${\scriptsize$\pm 0.16$} & $30.11${\scriptsize$\pm 0.04$}\\
$(\euc^{2})^{36}$ & $-105.96${\scriptsize$\pm 0.33$} & $-112.41${\scriptsize$\pm 0.35$} & $79.80${\scriptsize$\pm 0.72$} & $32.61${\scriptsize$\pm 0.41$}\\
$\euc^{72}$ & $-105.89${\scriptsize$\pm 0.16$} & $-112.40${\scriptsize$\pm 0.17$} & $79.52${\scriptsize$\pm 0.19$} & $32.89${\scriptsize$\pm 0.20$}\\
$(\hyp_{-1}^{2})^{36}$ & $-112.22${\scriptsize$\pm 0.11$} & $-119.06${\scriptsize$\pm 0.15$} & $91.30${\scriptsize$\pm 0.47$} & $27.76${\scriptsize$\pm 0.35$}\\
$\hyp_{-1}^{72}$ & $-111.19${\scriptsize$\pm 0.42$} & $-120.49${\scriptsize$\pm 0.35$} & $91.11${\scriptsize$\pm 0.73$} & $29.38${\scriptsize$\pm 0.40$}\\
$(\poi_{-1}^{2})^{36}$ & $-109.05${\scriptsize$\pm 0.09$} & $-115.99${\scriptsize$\pm 0.10$} & $85.81${\scriptsize$\pm 0.42$} & $30.18${\scriptsize$\pm 0.34$}\\
$\poi_{-1}^{72}$ & $-111.24${\scriptsize$\pm 0.28$} & $-118.36${\scriptsize$\pm 0.24$} & $89.53${\scriptsize$\pm 0.38$} & $28.84${\scriptsize$\pm 0.18$}\\
\midrule
$\sph^{72}$ & $-109.39${\scriptsize$\pm 0.32$} & $-116.42${\scriptsize$\pm 0.32$} & $87.22${\scriptsize$\pm 0.58$} & $29.20${\scriptsize$\pm 0.28$}\\
$(\spr^{2})^{36}$ & $-108.89${\scriptsize$\pm 0.36$} & $-115.65${\scriptsize$\pm 0.45$} & $85.29${\scriptsize$\pm 0.74$} & $30.37${\scriptsize$\pm 0.30$}\\
$\spr^{72}$ & $-108.81${\scriptsize$\pm 0.08$} & $-115.71${\scriptsize$\pm 0.09$} & $85.68${\scriptsize$\pm 0.10$} & $30.03${\scriptsize$\pm 0.09$}\\
$(\hyp^{2})^{36}$ & $-112.21${\scriptsize$\pm 0.28$} & $-118.74${\scriptsize$\pm 0.30$} & $91.03${\scriptsize$\pm 0.76$} & $27.71${\scriptsize$\pm 0.47$}\\
$\hyp^{72}$ & $-108.62${\scriptsize$\pm 0.40$} & $-115.54${\scriptsize$\pm 0.30$} & $85.18${\scriptsize$\pm 0.62$} & $30.37${\scriptsize$\pm 0.34$}\\
$(\poi^{2})^{36}$ & $-108.78${\scriptsize$\pm 0.66$} & $-115.54${\scriptsize$\pm 0.70$} & $85.16${\scriptsize$\pm 1.38$} & $30.38${\scriptsize$\pm 0.69$}\\
$\poi^{72}$ & $-109.66${\scriptsize$\pm 0.61$} & $-116.50${\scriptsize$\pm 0.68$} & $87.09${\scriptsize$\pm 1.43$} & $29.42${\scriptsize$\pm 0.75$}\\
\midrule
$(\spr^{2})^{12} \times (\euc^{2})^{12} \times (\poi^{2})^{12}$ & $-107.02${\scriptsize$\pm 1.56$} & $-115.62${\scriptsize$\pm 1.76$} & $88.52${\scriptsize$\pm 8.24$} & $27.10${\scriptsize$\pm 6.48$}\\
$(\spr_{1}^{2})^{12} \times (\euc^{2})^{12} \times (\poi_{-1}^{2})^{12}$ & $-108.06${\scriptsize$\pm 0.47$} & $-114.92${\scriptsize$\pm 0.39$} & $83.95${\scriptsize$\pm 0.58$} & $30.97${\scriptsize$\pm 0.22$}\\
$(\euc^{2})^{12} \times (\hyp^{2})^{12} \times (\sph^{2})^{12}$ & $-114.85${\scriptsize$\pm 0.38$} & $-120.98${\scriptsize$\pm 0.15$} & $95.12${\scriptsize$\pm 0.49$} & $25.86${\scriptsize$\pm 0.40$}\\
$(\euc^{2})^{12} \times (\hyp_{-1}^{2})^{12} \times (\sph_{1}^{2})^{12}$ & $-110.28${\scriptsize$\pm 0.37$} & $-116.90${\scriptsize$\pm 0.42$} & $87.71${\scriptsize$\pm 0.82$} & $29.19${\scriptsize$\pm 0.41$}\\
\midrule
$(\uni^{2})^{36}$ & $-105.98${\scriptsize$\pm 0.05$} & $-112.70${\scriptsize$\pm 0.19$} & $79.85${\scriptsize$\pm 0.80$} & $32.85${\scriptsize$\pm 0.61$}\\
$\uni^{72}$ & $-106.58${\scriptsize$\pm 0.12$} & $-113.68${\scriptsize$\pm 0.11$} & $81.53${\scriptsize$\pm 0.34$} & $32.15${\scriptsize$\pm 0.36$}\\
    \bottomrule
  \end{tabular}
\end{table}

\begin{table}[!ht]
  \centering
  \caption{Summary of results (mean and standard-deviation) with latent space dimension of 6, diagonal covariance parametrization, on the CIFAR dataset. All $nan$ standard deviation values below indicate the repeated experiment was not stable enough to produce a meaningful estimate of spread.}
  \label{tab:cifar_z6_diag}
  \begin{tabular}{l|rrrrr}
    \toprule
    \textbf{Model} & LL & ELBO & BCE & KL \\
\midrule
$\sph_{1}^{6}$ & $-1893.16${\scriptsize$\pm nan$} & $-1901.32${\scriptsize$\pm nan$} & $1885.97${\scriptsize$\pm nan$} & $15.35${\scriptsize$\pm nan$}\\
$\spr_{1}^{6}$ & $-1891.69${\scriptsize$\pm nan$} & $-1897.77${\scriptsize$\pm nan$} & $1884.12${\scriptsize$\pm nan$} & $13.64${\scriptsize$\pm nan$}\\
$\euc^{6}$ & $-1896.19${\scriptsize$\pm 2.54$} & $-1905.75${\scriptsize$\pm 3.19$} & $1889.97${\scriptsize$\pm 2.88$} & $15.78${\scriptsize$\pm 0.32$}\\
$\hyp_{-1}^{6}$ & $-1888.23${\scriptsize$\pm 2.12$} & $-1896.56${\scriptsize$\pm 2.93$} & $1882.05${\scriptsize$\pm 2.65$} & $14.51${\scriptsize$\pm 0.34$}\\
$\poi_{-1}^{6}$ & $-1893.27${\scriptsize$\pm 0.61$} & $-1902.67${\scriptsize$\pm 0.74$} & $1887.44${\scriptsize$\pm 0.83$} & $15.23${\scriptsize$\pm 0.16$}\\
\midrule
$\spr^{6}$ & $-1893.85${\scriptsize$\pm 0.36$} & $-1902.67${\scriptsize$\pm 0.69$} & $1887.37${\scriptsize$\pm 0.74$} & $15.30${\scriptsize$\pm 0.08$}\\
$\sph^{6}$ & $-1889.76${\scriptsize$\pm 1.62$} & $-1897.31${\scriptsize$\pm 1.71$} & $1882.55${\scriptsize$\pm 1.48$} & $14.76${\scriptsize$\pm 0.24$}\\
$\poi^{6}$ & $-1891.40${\scriptsize$\pm 2.14$} & $-1899.68${\scriptsize$\pm 2.74$} & $1884.58${\scriptsize$\pm 2.56$} & $15.10${\scriptsize$\pm 0.18$}\\
\midrule
$\spr^{2} \times \euc^{2} \times \poi^{2}$ & $-1899.90${\scriptsize$\pm 4.60$} & $-1904.63${\scriptsize$\pm 1.46$} & $1889.13${\scriptsize$\pm 1.38$} & $15.50${\scriptsize$\pm 0.08$}\\
$\euc^{2} \times \hyp^{2} \times \sph^{2}$ & $-1895.46${\scriptsize$\pm 0.92$} & $-1897.57${\scriptsize$\pm 0.94$} & $1882.84${\scriptsize$\pm 0.70$} & $14.73${\scriptsize$\pm 0.24$}\\
\midrule
$(\uni^{2})^{3}$ & $-1895.09${\scriptsize$\pm 4.27$} & $-1904.46${\scriptsize$\pm 5.21$} & $1888.89${\scriptsize$\pm 4.71$} & $15.57${\scriptsize$\pm 0.51$}\\
\bottomrule
\end{tabular}
\end{table}

\end{document}